\documentclass[11pt]{article}

\usepackage[utf8]{inputenc}
\usepackage{amsmath,amsthm, amssymb, amsfonts}
\usepackage{todonotes}
\usepackage{tikz}
\usetikzlibrary{arrows}
\usepackage{wrapfig}
\usepackage{paralist}
\usepackage{verbatim}
\usepackage{fullpage}
\usepackage{relsize}
\usepackage{hyperref}
\usepackage{graphicx}
\usepackage{epstopdf}
\usepackage{wrapfig}
\usepackage{sistyle} % or siunitx

\newcommand{\Z}{\mathbb{Z}}
\newcommand{\N}{\mathbb{N}}

\newcommand{\R}{\mathbb{R}}

\newcommand{\Learn}{\mathrm{\mathbf{Learn}}}

\newcommand{\cC}{\mathcal{C}}

\newcommand{\cNN}{\mathcal{N}\hspace*{-0.035cm}\mathcal{N}}

\newcommand{\supp}{\mathrm{supp}}

%{\textarabic{ﻒ}}
%{\textarabic{ﻎ}}

 %% boundary wavelet system

\newcommand*{\pp}[1]{{\color{blue}#1}}

\newtheorem{theorem}{Theorem}[section]
\newtheorem{remark}[theorem]{Remark}
\newtheorem{definition}[theorem]{Definition}
\newtheorem{proposition}[theorem]{Proposition}
\newtheorem{lemma}[theorem]{Lemma}

\DeclareMathOperator{\suppp}{supp \,}

\title{Optimal Approximation with Sparsely Connected Deep Neural Networks}

\author{Helmut Bölcskei\footnotemark[1] \and  Philipp Grohs\footnotemark[2] \and  Gitta Kutyniok\footnotemark[3] \and Philipp Petersen\footnotemark[3]}

\begin{document}

\maketitle

\begin{abstract}
We derive fundamental lower bounds on the connectivity and the memory requirements of deep neural networks guaranteeing uniform approximation rates for arbitrary function classes in $L^2(\R^d)$. In other words, we establish a connection between the complexity of a function class and the complexity of deep neural networks approximating functions from this class to within a prescribed accuracy. Additionally, we prove that our lower bounds are achievable for a broad family of function classes. Specifically, all function classes that are optimally approximated by a general class of representation systems---so-called \emph{affine systems}---can be approximated by deep neural networks with minimal connectivity and memory requirements. Affine systems encompass a wealth of representation systems from applied harmonic analysis such as wavelets, ridgelets, curvelets, shearlets, $\alpha$-shearlets, and more generally $\alpha$-molecules. Our central result elucidates a remarkable universality property of neural networks 
and shows that they achieve the optimum approximation properties of all affine systems combined. As a specific example, we consider the class of $\alpha^{-1}$-cartoon-like functions, which is approximated optimally by $\alpha$-shearlets. We also explain how our results can be extended to the case of functions on low-dimensional immersed manifolds. Finally, we present numerical experiments demonstrating that the standard stochastic gradient descent algorithm generates deep neural networks providing close-to-optimal approximation rates.
Moreover, these results indicate that stochastic gradient descent can actually learn approximations that are sparse in the representation systems optimally sparsifying the function class the network is trained on.
\end{abstract}

\noindent {\bf Keywords.} Neural networks, function approximation, optimal sparse approximation, sparse
connectivity, wavelets, shearlets

\noindent {\bf AMS subject classification.} 41A25, 82C32, 42C40, 42C15, 41A46, 68T05, 94A34, 94A12

\renewcommand{\thefootnote}{\fnsymbol{footnote}}

\footnotetext[1]{Department of Information Technology and Electrical Engineering, ETH Zürich, 8092 Zürich,
Switzerland. \texttt{Email-Address: boelcskei@nari.ee.ethz.ch}}

\footnotetext[2]{Faculty of Mathematics, University of Vienna, 1090 Vienna, Austria, and Research Platform DataScience@UniVienna, University of Vienna, 1090 Vienna, Austria. \texttt{Email-Address: philipp.grohs@univie.ac.at}}

\footnotetext[3]{Institut f\"ur Mathematik, Technische Universit\"at Berlin, 10623 Berlin, Germany. \texttt{Email-Addresses: $\{$kutyniok,petersen$\}$@math.tu-berlin.de}}

%------------------------------------------------------------------------------------------------------------------------------
\section{Introduction}
%------------------------------------------------------------------------------------------------------------------------------

Neural networks arose from the seminal work by McCulloch and Pitts \cite{MP43} in 1943 which, inspired by the functionality of the human brain, introduced an algorithmic approach to learning with the aim of building a theory of artificial
intelligence. Roughly speaking, a neural network consists of neurons arranged in layers and connected by weighted edges; in
mathematical terms this boils down to a concatenation of affine linear functions and relatively simple non-linearities.

Despite significant theoretical progress in the 1990s \cite{Cybenko1989, Hornik1991251}, the area has seen practical progress only during the past decade, triggered by the drastic improvements in computing power and the availability of vast amounts of training data. Deep neural networks, i.e., networks with large numbers of layers, are now state-of-the-art technology for a wide variety of applications, such as
image classification \cite{Krizhevsky2012Imagenet}, speech recognition \cite{Hint2012acoustic}, or game intelligence
\cite{David2016Go}. For an in-depth overview, we refer to the survey paper by LeCun, Bengio, and Hinton
\cite{LeCun2015DeepLearning} and the recent book \cite{Goodfellow-et-al-2016}.

A neural network effectively implements a non-linear mapping and can be used to either perform classification directly or to extract features that are then fed into a classifier, such as a support vector machine \cite{steinwart2008support}. In the former case, the primary goal  is to approximate an unknown classification function based on a given set of input-output value pairs.
This is typically accomplished by learning the network's weights through, e.g., the stochastic gradient descent (via backpropagation) algorithm \cite{Rumelhart1988Backpropagation}. In a classification
task with, say, two classes, the function to be learned would take only two values, whereas in the
case of, e.g., the prediction of the temperature in a certain environment, it would be real-valued.
It is therefore clear that characterizing to what extent (deep) neural networks are capable of approximating general functions is a question of significant practical relevance.

Neural networks employed in practice often consist of hundreds of layers and may depend on billions of parameters, see for example the work \cite{he2016deep} on image classification. Training and operation of networks of this scale entail formidable computational challenges.
As a case in point, we mention speech recognition on a smartphone such as, e.g., Apple's SIRI-system, which operates in the cloud. Android's speech recognition system has meanwhile released an offline version based on a neural network with sparse connectivity.
%meaning that the number of edges with nonzero weights is small.
The desire to reduce the complexity of network training and operation naturally leads to the question of the fundamental limits on function approximation through neural networks with sparse connectivity. In addition, the network's memory requirements in terms of the number of bits needed to store its topology and weights are of concern in practice.

The purpose of this paper is to understand the connectivity and memory requirements of (deep) neural networks induced by demands on their
approximation-theoretic properties.
%properties of deep neural networks under connectivity and memory constraints.
Specifically, defining the complexity of a function class $\mathcal{C}$ as the rate of growth of the minimum number of bits needed to describe any element in $\mathcal{C}$ to within a maximum allowed error approaching zero, we shall be interested in the following question:
%connectivity and memory requirements on networks guaranteeing the same approximation accuracy.
%This will be accomplished by
%such that any element in $\mathcal{C}$ can be encoded and decoding up to a prescribed accuracy is possible. Now we can ask: 
Depending on the complexity of $\mathcal{C}$, what are the connectivity and memory requirements 
of a deep neural network approximating every element in $\mathcal{C}$ to within an error of $\varepsilon$? 
%prescribed accuracy %by a network satisfying these requirements?
We address this question by interpreting the network as an encoder in Donoho's min-max rate distortion theory \cite{DONOHO1993100}
%and establishing associated fundamental lower bounds on connectivity and memory requirements.
%for a decoder resulting in an approximation error of no more than $\varepsilon$ to exist.
% for the network to guarantee the same approximation accuracy. 
% uniform approximation rates for a given function class $\mathcal{C}$. 
%In other words, for given accuracy $\varepsilon>0$ we establish the required complexity such that for all $f\in \cC$ there exists a network of the given complexity approximating $f$ up to %an error of $\varepsilon$. 
and establishing rate-distortion optimality for 
%Moreover, we demonstrate that these bounds are saturated by 
a broad family of function classes $\mathcal{C}$, namely those classes for which so-called affine systems---a general class of representation systems---yield optimal approximation rates in the sense of non-linear approximation theory \cite{DeVore1998nonlinear}.  Affine systems encompass a wealth of representation systems from applied harmonic analysis such as wavelets \cite{Dau92}, ridgelets \cite{CandesDiss}, curvelets \cite{CD02}, shearlets \cite{GKL06}, $\alpha$-shearlets and more generally $\alpha$-molecules \cite{GroKKS2016alphaMolecules}. 
Our result therefore uncovers an interesting universality property of deep neural networks; they exhibit the optimal approximation properties of all affine systems combined. The technique we develop to prove our main statements is interesting in its own right as it constitutes a more general framework for transferring results on function
approximation through representation systems to results on approximation by deep neural networks.

%------------------------------------------------------------------------------------------------------------------------------
\subsection{Deep Neural Networks}
%------------------------------------------------------------------------------------------------------------------------------

While various network architectures exist in the literature, we focus on the following setup.

\begin{definition}\label{def:NN}
Let $L, d, N_1, \ldots, N_{L}\in \N$ with $L\,\ge\,2$. A map $\Phi: \R^d \to \R^{N_L}$ given by
\begin{equation}\label{eq:NNdef}
\Phi(x) = W_L\rho \, ( W_{L-1} \rho\, ( \dots \rho \, ( W_{1}(x)))), \quad \text{ for }x\in \R^d,
\end{equation}
with affine linear maps $W_{\ell}: \R^{N_{\ell-1}} \to \R^{N_\ell}$, $1 \leq \ell \leq L$, and the non-linear {\em activation function} $\rho$
%---often referred to as {\em activation function}---
acting component-wise, is called a \emph{neural network}. 
%This network is composed of affine linear maps $W_{\ell}: \R^{N_{\ell-1}} \to \R^{N_\ell}$, $1 \leq \ell \leq L$, and the non-linear function $\rho$---often referred to as {\em activation %function}---acting component-wise.
Here, $N_0 := d$ is the \emph{dimension of the $0$-th layer referred to as the input layer}, $L$ denotes the \emph{number of layers} (not counting the input layer), $N_1, \ldots, N_{L-1}$ stands for the \emph{dimensions of the $L-1$ hidden layers}, and $N_L$ is the \emph{dimension of the output layer}.  The affine linear map $W_{\ell}$ is defined via $W_{\ell}(x)=A_\ell x + b_\ell$ with $A_{\ell}\in \mathbb{R}^{N_{\ell}\times N_{\ell-1}}$ and the affine part $b_\ell\in \mathbb{R}^{N_\ell}$. $(A_\ell)_{i,j}$ represents the \emph{weight associated with the edge between
the $j$-th node in the $(\ell-1)$-th layer and the $i$-th node in the $\ell$-th layer}, while  $(b_\ell)_i$ is \emph{the weight associated with the $i$-th node in the $\ell$-th layer}. These assignments are schematized in Figure \ref{fig:Weights}.  The total number of nodes is given by $\mathcal{N}(\Phi) := d + \sum_{\ell=1}^L N_{\ell}$.
The real numbers $(A_\ell)_{i,j}$ and $(b_\ell)_i$ are said to be the network's edge weights and node weights, respectively, and the total number of nonzero edge weights, denoted by 
$\mathcal{M}(\Phi)$, is the network's connectivity. 
%As $L\,\ge\,2$, the connectivity necessarily satisfies $M\,\ge\,2$.
\end{definition}
The term ``network'' stems from the interpretation of the mapping $\Phi$ as a weighted acyclic directed graph with nodes arranged in $L+1$ hierarchical layers and
edges only between adjacent layers. If the network's connectivity $\mathcal{M}(\Phi)$ is small relative to the number of connections  possible (i.e., the number of edges in the graph that is fully connected between adjacent layers), we say that the network is {\it sparsely connected}\/.

\begin{figure}[htb]
\flushleft
\hspace{3cm}
  \includegraphics[width = 0.3\textwidth]{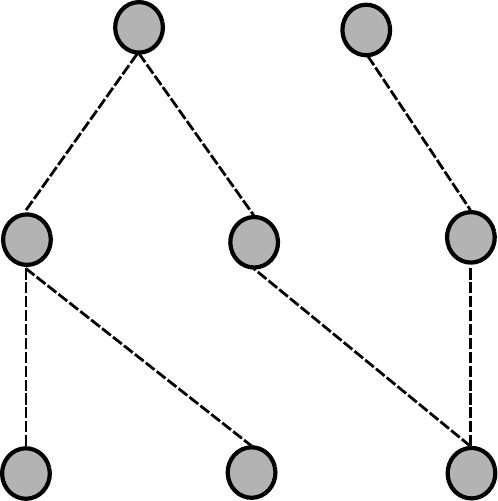}
 \tiny
 \put(-93,132){$(b_2)_1$}
 \put(-28,132){$(b_2)_2$}
 \put(-123,72){$(b_1)_1$}
 \put(-60,72){$(b_1)_2$}
 \put(1,72){$(b_1)_3$}
 \put(-117,100){$(A_2)_{1,1}$}
 \put(-78,100){$(A_2)_{1,2}$}
 \put(-15,100){$(A_2)_{2,3}$}
 \put(-130,30){$(A_1)_{1,1}$}
 \put(-3,44){$(A_1)_{3,3}$}
 \put(-38,44){$(A_1)_{2,3}$}
 \put(-98,44){$(A_1)_{1,2}$}
 \put(-130,30){$(A_1)_{1,1}$}
 \normalsize
  \put(70, 100){$A_2 = \left(\begin{array}{c c c}
                     (A_2)_{1,1} & (A_2)_{1,2} & 0\\
                     0 & 0 & (A_2)_{2,3}\\
                    \end{array}\right)
$}
 \put(70, 35){$A_1 = \left(\begin{array}{c c c}
                     (A_1)_{1,1} & (A_1)_{1,2} & 0\\
                     0 & 0 & (A_1)_{2,3}\\
                     0 & 0 & (A_1)_{3,3}\\
                    \end{array}\right)
$}
\put(-220, 132){Output layer}
\put(-220, 72){Hidden layer \quad $\rho$}
\put(-220, 5){Input layer}

 \caption{Assignment of the weights $(A_\ell)_{i,j}$ and $(b_\ell)_{i}$ of a two-layer network to the edges and nodes, respectively. }
 \label{fig:Weights}
\end{figure}

Throughout the paper, we consider the case $\Phi: \R^d \to \R$, i.e., $N_L = 1$, which includes situations
such as the classification and temperature prediction problem described above. We emphasize, however, that the general results of Sections \ref{sec:lowerbound}, \ref{sec:bestapprox}, and \ref{sec:optimalapprox} are readily generalized to $N_L >1$.

We denote the class of networks $\Phi: \R^d \to \R$ with exactly $L$ layers, connectivity no more than $M$, and activation function $\rho$ by $\cNN_{L, M, d, \rho}$ with the understanding that for $L=1$, the set $\cNN_{L, M, d, \rho}$ is empty.
Moreover, we let
\begin{align*}
 \cNN_{\infty, M, d, \rho} := \bigcup_{L\in \N} \cNN_{L, M, d, \rho}, \quad \cNN_{L, \infty, d, \rho} := \bigcup_{M\in \N} \cNN_{L, M, d, \rho}, \quad  \cNN_{\infty, \infty, d, \rho} := \bigcup_{L\in \N} \cNN_{L, \infty, d, \rho}.
\end{align*}

Now, given a function $f: \R^d \to \R$, we are interested in the theoretically best possible approximation of $f$ by a network $\Phi \in \cNN_{\infty, M, d, \rho}$.
Specifically, we will want to know how the approximation quality depends on the connectivity $M$ and what the associated
%what the associated minimum connectivity $M$ and minimum 
number of bits needed to store the network topology and the corresponding quantized weights is. Clearly, smaller $M$ entails lower computational complexity in terms of evaluating (\ref{eq:NNdef}) and a smaller number of bits translates to reduced memory requirements for storing the network. Such a result benchmarks all conceivable algorithms for learning the network topology and weights.
%employed to learn the network weights}.
%On the quantitative level, we shall be interested in the dependence of this lower bound on the connectivity $M$ and on the number of bits available to encode the network topology %and the quantized weights.

%------------------------------------------------------------------------------------------------------------------------------
\subsection{Quantifying Approximation Quality}\label{subsec:NtermApprox}
%------------------------------------------------------------------------------------------------------------------------------

We proceed to formalizing our problem statement and start with a brief review of a widely used framework in approximation theory \cite{DL93,DeVore1998nonlinear}.
%for the characterization of the approximation quality of functions under restricting conditions on the approximant.

Fix $\Omega\subset \mathbb{R}^d$. Let  $\cC$ be a compact set of functions in $L^2(\Omega)$, henceforth referred to as function class, and consider a corresponding 
system $\mathcal{D}:=(\varphi_i)_{i \in I} \subset L^2(\Omega)$ with $I$ countable, termed \emph{representation system}. We study the {\em error of best $M$-term approximation} of $f \in \cC$ in $\mathcal{D}$:

\begin{definition}\cite{DL93} \label{def:optimalApproximationRate}
Given $d\in \N$, $\Omega \subset \R^d$, a function class $\cC \subset L^2(\Omega)$, and a representation system $\mathcal{D} = (\varphi_i)_{i \in I} \subset L^2(\Omega)$, we define, for $f \in \cC$ and $M\in \N$,
\begin{align} \label{eq:GammaMDictDef}
\Gamma_M^\mathcal{D}(f) := \inf_{\substack{I_M \subseteq I,\\ \#I_M = M, (c_i)_{i \in I_M}}} \left\|f - \sum_{i \in I_M} c_i \varphi_i\right\|_{L^2(\Omega)}.
\end{align}
We call $\Gamma_M^\mathcal{D}(f)$ the {\em best $M$-term approximation error of $f$ in $\mathcal{D}$}.
Every $f_M = \sum_{i \in I_M} c_i \varphi_i$ attaining the infimum in \eqref{eq:GammaMDictDef} is referred to as a {\em best $M$-term approximation} of $f$ in $\mathcal{D}$.
The supremal $\gamma > 0$ such that 
%there exists $C>0$ with
\[
\sup_{f \in \cC}\Gamma_M^\mathcal{D}(f) \in \mathcal{O}(M^{-\gamma}), \,\, M \rightarrow \infty,
%\qquad \mbox{ for all } M \in \N,
\]
will be denoted by $\gamma^\ast(\mathcal{C},\mathcal{D})$. We say that the  {\em best $M$-term approximation rate of $\cC$ in the representation system $\mathcal{D}$} is $\gamma^\ast(\mathcal{C},\mathcal{D})$.
\end{definition}

Function classes $\mathcal{C}$ widely studied in the approximation theory literature include unit balls in Lebesgue, Sobolev, or Besov spaces \cite{DeVore1998nonlinear}, as well as $\alpha$-cartoon-like functions \cite{GroKKS2016alphaMolecules}. A wealth of structured representation systems $\mathcal{D}$ is provided by the area of applied harmonic analysis, starting with wavelets \cite{Dau92}, followed by ridgelets \cite{CandesDiss}, curvelets \cite{CD02}, shearlets \cite{GKL06}, parabolic molecules \cite{GK14},
and most generally $\alpha$-molecules \cite{GroKKS2016alphaMolecules}, which include all previously named systems as special cases. Further examples are Gabor frames \cite{grochenig2013foundations} and wave atoms \cite{demanet2007wave}. 

%------------------------------------------------------------------------------------------------------------------------------
\subsection{Approximation by Deep Neural Networks}\label{subsec:NNapproxintro}
%------------------------------------------------------------------------------------------------------------------------------

The main conceptual contribution of this paper is the development of an approximation-theoretic framework for deep neural networks in the spirit of \cite{DL93}.
Specifically, we shall substitute the concept of best $M$-term approximation with representation systems by best $M$-edge approximation through neural networks.
In other words, parsimony in terms of the number of participating elements of a representation system is replaced by parsimony in terms of connectivity. More formally, we consider the following setup.

\begin{definition}\label{def:optimalApproximationRateNN}
Given $d\in \N$, $\Omega \subset \R^d$, a function class $\cC \subset L^2(\Omega)$, and an activation function $\rho: \R \to \R$, we define, for $f \in \cC$ and $M\in \N$,
\begin{align} \label{eq:GammaMDef}
\Gamma_M^{\cNN}(f) := \inf_{\Phi \in \cNN_{\infty, M, d, \rho}} \|f - \Phi \|_{L^2(\Omega)}.
\end{align}
We call $\Gamma_{M}^{\cNN}(f)$ the \emph{best $M$-edge approximation error of $f$}.
The supremal $\gamma > 0$ such that 
%there exists $C>0$ with
\[
\sup_{f \in \cC} \Gamma_{M}^{\cNN}(f) \in \mathcal{O}(M^{-\gamma}), \,\, M \rightarrow \infty,
%\qquad \mbox{for all } M \in \N,
\]
 will be denoted by $\gamma_{\cNN}^\ast(\cC, \rho)$. We say that the {\em best $M$-edge approximation rate of $\cC$ by neural networks with activation function $\rho$} is 
 $\gamma_{\cNN}^\ast(\cC, \rho)$.
\end{definition}

We emphasize that the infimum in \eqref{eq:GammaMDef} is taken over all networks with fixed activation function $\rho$, fixed input dimension $d$, no more than $M$ edges of nonzero weight, and arbitrary number of layers $L$. In particular, this means that the infimum is taken over all possible network topologies.
The resulting best $M$-edge approximation rate is fundamental as it benchmarks all learning algorithms, i.e., all algorithms that  
map an input function $f$ and an $\varepsilon>0$ to a neural network that approximates $f$ with error no more than $\varepsilon$.
Our framework hence provides a means for assessing the performance of a given learning algorithm in the sense of allowing to measure how close the $M$-edge approximation rate induced by the algorithm is to the best $M$-edge approximation rate $\gamma_{\cNN}^\ast(\cC, \rho)$.

%------------------------------------------------------------------------------------------------------------------------------
\subsection{Previous Work}
%------------------------------------------------------------------------------------------------------------------------------

The best-known results on approximation by neural networks are the universal approximation theorems of Hornik \cite{Hornik1991251} and Cybenko \cite{Cybenko1989}, stating that every measurable function $f$ can be approximated arbitrarily well by a single-hidden-layer ($L=2$ in our terminology) neural network. The literature on approximation-theoretic properties of networks with a single hidden layer continuing this line of work is abundant. Without any claim to completeness, we mention work on approximation error bounds in terms of the number of neurons for functions with bounded first moments
\cite{Barron1993}, \cite{Barron1994}, the non-existence of localized approximations
\cite{ChuXM1994networksforlocApprox}, a fundamental lower bound on approximation rates \cite{DeVore1997approxfeedforward, CandesDiss}, and the approximation of smooth or analytic functions \cite{Mhaskar1996NNapprox,Mhaskar1995151}.

Approximation-theoretic results for networks with multiple hidden layers were obtained in \cite{Hornik1989universalApprox, Mhaskar1993} for general functions, in \cite{Funahashi1989183} for continuous functions, and for functions together with their derivatives in \cite{NguyenThien1999687}.
In \cite{ChuXM1994networksforlocApprox} it was shown that for certain approximation tasks deep networks can perform fundamentally better than single-hidden-layer networks.
We also highlight two recent papers, which investigate the benefit---from
an approximation-theoretic perspective---of multiple hidden layers.
Specifically, in \cite{Eldan2016PowerofDepth} it was shown that there exists a function which, although expressible through a small three-layer network,
can only be represented through a very large two-layer network; here size is measured in terms of the total number of neurons in the network. In the setting
of deep convolutional neural networks, first results of a nature similar to those in \cite{Eldan2016PowerofDepth} were reported in \cite{Mhaskar2016DeepVSShallow}. Additionally, by linking the expressivity properties of neural networks to tensor decompositions, \cite{cohen2016expressive, cohen2016convolutional} established the existence of functions that can be realized by relatively small deep convolutional networks but require exponentially larger shallow networks. For survey articles on approximation-theoretic aspects of neural networks, we refer the interested reader to \cite{ellacott1994aspects,pinkus1999approximation}.

Most closely related to our work is that by Shaham, Cloninger, and Coifman \cite{ShaCC2015provableAppDNN},
which shows that for functions that are sparse in specific wavelet frames, the best $M$-edge approximation rate of three-layer neural networks is at least as high as the best $M$-term approximation rate in piecewise linear wavelet frames.

%------------------------------------------------------------------------------------------------------------------------------
\subsection{Contributions}\label{subsec:contrext}
%------------------------------------------------------------------------------------------------------------------------------

Our contributions can be grouped into four threads.
\begin{itemize}
\item {\em Fundamental lower bound on connectivity.} 
%Let $d\in \N$, $\Omega\subset \R^d$,  $\rho :\R \to \R$, and $\cC \subset L^2(\Omega)$. We establish a lower bound on the best $M$-edge approximation rate of $\cC$ by neural %networks in terms of the description complexity of $\cC$. 
%{\bf PG: We do not give an explicit proof of the following result but i think it is ok like this.}
%Denoting the \emph{optimal exponent} with respect to the minimax code length of $\cC$ as defined in \cite{DONOHO1993100,grohs2015optimally} (see Definition~\ref{def:optexp} %below) by $\gamma^*(\cC)$, we demonstrate in Theorem~\ref{thm:optimality} and Corollary \ref{thm:EffRepNN} that under minor regularity assumptions on $\rho$, for all $\gamma > %\gamma^*(\cC)$, there exists a $C_\gamma>0$ such that
%\begin{align} \label{eq:fundbound}
% \sup_{f \in \cC}  \ \inf_{\Phi \in \widetilde{\cNN}_{\infty, M, d, \rho}} \|f - \Phi\|_{L^2(\Omega)} \geq C_\gamma M^{-\gamma}, \text{ for all }M \in \N,
%\end{align}
%where $\widetilde{\cNN}_{\infty, M, d, \rho}$ denotes all networks $\Phi \in \cNN_{\infty, M, d, \rho}$ whose weights can be encoded $O(\log_2(M))$ bits, each. {\bf PG: this statement %is confusing. every single number can be encoded with 1 bit.}
We quantify the minimum network connectivity needed to allow approximation of \emph{all} elements of a given function class $\cC$ to within a maximum allowed error. On a conceptual level, this result establishes a universal link between the complexity of a given function class and the connectivity required by corresponding approximating
neural networks.

\vspace*{1mm}
\item {\em Transfer from $M$-term to $M$-edge approximation.}
We develop a general framework for transferring best $M$-term approximation results in representation systems to best $M$-edge approximation results for neural networks.
%These transfer results hold for representation systems $\mathcal{D}$ that are \emph{representable by  neural networks} in the following sense: There exist $L, R\in \mathbb{N}$ such %that for every element $\varphi_i\in \mathcal{D}$ of the representation system $\mathcal{D}$ and every $\varepsilon>0$, there is a neural network $\Phi_{i,\varepsilon}$ with $L$ layers and %connectivity at most $R$ such that $\|\varphi_i-\Phi_i\|\le\varepsilon$. 
%If a representation system $\mathcal{D}$ is representable by neural networks, we will demonstrate that for all $\gamma < \gamma^*(\cC, \mathcal{D})$  there exists a 
%$C_\gamma>0$ such that
%\begin{align} \label{eq:upperboundNEW}
% \sup_{f \in \cC}  \ \inf_{\Phi \in {\cNN}_{\infty, M, d, \rho}} \|f - \Phi\|_{L^2(\Omega)} \leq C_\gamma M^{-\gamma}, \text{ for all }M \in \N.
%\end{align}
%Consequently, we have that for all representation systems $\mathcal{D}$ and function classes $\cC$ such that $\gamma^*(\cC, \mathcal{D}) = \gamma^*(\cC)$ the upper bound %\eqref{eq:upperboundNEW} matches the lower bound of \eqref{eq:fundbound}, however, by dropping the assumption of encodeable weights.

\vspace*{1mm}
\item {\em Memory requirements}.
We characterize the memory requirements needed to store the topology and the quantized weights of optimally-approximating neural networks.

\vspace*{1mm}
\item {\em Realizability of optimal approximation rates.} An important practical question is how neural networks trained by stochastic gradient descent (via backpropagation) \cite{Rumelhart1988Backpropagation} perform relative to the fundamental bounds established in the paper. 
Interestingly, our numerical experiments indicate that stochastic gradient descent can achieve 
$M$-edge approximation rates quite close to the fundamental limit.
%Interestingly, our numerical experiments indicate that, given a fixed network topology with sparse connectivity motivated by the constructions of affine systems, the stochastic gradient %descent algorithm yields close-to-optimal approximation rates. Moreover, training a network to approximate $\alpha^{-1}$-cartoon-like functions, we observe that stochastic gradient %descent generates neural networks which mimic the classical best $M$-term approximation of such functions in a representation system of $\alpha$-molecules.
\end{itemize}

%------------------------------------------------------------------------------------------------------------------------------
\subsection{Outline of the Paper}
%------------------------------------------------------------------------------------------------------------------------------
Section \ref{sec:effectiveApprox} introduces the novel concept of effective best $M$-edge approximation.
The fundamental lower bound on connectivity is developed in Section \ref{sec:lowerbound}. Section
\ref{sec:bestapprox} describes a general framework for transferring best $M$-term approximation results in representation systems to best $M$-edge approximation results for neural networks. In Section \ref{sec:optimalapprox}, we apply this transfer framework to the broad class of affine representation systems, and
Section \ref{sec:alphacart} shows that this leads to optimal $M$-edge approximation rates for cartoon functions. In Section \ref{sec:manifold}, we briefly outline 
the extension of our main findings to the approximation of functions defined on manifolds. Finally, numerical results assessing the performance of stochastic gradient descent (via backpropagation) relative to our lower bound on connectivity are reported in Section \ref{sec:numerics}.

%------------------------------------------------------------------------------------------------------------------------------
\section{Effective Best $M$-term and Best $M$-edge Approximation}\label{sec:effectiveApprox}
%------------------------------------------------------------------------------------------------------------------------------

We proceed by introducing $M$-term approximation via dictionaries and $M$-edge approximation via neural networks. These concepts do, however, not allow for a meaningful notion of optimality in practice. A remedy is provided by effective best $M$-term approximation according to \cite{DONOHO1993100,grohs2015optimally} and the new concept of effective best $M$-edge approximation introduced below.

\subsection{Effective Best $M$-term Approximation}

The best $M$-term approximation rate $\gamma^\ast(\mathcal{C},\mathcal{D})$ according to Definition~\ref{def:optimalApproximationRate} measures the hardness of approximation of a given function class $\mathcal{C}$ by a fixed representation system $\mathcal{D}$. It is sensible to ask
%{\bf PG: I deleted one paragraph which i found confusing}
whether for a given function class $\cC$, there is a fundamental limit on $\gamma^\ast(\mathcal{C},\mathcal{D})$ when one is allowed to vary over $\mathcal{D}$.
%its best $M$-term approximability. Specifically, this limit would be obtained by identifying the largest 
%$\gamma^\ast(\mathcal{C},\mathcal{D})$.
%In this regard, it is conceivable that the optimal approximation rate for $\cC$ in any representation system reflects specific properties of $\cC$. Finally, if an optimal rate exists, then %one can assess the suitability of given representation systems for approximating $\cC$ by comparing the provided approximation rate with the optimal rate.
As shown in \cite{DONOHO1993100,grohs2015optimally}, every dense (and countable) $\mathcal{D} \subset L^2(\Omega)$, $\Omega \subset \R^d$, results in $\gamma^\ast(\mathcal{C},\mathcal{D}) = \infty$ for all function classes $\mathcal{C} \subset L^2(\Omega)$.
%, simply because every $f\in L^2(\mathbb{R}^d)$ can be represented exactly as a $1$-term approximation in $\mathcal{C}$. 
However, identifying the elements in $\mathcal{D}$ participating in the best $M$-term approximation is infeasible as it entails searching through the infinite set $\mathcal{D}$ and requires, in general, an infinite number of bits to describe the indices of the participating elements.
%In summary, we can conclude that
This insight leads to the concept of ``best $M$-term approximation subject to polynomial-depth search''  as introduced by Donoho in \cite{DONOHO1993100}.
%and further developed by Grohs in \cite{grohs2015optimally}. 

\begin{definition}\label{def:polydepth}

Given $d\in \N$, $\Omega \subset \R^d$, a function class $\cC \subset L^2(\Omega)$, and a representation system $\mathcal{D} = (\varphi_i)_{i \in I} \subset L^2(\Omega)$, 
the supremal $\gamma > 0$ so that there exist a polynomial $\pi$ and a constant $D>0$ such that
%
%we define, for $f \in \cC$ and $M\in \N$,
\begin{align} \label{eq:GammaMDictDefeff}
%\Gamma_M^\mathcal{D,\text{eff}}(f) := 
\sup_{f \in \mathcal{C}} \inf_{\substack{I_M\subset \{1,\dots, \pi(M)\}, \\ \#I_M = M, \, (c_i)_{i \in I_M}, \, \max_{i\in I_M} \! |c_i | \, \le \, D}} \left\|f - \sum_{i \in I_M} c_i \varphi_i\right\|_{L^2(\Omega)} \in \mathcal{O}(M^{-\gamma}), \,\, M \rightarrow \infty,
\end{align}
will be denoted by $\gamma^{\ast,\text{eff}}(\mathcal{C},\mathcal{D})$ and
referred to as 
%We call $\Gamma_M^\mathcal{D,\text{eff}}(f)$ the {\em best effective $M$-term approximation error of $f$ in $\mathcal{D}$}.
%Every $f_M = \sum_{i \in I_M} c_i \varphi_i$ attaining the infimum in \eqref{eq:GammaMDictDef} is referred to as a {\em best $M$-term approximation} of $f$ in $\mathcal{D}$.
%The supremal $\gamma > 0$ such that 
%there exists $C>0$ with
%\[
%\sup_{f \in \cC}\Gamma_M^\mathcal{D,\text{eff}}(f) =\mathcal{O}(M^{-\gamma}), 
%\qquad \mbox{ for all } M \in \N,
%\]
%will be referred to as 
{\em effective best $M$-term approximation rate of $\cC$ in the representation system $\mathcal{D}$}.
% and denoted by
%$\gamma^{\ast,\text{eff}}(\mathcal{C},\mathcal{D})$. 
%We say that the {\em best effective $M$-term approximation rate of $\cC$ in the representation system $\mathcal{D}$} is $\gamma^{\ast,\text{eff}}(\mathcal{C},\mathcal{D})$.
\end{definition}

%    Let $d\in \N$, $\Omega\subset \mathbb{R}^d$. Consider the function class $\mathcal{C}\subset L^2(\Omega)$ and the representation system $\mathcal{D}=(\varphi_i)_{i\in 
%\mathbb{N}}\subset L^2(\Omega)$. The supremal $\gamma > 0$ so that there exist
%a constant $D>0$, index sets $I_M\subset \{1,\dots, \pi(M)\}$  with $\# I_M=M$, and coefficients $(c_i)_{i\in I_M}$
%satisfying $\max_{i\in I_M}|c_i|\le D$, such that
%    
 %    a univariate polynomial $\pi$ and constants $C,D>0$ such that for all $f\in \mathcal{C}$,  
   % \begin{equation} \label{eq:polydepth}
    %\left\|f    -\sum_{i\in I_M}c_i\varphi_i\right\|_{L^2(\Omega)}  = \mathcal{O}(M^{-\gamma})
    %, \qquad \mbox{ for all } M \in \N,
    % \end{equation}
%    for some index set $I_M\subset \{1,\dots, \pi(M)\}$  with $\# I_M=M$ and coefficients $(c_i)_{i\in I_M}$
%    satisfying $\max_{i\in I_M}|c_i|\le D$.
   % will be referred to as ${\gamma^{\ast,\text{eff}}(\cC,\mathcal{D})}$. We say that the {\em effective best $M$-term approximation rate of}\/ $\mathcal{C}$ in $\mathcal{D}$ is ${\gamma^{\ast,\text{eff}}(\cC,\mathcal{D})}$.
%    
%    $\gamma^{\ast,\text{eff}}(\cC,\mathcal{D})$ and the \emph{optimal effective best $M$-term approximation rate of $\cC$ in $\mathcal{D}$} is ${\gamma^{\ast,\text{eff}}(\cC,
%\mathcal{D})}$.
%\end{definition}

We will demonstrate in Section \ref{subsec:repcode} that $\sup_{\mathcal{D} \subset L^2(\Omega)}\gamma^{\ast,\text{eff}}(\cC,\mathcal{D})$ is, indeed, finite under quite general conditions on $\cC$ and, in particular, depends on the ``description complexity" of $\cC$. This will allow us to assess the approximation 
capacity of a given representation system $\mathcal{D}$ for $\cC$ by comparing $\gamma^{\ast,\text{eff}}(\cC,\mathcal{D})$ to the ultimate limit $\sup_{\mathcal{D} \subset L^2(\Omega)}\gamma^{\ast,\text{eff}}(\cC,\mathcal{D})$.

%Hence, $\sup_{\mathcal{D} \subset L^2(\R^d)}\gamma^{\ast,\text{eff}}(\cC,\mathcal{D})$ provides a measure for the complexity of $\cC$. Moreover, we can now assess the practical 

\subsection{Effective Best $M$-edge Approximation}

We next aim at establishing a relationship in the spirit of effective best $M$-term approximation for approximation through deep neural networks.
%Specifically, we shall 
%understanding if there exists a relationship between a function class $\mathcal{C}$ and $\gamma_{\cNN}^\ast(\cC, \rho)$. 
To this end, we first note that Definition~\ref{def:optimalApproximationRateNN} encounters problems similar to those identified for 
approximation by representation systems, namely the quantity $\sup_{\rho:\R \to \R} \gamma^\ast_{\cNN} (\cC, \rho)$ does not reveal anything 
tangible about the approximation complexity of $\cC$ in deep neural networks, unless further constraints are imposed 
on the approximating network. To make this point, we first review the following remarkable result:

\begin{theorem}{\cite[Theorem 4]{maiorov1999lower}}\label{pinkusthm}
	There exists a function $\rho:\R\to \R$ that is $C^\infty$, strictly increasing, and satisfies $\lim_{x\to \infty}\rho(x)=1$ and $\lim_{x\to -\infty}\rho(x)=0$, such that for any $d\in \mathbb{N}$, any $f\in C([0,1]^d)$, and any $\varepsilon >0$, there is a neural network $\Phi$ with activation function $\rho$ and three layers, of dimensions $N_1=3d, N_2=6d+3$, and $N_3 = 1$,
	satisfying
	\begin{equation}
	\label{eq:medge}
		\sup_{x\in [0,1]^d}\left|f(x)-\Phi(x)\right|\le \varepsilon. 
	\end{equation}
\end{theorem}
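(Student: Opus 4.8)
The plan is to construct the activation function $\rho$ explicitly by encoding a countable dense family of target functions into its behavior on disjoint intervals, exploiting the fact that a single neuron computing $x\mapsto\rho(a\cdot x+b)$ with a cleverly designed $\rho$ can ``look up'' an arbitrarily good approximation. First I would fix, for each $d\in\N$, a countable dense subset $\{f_{d,k}\}_{k\in\N}$ of $C([0,1]^d)$ (with respect to the sup-norm), for instance a suitable family of polynomials with rational coefficients; by density it suffices to approximate every such $f_{d,k}$ to within any $\varepsilon>0$, since the triangle inequality then handles arbitrary $f\in C([0,1]^d)$. The key classical ingredient is that each $f_{d,k}$ can itself be uniformly approximated on $[0,1]^d$ by a finite sum of ridge functions of the form $\sum_j c_j\,\sigma(\langle w_j,x\rangle + \theta_j)$ for a single fixed ``nice'' sigmoidal $\sigma$ (this is the quantitative content behind the Cybenko/Hornik universal approximation theorems, made effective). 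The art is that $\sigma$ and all these parameters must be absorbed into one universal $\rho$.

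The main construction step is then to build $\rho$ as follows. Partition $\R$ into countably many disjoint intervals $I_{d,k,m}$, indexed so that on a finite block of them $\rho$ reproduces — after an affine change of variables in the argument — the finitely many ridge profiles $\sigma(\cdot)$ needed to approximate $f_{d,k}$ to accuracy $1/m$. Concretely, each neuron in the middle layer has argument $a_i\cdot(\text{linear combination of inputs}) + b_i$ designed to land in the appropriate sub-interval where $\rho$ agrees with the needed copy of $\sigma$; the output layer then forms the correct linear combination. The rigid dimension count $N_1=3d$, $N_2=6d+3$, $N_3=1$ is the delicate bookkeeping part: one shows that approximating any ridge-sum representation can be repackaged so that only $O(d)$ neurons are needed in each layer, essentially because the ``width'' of the needed network can be folded into the complexity of $\rho$ itself rather than into the number of nodes — the first hidden layer of width $3d$ suffices to form the handful of linear functionals and offsets that route into the correct lookup regions, and $6d+3$ in the second layer suffices to reconstruct the target. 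Finally, I would verify that $\rho$ can be chosen $C^\infty$, strictly increasing, with $\rho(x)\to 1$ as $x\to+\infty$ and $\rho(x)\to 0$ as $x\to-\infty$: one interpolates smoothly and monotonically between the prescribed ``plateau'' behaviors on the lookup intervals using standard bump-function/partition-of-unity techniques, adding a slowly increasing background term to enforce strict monotonicity and the correct limits, while keeping the perturbation small enough on each lookup interval that the approximation guarantees survive.

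The hard part will be the simultaneous reconciliation of three competing demands: (i) $\rho$ must encode an entire countable dense family of functions across all dimensions $d$, (ii) it must nonetheless be a genuinely smooth, strictly monotone sigmoid (which severely constrains how ``wild'' it can be and in particular forbids it from literally equaling $\sigma$ on an interval unless $\sigma$ is itself monotone there — so one really approximates with controlled error on each lookup window rather than matching exactly), and (iii) the node counts must stay at the fixed values $3d$, $6d+3$, $1$ independent of the accuracy $\varepsilon$. Point (iii) is what forces the approximating network to have bounded width, which is only possible because all the ``effort'' is pushed into the single function $\rho$; making this rigorous requires showing that the repackaging of an arbitrarily accurate finite ridge-sum into a width-$O(d)$ architecture is always feasible, which is the technical heart of \cite{maiorov1999lower}. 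I would expect the monotonicity-versus-lookup tension in point (ii) to be the subtlest obstacle, handled by allocating to each lookup region a tiny monotone ``correction budget'' and arguing that errors of that size do not spoil the uniform approximation, since the target accuracy $\varepsilon$ on $f_{d,k}$ can be met with room to spare by first choosing $k$ large (density) and then the appropriate lookup block.
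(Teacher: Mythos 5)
First, note that the paper does not prove this statement at all; it is quoted verbatim from Maiorov and Pinkus \cite{maiorov1999lower}, so the comparison has to be with the argument in that reference. Your univariate ``lookup'' idea is genuinely close to half of that argument: Maiorov--Pinkus construct $\rho$ by encoding a countable dense family of \emph{univariate} polynomials on disjoint windows, smoothly and monotonically interpolated, so that every $g\in C([0,1])$ can be approximated to arbitrary accuracy by a linear combination of just \emph{three} terms $c_i\rho(a_i t+b_i)$ (monotonicity is reconciled with the lookup exactly in the spirit of your ``correction budget'', e.g.\ by encoding increasing profiles and taking differences). So the part of your plan dealing with (i) and (ii) is sound in outline.

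The genuine gap is the multivariate step, i.e.\ precisely the point (iii) you defer. Your proposed route --- approximate $f_{d,k}$ by a Cybenko-type ridge sum $\sum_j c_j\sigma(\langle w_j,x\rangle+\theta_j)$ of unbounded length and then ``repackage'' it into widths $3d$ and $6d+3$ --- has no mechanism behind it and cannot work as described: each first-layer neuron produces a ridge function, a bounded number of ridge terms is not dense in $C([0,1]^d)$ for $d\ge 2$ (this non-density is exactly what the lower-bound half of \cite{maiorov1999lower} quantifies), and a univariate $\rho$ cannot ``look up'' a multivariate profile, so encoding dense subsets of $C([0,1]^d)$ into $\rho$ buys nothing by itself. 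What actually makes fixed widths possible is the Kolmogorov superposition theorem: $f$ is written as a sum of $2d+1$ outer univariate functions applied to linear combinations of $d$ inner univariate functions of the coordinates, and each univariate function is then realized by the three-term lookup lemma, which is where the counts $N_1=3d=3\cdot d$ and $N_2=6d+3=3\cdot(2d+1)$ come from. Without this (or an equivalent) reduction from multivariate to univariate approximation, the step you yourself flag as ``the technical heart'' is missing, so the proposal as it stands does not constitute a proof.
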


We observe that the number of nodes and the number of layers of the approximating network in Theorem~\ref{pinkusthm} do not depend on the approximation error $\varepsilon$. 
In particular, $\varepsilon$ can be chosen arbitrarily small while having $\mathcal{M}(\Phi)$ bounded. By density of $C([0,1]^d)$ in $L^2([0,1]^d)$ and hence in all compact subsets 
of $L^2([0,1]^d)$, this implies the existence of an activation function $\rho: \R\to \R$ such that $\gamma^\ast_{\cNN} (\cC, \rho)=\infty$ for all compact $\mathcal{C} \subset L^2([0,1]^d), d\in \N$.
However, the networks underlying Theorem~\ref{pinkusthm} necessarily lead to weights that are (in absolute value) not bounded by $|\pi(\varepsilon^{-1})|$ for a polynomial $\pi$, a requirement we will
have to impose to get rate-distortion-optimal approximation through neural networks (see Section \ref{sec:lowerbound}).
To see that the weights, indeed, do not obey a polynomial growth bound in $\varepsilon^{-1}$, we note that thanks to Theorem~\ref{pinkusthm}, there exist $C>0$ and $\gamma>0$ such that
%fix $\gamma > 0$ and set $\varepsilon=CM^{-\gamma}$ in Theorem~\ref{pinkusthm}. This yields
\begin{align}\label{eq:TheContradictionOfPinkus}
	\sup_{f \in \mathcal{C}}\, \inf_{\Phi_{M} \in \cNN_{3, M,d,\rho}} \|f    - \Phi_{M} \|_{L^2(\Omega)}\le C M^{-\gamma}, \,\, \text {for all}\,\, M \in \N.
\end{align}
%where $M_0 \le 21d^2+15d+3$ is the number of nonzero edge weights of the network in Theorem~\ref{pinkusthm}. 
Now, as $\varepsilon$ in Theorem~\ref{pinkusthm} can be made arbitrarily small while the connectivity of the corresponding networks remains upper-bounded by $21d^2+15d+3$,
(\ref{eq:TheContradictionOfPinkus}) would have to hold for arbitrarily large $\gamma$, in particular also for $\gamma > \gamma_{\cNN}^{\ast, \text{eff}}(\mathcal{C}, \rho)$, where $\gamma_{\cNN}^{\ast, \text{eff}}(\mathcal{C}, \rho)$ is the effective best $M$-edge approximation rate according to Definition~\ref{def:repoptiNN}. By Theorem~\ref{thm:EffRepNN} below, however, $\gamma_{\cNN}^{\ast, \text{eff}}(\mathcal{C}, \rho) \leq {\gamma^\ast(\mathcal{C})}$, where ${\gamma^\ast(\mathcal{C})}$ is the optimal exponent according to Definition~\ref{def:optexp}.
Owing to Definition~\ref{def:repoptiNN}, we can therefore conclude that the weights of the network achieving the infimum in (\ref{eq:TheContradictionOfPinkus})
can not be bounded by a polynomial in $M \sim \varepsilon^{-1}$ whenever $\gamma^\ast(\mathcal{C})<\infty$. Here and in the sequel, we write $a \sim b$ if the variables $a$ and $b$ are proportional, i.e., there exist uniform constants $c_1,c_2 >0$ such that $c_1 a \leq b \leq c_2 a$.

The observation just made resembles the problem in best $M$-term approximation which eventually led to the concept of {\em effective} best $M$-term approximation,
where we restricted the search depth in the representation system $\mathcal{D}$ to be bounded by a given polynomial in $M$ and
the coefficients $c_i$ to be bounded according to $\max_{i\in I_M} \! |c_i | \, \le \, D$. 
Interpreting the weights in the network as the counterpart of the coefficients $c_i$ in best $M$-term approximation, we see that the restriction on the search depth corresponds to restricting the size of the indices enumerating the participating weights.
The need
for such a restriction is obviated by the tree structure of deep neural networks as exposed in detail in the proof of Proposition~\ref{prop:optimalitynoquant}. The second restriction will lead us to a growth condition on the weights, which is more generous than the corresponding requirement of the $c_i$ in effective best $M$-term approximation being bounded.

In summary, this leads to the novel concept of ``best $M$-edge approximation subject to polynomial weight growth'' as formalized next.

\begin{definition}\label{def:repoptiNN}

Given $d\in \N$, $\Omega \subset \R^d$, a function class $\cC \subset L^2(\Omega)$, and an activation function $\rho: \R \to \R$,
the supremal $\gamma > 0$ so that there exist an $L \in \N$ and a polynomial $\pi$ such that
%
%we define, for $f \in \cC$ and $M\in \N$,
\begin{align} \label{eq:GammaMDictDef-nn}
%\Gamma_M^\mathcal{D,\text{eff}}(f) := 
\sup_{f \in \mathcal{C}}\, \inf_{\Phi_{M} \in {\cNN}_{L, M, d, \rho}^\pi}  \|f    - \Phi_{M}\|_{L^2(\Omega)} \in  
\mathcal{O}(M^{-\gamma}), \, M \rightarrow \infty,
%
% \text{all weights of } \Phi_{M} \text{bounded in absolute value by } ,
\end{align}
where ${\cNN}_{L, M, d, \rho}^\pi$ denotes the class of networks in ${\cNN}_{L, M, d, \rho}$ that have all their weights bounded in absolute value by $\pi(M)$,
%\inf_{\substack{I_M\subset \{1,\dots, \pi(M)\},\\ \#I_M = M, (c_i)_{i \in I_M}\!, \max_{i\in I_M}|c_i|\le D}} \left\|f - \sum_{i \in I_M} c_i \varphi_i\right\|_{L^2(\Omega)} = \mathcal{O}(M^{-
%\gamma})
will be referred to as 
%We call $\Gamma_M^\mathcal{D,\text{eff}}(f)$ the {\em best effective $M$-term approximation error of $f$ in $\mathcal{D}$}.
%Every $f_M = \sum_{i \in I_M} c_i \varphi_i$ attaining the infimum in \eqref{eq:GammaMDictDef} is referred to as a {\em best $M$-term approximation} of $f$ in $\mathcal{D}$.
%The supremal $\gamma > 0$ such that 
%there exists $C>0$ with
%\[
%\sup_{f \in \cC}\Gamma_M^\mathcal{D,\text{eff}}(f) =\mathcal{O}(M^{-\gamma}), 
%\qquad \mbox{ for all } M \in \N,
%\]
%will be referred to as 
{\em effective best $M$-edge approximation rate of $\cC$ by neural networks}\/ and 
denoted by $\gamma^{\ast,\text{eff}}_{\cNN}(\cC, \rho)$.
%We say that the {\em best effective $M$-term approximation rate of $\cC$ in the representation system $\mathcal{D}$} is $\gamma^{\ast,\text{eff}}(\mathcal{C},\mathcal{D})$.
\end{definition}
We will show in Corollary \ref{thm:EffRepNN} that $\sup_{\rho:\R \to \R} \gamma^{\ast,\text{eff}}_{\cNN} (\cC, \rho)$
is bounded and depends on the ``description complexity'' of the function class $\cC$. 

\section{Fundamental Bounds on Effective $M$-Term and $M$-Edge Approximation Rate}\label{sec:lowerbound}

The purpose of this section is to establish fundamental bounds on effective best $M$-term and effective best $M$-edge approximation rates by evaluating
the corresponding approximation strategies in the
%establish a lower bound on the worst-case connectivity of neural networks approximating elements from a given function class $\cC \subset L^2(\R^d)$ to within a prescribed %accuracy $of $\varepsilon>0$. A key ingredient is the concept of
min-max rate distortion theory framework as developed in \cite{DONOHO1993100,grohs2015optimally}.
%This theory provides fundamental bounds on the length of lossy compression by encoder-decoder pairs. 
%Based on this general theory, we will then analyze best $M$-term approximation by representation systems and approximation through neural networks from an encoder-decoder %perspective.

%------------------------------------------------------------------------------------------------------------------------------
\subsection{Min-Max Rate Distortion Theory} \label{subsec:ratedistorsion}
%------------------------------------------------------------------------------------------------------------------------------
Min-max rate distortion theory provides a theoretical foundation for deterministic lossy data compression. We recall
the following notions and concepts from \cite{DONOHO1993100,grohs2015optimally}.

Let $d\in \N$, $\Omega \subset \R^d$, and consider the function class $\cC\subset L^2(\Omega)$. Then, for each $\ell\in \N$, we denote by
\begin{align*}
    \mathfrak{E}^\ell:= \left\{E: \cC \to \{0,1\}^{\ell}\right\}
\end{align*}
the set of \emph{binary encoders of $\cC$ of length $\ell$}, and we let
\begin{align*}
    \mathfrak{D}^\ell:= \left\{D:\{0,1\}^{\ell} \to  L^2(\Omega)\right\}
\end{align*}
be the set of \emph{binary decoders of length $\ell$}. An encoder-decoder pair  $(E, D) \in \mathfrak{E}^\ell \times \mathfrak{D}^\ell$
is said to {\em achieve uniform error $\varepsilon$ over the function class $\cC$}, if
\begin{align*}
    \sup_{f\in \cC} \|D(E(f)) - f \|_{L^2(\Omega)} \leq \varepsilon.
\end{align*}
%This means that the worst-case error incurred by applying the encoder-decoder pair  $(E, D) \in \mathfrak{E}^\ell \times \mathfrak{D}^\ell$ to the elements of $\cC$ is upper-bounded %by $\varepsilon$, often also expressed as the uniform error over $\cC$ being no more than $\varepsilon$.

A quantity of central interest is the minimal length $\ell\in \N$ for which there exists an encoder-decoder pair $(E, D) \in \mathfrak{E}^\ell \times \mathfrak{D}^\ell$ that achieves uniform error $\varepsilon$ over the function class $\cC$, along with its asymptotic behavior as made precise in the following definition.

\begin{definition}\label{def:optexp}
Let $d\in \N$, $\Omega \subset \R^d$, and $\cC\subset L^2(\Omega)$. Then, for $\varepsilon >0$, the \emph{minimax code length} $L(\varepsilon, \cC)$ is
\[
L(\varepsilon, \cC) := \min\left\{\ell\in \N: \exists (E,D) \in  \mathfrak{E}^\ell \times \mathfrak{D}^\ell:  \sup_{f\in \cC} \|D(E(f)) - f \|_{L^2(\Omega)} \leq \varepsilon\right\}.
\]
Moreover, the \emph{optimal exponent} $\gamma^*(\cC)$ is defined as
\[
\gamma^*(\cC): = \sup \left \{\gamma \in \R: L(\varepsilon, \cC) \in \mathcal{O} \! \left(\varepsilon ^{-1/\gamma}\right), \, \varepsilon \rightarrow 0 \right\}.
%\exists C>0:\ \forall \varepsilon>0:\ \varepsilon \le C\cdot L(\varepsilon, \cC)^{-\gamma} \}.
\]
\end{definition}
The optimal exponent $\gamma^*(\cC)$ quantifies the minimum growth rate of $L(\varepsilon, \cC)$ as the error $\varepsilon$ tends to zero and
can hence be seen as quantifying the ``description complexity'' of the function class $\cC$. Larger $\gamma^*(\cC)$ results in smaller growth 
rate and hence smaller memory requirements for storing signals $f\in \cC$ such that reconstruction
with uniformly bounded error is possible. The quantity $\gamma^*(\cC)$ is closely related to the concept of Kolmogorov entropy \cite{Off2002MetricEntropy}. Remark 5.10 in \cite{grohs2015optimally} makes this connection explicit.

The optimal exponent
is known for several function classes, such as subsets of Besov spaces $B_{p,q}^s(\R^d)$ with $1\leq p,q<\infty, s>0$, and $q>(s+1/2)^{-1}$, namely all functions in $B_{p,q}^s(\R^d)$ of bounded norm, see e.g. \cite{cohen2001tree}. If $\cC$ is a bounded subset of $B_{p,q}^s(\R^d)$, then we have $\gamma^*(\cC) ={s}/{d}$.
In the present paper, we shall be particularly interested in so-called $\beta$-cartoon-like functions,
for which the optimal exponent is given by ${\beta}/{2}$ (see \cite{Don2001Sparse,Grohs2016} and Theorem~\ref{thm:ExponentOfCartoons}).
%------------------------------------------------------
\subsection{Fundamental Bound on Effective Best $M$-Term Approximation Rate}\label{subsec:repcode}

We next recall a result from \cite{DONOHO1993100,grohs2015optimally}, which says that,
for a given function class $\cC$, the optimal exponent $\gamma^*(\cC)$ constitutes a fundamental bound on the effective best $M$-term 
approximation rate of $\cC$ in any representation system. This gives operational meaning to $\gamma^*(\cC)$.
% characterizing the description 
%complexity of $\cC$.
\begin{theorem}[\cite{DONOHO1993100,grohs2015optimally}]\label{thm:optDictApproxLwrBd}
    Let $d\in \N$, $\Omega\subset \mathbb{R}^d$, $\mathcal{C}\subset L^2(\Omega)$, and assume that the effective best $M$-term approximation rate of $\cC$ in $\mathcal{D} \subset L^2(\Omega)$ is $\gamma^{\ast,\text{eff}}(\cC,\mathcal{D})$. Then, we have
    $$
	    \gamma^{\ast,\text{eff}}(\cC,\mathcal{D}) \leq {\gamma^\ast(\cC)}.
    $$
\end{theorem}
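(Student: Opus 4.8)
The plan is to exhibit, from a representation system $\mathcal{D}$ realizing effective best $M$-term approximation rate $\gamma$, a family of encoder-decoder pairs that achieve uniform error $\varepsilon$ over $\mathcal{C}$ with code length growing no faster than $\varepsilon^{-1/\gamma}$ (up to logarithmic factors, which do not affect the exponent). This would immediately give $\gamma \le \gamma^\ast(\mathcal{C})$ by Definition~\ref{def:optexp}, and since $\gamma$ was an arbitrary value below $\gamma^{\ast,\text{eff}}(\mathcal{C},\mathcal{D})$, the theorem follows. So the entire argument reduces to an explicit encoding scheme.

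Concretely, fix $\gamma < \gamma^{\ast,\text{eff}}(\mathcal{C},\mathcal{D})$, with associated polynomial $\pi$ and constant $D>0$ as in Definition~\ref{def:polydepth}. Given a target error $\varepsilon>0$, I would set $M \sim \varepsilon^{-1/\gamma}$ so that $\sup_{f\in\mathcal{C}} \inf \|f - \sum_{i\in I_M} c_i\varphi_i\|_{L^2(\Omega)} \le \varepsilon/2$, where the infimum is over index sets $I_M \subset \{1,\dots,\pi(M)\}$ of size $M$ and coefficient vectors with $\max_i |c_i| \le D$. For each $f\in\mathcal{C}$, pick a near-optimal such $(I_M, (c_i)_{i\in I_M})$. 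The encoder then outputs: (i) the set $I_M$, which costs at most $M\lceil \log_2 \pi(M)\rceil = \mathcal{O}(M\log M)$ bits since each index lies in $\{1,\dots,\pi(M)\}$; and (ii) a quantized version $\tilde c_i$ of each coefficient $c_i$, rounded to a grid of spacing $\delta$ on $[-D,D]$, costing $M\lceil \log_2(2D/\delta)\rceil$ bits. The decoder reconstructs $\sum_{i\in I_M}\tilde c_i\varphi_i$. The quantization error is controlled by
\[
\Big\| \sum_{i\in I_M} (c_i - \tilde c_i)\varphi_i \Big\|_{L^2(\Omega)} \le \sum_{i\in I_M} |c_i - \tilde c_i|\,\|\varphi_i\|_{L^2(\Omega)} \le M \delta \sup_{i\le \pi(M)} \|\varphi_i\|_{L^2(\Omega)},
\]
so choosing $\delta$ inversely polynomial in $M$ (small enough to make this $\le \varepsilon/2$) keeps the per-coefficient bit cost at $\mathcal{O}(\log M)$. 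Altogether the code length is $\mathcal{O}(M\log M) = \mathcal{O}(\varepsilon^{-1/\gamma}\log(1/\varepsilon))$.

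The main technical wrinkle — and the one place where care is genuinely needed — is the last step: a pure power-law bound $\varepsilon^{-1/\gamma}$ is being claimed for $L(\varepsilon,\mathcal{C})$, yet the construction naturally produces an extra $\log$ factor, and moreover it implicitly assumes a uniform bound $\sup_{i\le\pi(M)}\|\varphi_i\|_{L^2(\Omega)} \le B$ for some $B$ (or at worst polynomial growth in $M$). The resolution is that the logarithmic factor is absorbed by passing to any exponent $\gamma' < \gamma$: since $\varepsilon^{-1/\gamma}\log(1/\varepsilon) \in \mathcal{O}(\varepsilon^{-1/\gamma'})$ for every $\gamma' < \gamma$, one gets $L(\varepsilon,\mathcal{C}) \in \mathcal{O}(\varepsilon^{-1/\gamma'})$, hence $\gamma^\ast(\mathcal{C}) \ge \gamma'$ for all $\gamma' < \gamma$, hence $\gamma^\ast(\mathcal{C}) \ge \gamma$; taking the supremum over $\gamma < \gamma^{\ast,\text{eff}}(\mathcal{C},\mathcal{D})$ closes the argument. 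As for the norm bound on the $\varphi_i$, one either invokes it as a standing hypothesis on admissible representation systems (which is the convention in \cite{DONOHO1993100,grohs2015optimally}) or notes that any polynomial growth $\|\varphi_i\|_{L^2(\Omega)} \le \pi'(i)$ only adds a further $\mathcal{O}(\log M)$ to the quantization budget and is likewise harmless. I would present the clean case $B=1$ (frames, or normalized systems) and remark on the general case.
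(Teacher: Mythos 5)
Your encoding argument is precisely the standard proof of this result: the paper does not prove Theorem \ref{thm:optDictApproxLwrBd} itself but imports it from \cite{DONOHO1993100,grohs2015optimally}, where the argument is exactly this construction (spend $\mathcal{O}(M\log M)$ bits on the indices thanks to the polynomial search depth, $\mathcal{O}(\log M)$ bits per coefficient thanks to the bound $|c_i|\le D$, then absorb the logarithmic factor by passing to $\gamma'<\gamma$), and it also mirrors the bitstream-encoding proof the paper gives for the neural-network analogue in Proposition \ref{prop:optimalitynoquant}. The only caveat, which you yourself flag, is the need for $\sup_{i\le \pi(M)}\|\varphi_i\|_{L^2(\Omega)}$ to be bounded or at most polynomially growing; this is indeed part of the convention in the cited sources, so your proof is complete as written.
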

In light of this result the following definition is natural (see also \cite{grohs2015optimally}).
\begin{definition}\label{def:repopti}
Let $d\in \N$, $\Omega\subset \mathbb{R}^d$, and assume that the effective best $M$-term approximation rate of $\cC \subset L^2(\Omega)$ in $\mathcal{D} \subset L^2(\Omega)$ is $\gamma^{\ast,\text{eff}}(\cC,\mathcal{D})$. If
$$
\gamma^{\ast,\text{eff}}(\cC,\mathcal{D}) = {\gamma^\ast(\cC)},
$$
then the function class $\cC$ is said to be \emph{optimally representable}\/ by $\mathcal{D}$.
% is \emph{optimally represented}\/ by the representation system $\mathcal{D}$.
% is said to be \emph{optimal for the function class $\cC$}.
\end{definition}
%
%------------------------------------------------------------------------------------------------------------------------------
\subsection{Fundamental Bound on Effective Best $M$-Edge Approximation Rate} \label{subsec:NNencoding}
%-----------------------------------------------------------------------------------------------------------------------------

We now state the first main result of the paper, namely the equivalent of Theorem~\ref{thm:optDictApproxLwrBd} for approximation by deep neural networks.
Specifically, we establish that the optimal exponent $\gamma^*(\cC)$ also constitutes a fundamental bound on the effective best $M$-edge
approximation rate of $\cC$. We say below that a function $f: \R \to \R$ is \emph{dominated} by a function $g: \R \to \R$ if $|f(x)| \leq |g(x)|$, for all $x\in \R$.
\begin{theorem}\label{thm:EffRepNN}
	Let $d\in \N$, $\Omega\subset \mathbb{R}^d$ be bounded, and $\cC \subset L^2(\Omega)$. Then, for all $\rho: \R \to \R$ that are Lipschitz-continuous or differentiable with $\rho'$ dominated by an arbitrary polynomial, we have 
	$$
	\gamma_{\cNN}^{\ast, \text{eff}}(\cC, \rho) \leq {\gamma^\ast(\cC)}.
	$$
\end{theorem}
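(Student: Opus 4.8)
The plan is to embed the approximation scheme into the min--max rate--distortion framework of Section~\ref{subsec:ratedistorsion}: I will show that any neural network of bounded depth with at most $M$ edges and weights bounded by a polynomial in $M$ can be described by $\mathcal{O}(M\log M)$ bits while losing only polynomially little accuracy, and then invert this to bound $L(\varepsilon,\cC)$. Concretely, fix any $\gamma_0<\gamma_{\cNN}^{\ast,\text{eff}}(\cC,\rho)$. By Definition~\ref{def:repoptiNN} there are $L\in\N$, a polynomial $\pi$, and a constant $C>0$ with $\sup_{f\in\cC}\inf_{\Phi_M\in\cNN_{L,M,d,\rho}^\pi}\|f-\Phi_M\|_{L^2(\Omega)}\le C M^{-\gamma_0}$ for all large $M$. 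It then suffices to build, for every such $M$, an encoder--decoder pair $(E_M,D_M)$ of length $\ell(M)=\mathcal{O}(M\log M)$ achieving uniform error $\lesssim M^{-\gamma_0}$ over $\cC$: since $L(\cdot,\cC)$ is non-increasing, this gives $L(\varepsilon,\cC)\le\ell(M)$ for $\varepsilon\sim M^{-\gamma_0}$, i.e. $L(\varepsilon,\cC)=\mathcal{O}(\varepsilon^{-1/\gamma_0}\log(1/\varepsilon))\subseteq\mathcal{O}(\varepsilon^{-1/\gamma_1})$ for every $\gamma_1<\gamma_0$, hence $\gamma^\ast(\cC)\ge\gamma_0$; letting $\gamma_0\uparrow\gamma_{\cNN}^{\ast,\text{eff}}(\cC,\rho)$ closes the argument.

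For the encoder: given $f\in\cC$, select $\Phi_M\in\cNN_{L,M,d,\rho}^\pi$ with $\|f-\Phi_M\|_{L^2(\Omega)}\le 2CM^{-\gamma_0}$. First I would prune $\Phi_M$: any hidden node not incident to a nonzero edge is removed (a node with no outgoing nonzero edge does not influence the output, and the constant $\rho(0)$ emitted by a node with no incoming nonzero edge is absorbed into the bias of the next layer), which changes neither the realized function nor $L$, and does not increase $M$. After pruning, every remaining hidden node carries at least one incoming nonzero edge, so all layer widths — and the total number of hidden nodes — are at most $M$. Hence the topology (the widths $N_1,\dots,N_{L-1}$ together with, for each of the $\le M$ edges, the indices of its endpoints, each in an index set of size $\le d+M+1$) is encodable in $\mathcal{O}(M\log M)$ bits. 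Each of the $\le M$ weights lies in $[-\pi(M),\pi(M)]$; encode it by its nearest point on a uniform grid of spacing $\delta$, costing $\mathcal{O}(\log(\pi(M)/\delta))$ bits each. The decoder $D_M$ rebuilds the quantized network $\widetilde{\Phi}_M$ and outputs it.

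It remains to fix $\delta$. Since $\Omega$ is bounded, $|x|$ is bounded on $\Omega$; using that all weights are $\le\pi(M)$ and all widths $\le M$, an induction over the $L$ layers shows that the pre- and post-activations of $\Phi_M$ and of $\widetilde{\Phi}_M$ are bounded in absolute value, uniformly over $x\in\Omega$, by a polynomial $P(M)$ — here I invoke that $\rho$ is Lipschitz, or, in the differentiable case, that $\sup_{|t|\le P(M)}|\rho'(t)|$ is polynomial in $M$ because $\rho'$ is dominated by a polynomial, so $\rho$ is Lipschitz on the relevant range with a polynomially bounded constant. A layerwise perturbation estimate (triangle inequality at each affine map, Lipschitz bound at each application of $\rho$), using the same two facts, then yields $\|\Phi_M-\widetilde{\Phi}_M\|_{L^\infty(\Omega)}\le C(M)\,\delta$ with $C(M)$ polynomial in $M$ of degree depending only on the fixed $L$. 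Choosing $\delta:=M^{-\gamma_0}/C(M)\ge M^{-\kappa}$ for a fixed $\kappa$ gives $\|\Phi_M-\widetilde{\Phi}_M\|_{L^2(\Omega)}\le |\Omega|^{1/2}M^{-\gamma_0}$, so $\sup_{f\in\cC}\|D_M(E_M(f))-f\|_{L^2(\Omega)}\le (2C+|\Omega|^{1/2})M^{-\gamma_0}$; moreover $\log(\pi(M)/\delta)=\mathcal{O}(\log M)$, so $\ell(M)=\mathcal{O}(M\log M)$, as required.

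The main obstacle is the uniform $\mathcal{O}(M\log M)$ bound on the cost of encoding the topology: a priori the widths $N_\ell$ of the approximating networks are unconstrained, and if they grew faster than any polynomial in $M$, the edge endpoints could not be addressed with $\mathcal{O}(\log M)$ bits. The pruning step is exactly what removes this difficulty — this is the role of the ``tree structure'' of deep networks mentioned before Definition~\ref{def:repoptiNN} — and one must verify with care that pruning and the folding of the constants $\rho(0)$ leave the realized function unchanged and preserve membership in $\cNN_{L,M,d,\rho}^\pi$. The second, more routine, technical point is the propagation-of-quantization-error estimate, where the polynomial weight bound built into $\cNN_{L,M,d,\rho}^\pi$ and the Lipschitz/polynomial-derivative hypothesis on $\rho$ both enter; the boundedness of $\Omega$ is used both there (to bound the network inputs) and at the end (to pass from the $L^\infty$ to the $L^2$ error).
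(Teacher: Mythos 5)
Your proposal is correct and takes essentially the same route as the paper: the paper likewise encodes the (pruned) topology and weights of a near-optimal network in $\mathcal{O}(M\log M\log(\varepsilon^{-1}))$ bits (Proposition \ref{prop:optimalitynoquant}), controls the quantization error by a layerwise Lipschitz/polynomial-derivative estimate exploiting the polynomial weight bound and the boundedness of $\Omega$ (Lemma \ref{lem:PolynomiallyboundedImpliesFiniteBitLength}), and then invokes the rate-distortion optimality of $\gamma^\ast(\cC)$, only packaged as a contradiction via Proposition \ref{prop:optimality} instead of your direct bound on $L(\varepsilon,\cC)$. The only cosmetic inaccuracies — the absorbed constant is $\rho(\text{bias})$ times the outgoing edge weight rather than $\rho(0)$, the node weights (biases) of the $\mathcal{O}(M)$ surviving nodes must also be encoded, and after folding constants the weight bound may require a larger polynomial than $\pi$ — do not affect the $\mathcal{O}(M\log M)$ codelength or the conclusion.
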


The key ingredients of the proof of Theorem~\ref{thm:EffRepNN} are developed throughout this section and the formal proof will be stated at the end of the section. Before embarking on this, we note that, in analogy to Definition~\ref{def:repopti}, what we just found suggests the following.
%The lower bound of Corollary \ref{thm:EffRepNN}, similar to Definition~\ref{def:repopti}, suggests the following definition.  
%
\begin{definition}\label{def:NNopti}
	For $d\in \N$, $\Omega \subset \R^d$ bounded, we say that the function class $\mathcal{C}\subset L^2(\Omega)$ is \emph{optimally representable by neural networks with activation function $\rho: \R \to \R$}, if 
	$$\gamma_{\cNN}^{\ast, \text{eff}}(\cC, \rho) = {\gamma^*(\cC)}.$$
\end{definition}

It is remarkable that the fundamental limits of approximation through representation systems and approximation through deep neural networks are determined by the same quantity, although the approximants in the two cases
are vastly different, namely linear combinations of elements of a representation system with the participating functions identified subject to a polynomial-depth search constraint in the former,
and concatenations of affine functions followed by non-linearities under growth constraints on the weights in the network in the latter case.

A key ingredient of the proof of Theorem~\ref{thm:EffRepNN} is the following result, which establishes a fundamental lower bound on
the connectivity of networks with quantized weights achieving uniform error $\varepsilon$ over a given function class.
\begin{proposition}\label{prop:optimalitynoquant}
Let $d\in \N$, $\Omega\subset \mathbb{R}^d$, $\rho :\R \to \R$, $c>0$, and $\cC \subset L^2(\Omega)$. Further, let
\[
\Learn: \left(\!0,\frac{1}{2}\right) \times \cC \to \cNN_{\infty, \infty, d, \rho}
\]
be a map such that, for each pair $(\varepsilon,f)\in (0, 1/2)\times\cC$, every weight of the neural network  $\Learn(\varepsilon, f)$
is represented by no more than $\lceil c\log_2(\varepsilon^{-1}) \rceil$ bits while guaranteeing that
\begin{equation}
\label{eq:learnalgest}
    \sup_{f \in \cC}\|f - \Learn(\varepsilon, f)\|_{L^2(\Omega)} \leq \varepsilon.
\end{equation}

Then, 
%\footnote{$f = \omega (g)$ stands for $\lim |f/g| = \infty$.}
\begin{equation}\label{eq:NotNNoptiNotQuant}
    \sup_{f\in \cC}\mathcal{M}(\Learn(\varepsilon, f)) \notin \mathcal{O}\! \left(\varepsilon^{-1/\gamma}\right), \, \varepsilon \rightarrow 0, \quad \mbox{for all }\gamma> {\gamma^\ast(\cC)}.
    \end{equation}
%    \sup_{\varepsilon\in (0,\frac12)}\varepsilon^{\frac{1}{\gamma}}\cdot\sup_{f\in \cC}\mathcal{M}(\Learn(\varepsilon, f)) =\infty,\quad \mbox{for all }\gamma> {\gamma^\ast(\cC)}.
%\end{equation}
\end{proposition}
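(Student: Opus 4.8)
The plan is to turn the hypothetical $\Learn$ map into a binary encoder/decoder pair for $\cC$ whose code length is controlled by the connectivity of the networks produced by $\Learn$, and then invoke the definition of the optimal exponent $\gamma^\ast(\cC)$ to obtain the claimed non-membership. Suppose, for contradiction, that $\sup_{f\in\cC}\mathcal{M}(\Learn(\varepsilon,f)) \in \mathcal{O}(\varepsilon^{-1/\gamma})$ for some $\gamma > \gamma^\ast(\cC)$. Pick $\gamma'$ with $\gamma^\ast(\cC) < \gamma' < \gamma$; we will build, for each $\varepsilon$, an encoder–decoder pair achieving uniform error $\varepsilon$ over $\cC$ with code length in $\mathcal{O}(\varepsilon^{-1/\gamma'})$, contradicting $\gamma^\ast(\cC) < \gamma'$ via Definition~\ref{def:optexp}.

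The heart of the argument is a uniform bit-length bound for encoding the \emph{topology} and the \emph{quantized weights} of a network $\Phi = \Learn(\varepsilon,f)$. Here I would exploit the tree/layered structure alluded to after Definition~\ref{def:repoptiNN}: a network with at most $M := \sup_f \mathcal{M}(\Learn(\varepsilon,f))$ nonzero edge weights, arbitrary depth $L$, and arbitrary layer widths, can nonetheless be described by specifying, for each of its $\le M$ active edges, (i) which layer it lives in and which two nodes (within the relevant layers) it connects, and (ii) the quantized value of its weight, which by hypothesis costs $\le \lceil c\log_2(\varepsilon^{-1})\rceil$ bits; similarly for the $\mathcal{O}(M)$ active node weights $b_\ell$. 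The key subtlety is that $L$ and the $N_\ell$ are a priori unbounded, so one cannot naively index nodes. The resolution is to note that only nodes incident to an active edge are relevant — all other nodes can be deleted without changing $\Phi$ (a node with no incoming active edge contributes only a constant, and a node with no outgoing active edge is never read) — so after this pruning both the number of layers and the number of nodes per layer are $\mathcal{O}(M)$, and each node can be indexed with $\mathcal{O}(\log M)$ bits. Altogether the network is encodable in $\mathcal{O}(M\log(M) + M\log(\varepsilon^{-1}))$ bits; using $M \in \mathcal{O}(\varepsilon^{-1/\gamma})$ and $\gamma > \gamma'$ this is $\mathcal{O}(\varepsilon^{-1/\gamma}\log(\varepsilon^{-1})) \subseteq \mathcal{O}(\varepsilon^{-1/\gamma'})$.

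With the encoding in hand, the encoder $E_\varepsilon : \cC \to \{0,1\}^{\ell(\varepsilon)}$ maps $f$ to this bit string, and the decoder $D_\varepsilon$ reads off the pruned topology and quantized weights, reconstructs the network $\widetilde\Phi$, and outputs $\widetilde\Phi \in L^2(\Omega)$. Since the quantization is lossless at $\lceil c\log_2(\varepsilon^{-1})\rceil$ bits per weight, $\widetilde\Phi = \Learn(\varepsilon,f)$ exactly (this is why the hypothesis is phrased in terms of a weight \emph{representation} with a fixed bit budget, not an approximation of a real-weight network), so $\sup_{f\in\cC}\|D_\varepsilon(E_\varepsilon(f)) - f\|_{L^2(\Omega)} = \sup_{f\in\cC}\|f - \Learn(\varepsilon,f)\|_{L^2(\Omega)} \le \varepsilon$ by \eqref{eq:learnalgest}. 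Padding all code strings to the common length $\ell(\varepsilon) := \sup_{f\in\cC}(\text{bit cost of }f) \in \mathcal{O}(\varepsilon^{-1/\gamma'})$ yields a valid element of $\mathfrak{E}^{\ell(\varepsilon)} \times \mathfrak{D}^{\ell(\varepsilon)}$, so $L(\varepsilon,\cC) \le \ell(\varepsilon) \in \mathcal{O}(\varepsilon^{-1/\gamma'})$, whence $\gamma^\ast(\cC) \ge \gamma' > \gamma^\ast(\cC)$ — a contradiction.

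The main obstacle, and the step that needs the most care, is the pruning/indexing argument that converts an \emph{arbitrary-depth, arbitrary-width} network with only $M$ active edges into one that is encodable with $\mathcal{O}(M\log M)$ topology bits uniformly in $\varepsilon$; one must argue carefully that deleting non-incident nodes (and, if desired, collapsing affine maps across emptied layers) does not alter the realized function $\Phi$, and that the resulting object — layer count, per-layer widths, adjacency list, node indices — really is describable within the claimed budget. A secondary point to handle cleanly is the bookkeeping that $M\log(M) + M\log(\varepsilon^{-1})$, with $M\lesssim \varepsilon^{-1/\gamma}$ and $\gamma^\ast(\cC)<\gamma'<\gamma$, indeed lies in $\mathcal{O}(\varepsilon^{-1/\gamma'})$ as $\varepsilon \to 0$; this is just the observation that a logarithmic factor is absorbed by strictly enlarging the polynomial exponent. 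Everything else — the reduction to an encoder/decoder pair, losslessness of the reconstruction, invoking Definition~\ref{def:optexp} — is routine once these two bounds are in place.
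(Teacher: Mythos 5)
Your proposal is correct and follows essentially the same route as the paper's proof: you convert $\Learn$ into an encoder--decoder pair by encoding the pruned topology (with the layer/width reduction to $\mathcal{O}(M)$ active nodes and $\mathcal{O}(\log M)$-bit indices) plus the $\lceil c\log_2(\varepsilon^{-1})\rceil$-bit weights, bound the code length by $\mathcal{O}(M\log M + M\log_2(\varepsilon^{-1}))$, and contradict the definition of $\gamma^\ast(\cC)$ via an intermediate exponent $\gamma'$. The only cosmetic difference is bookkeeping (the paper's bound is the slightly coarser $\mathcal{O}(M\log_2(M)\log_2(\varepsilon^{-1}))$ and it treats the $M=0$ case and the empty-layer reduction explicitly), which does not change the argument.
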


\begin{proof}
The proof will be effected by identifying $\Learn(\varepsilon, f)=D(E(f))$, where
$(E,D) \in \mathfrak{E}^{\ell(\varepsilon)} \times \mathfrak{D}^{\ell(\varepsilon)}$ are encoder-decoder pairs achieving
uniform error $\varepsilon$ over $\cC$ with
\begin{align}\label{eq:CodeLength}
\ell(\varepsilon) \leq C_0 \cdot \sup_{f\in \cC}\left[ \mathcal{M}(\Learn(\varepsilon, f))\log_2(\mathcal{M}(\Learn(\varepsilon, f))) + 1\right]\log_2(\varepsilon^{-1}),
\end{align}
where $C_0>0$ is a constant, and such that the weights in $\Learn(\varepsilon, f)$ are represented by no more than $\lceil c\log_2(\varepsilon^{-1}) \rceil$ bits each.
Before presenting the construction of these encoder-decoder pairs, we establish that this, indeed, implies the statement of the theorem. 
To this end, let $\gamma > \gamma^*(\cC)$ and, towards a contradiction to \eqref{eq:NotNNoptiNotQuant}, assume that $\sup_{f\in \cC}\mathcal{M}(\Learn(\varepsilon, f)) \in \mathcal{O}(\varepsilon^{-1/\gamma}), \varepsilon \rightarrow 0$.
Then, there would exist a $\nu$ with $\gamma > \nu> \gamma^*(\cC)$ such that there are
encoder-decoder pairs $(E,D) \in \mathfrak{E}^{\ell(\varepsilon)} \times \mathfrak{D}^{\ell(\varepsilon)}$ achieving uniform error $\varepsilon$ over $\cC$ with codelength
$\ell(\varepsilon) \in \mathcal{O}( \varepsilon^{-1/\nu}), \varepsilon \rightarrow 0$, which stands in contradiction to the optimality of 
$\gamma^\ast(\cC)$ according to Definition~\ref{def:optexp}.

We proceed to the construction of the encoder-decoder pairs, which will be accomplished
by encoding the network topology and quantized weights in bitstrings of length $\ell(\varepsilon)$ satisfying \eqref{eq:CodeLength} while
guaranteeing unique reconstruction.
Fix $f\in \cC$. We enumerate the nodes in $\Learn(\varepsilon,f)$ by assigning natural numbers, henceforth called \emph{indices},
increasing from left to right in every layer as schematized in Figure \ref{fig:Numbering}. For the sake of notational simplicity, we also set
$\Phi:=\Learn(\varepsilon, f)$ and $M := \mathcal{M}(\Phi)$. %\log_2(\mathcal{M}(\Phi)).
Without loss of generality, we assume throughout that $M$ is a power of $2$ and greater than $1$. For all $M$ that are not powers of $2$, we make use of the fact that $\cNN_{L,M,d,\rho} \subset \cNN_{L,M',d,\rho}$, where $M'$ is the smallest power of $2$ larger than $M$, and we encode the network like an $M'$-edge network. Since $M < M' < 2M$ this affects $\ell(\varepsilon)$ by a multiplicative constant only. The case $M=0$ will de dealt with in Step 1 below.

We recall that the number of layers of $\Phi$ is denoted by $L$, the number of nodes in these layers is $N_1,\dots , N_L$ (see Definition~\ref{def:NN}), and $d$ stands for the dimension of the input layer. 

Denoting the number of nodes in layer $\ell=1,...,L-1$ associated with edges of nonzero weight in the following layer by 
$\widetilde{N}_\ell$ and setting $\widetilde{N}_L=N_L$, it follows that
\begin{align}
    \label{eq:edgeboundsnodes1}
    d + \sum_{\ell=1}^L\widetilde{N}_\ell\le 2 \widetilde{M},
\end{align}
where we let $\widetilde{M} := M + d$.
All other nodes do not contribute to the mapping $\Phi(x)$ and can hence be ignored. 

Moreover, we can assume that 
\begin{equation}\label{eq:edgeboundsnodes2}
    L\le \widetilde{M}
\end{equation}
as otherwise there would be at least one layer $\ell > 1$ such that 
$$
	 A_\ell = 0.
$$
As a consequence, the reduced network
$$
	x \mapsto W_L\rho(W_{L-1}\dots W_{\ell+1}\rho(0 \cdot x + b_{\ell})),
$$
realizes the same function as the original network $\Phi$ but has less than $L$ layers. This reduction can be repeated inductively until the resulting reduced network
satisfies (\ref{eq:edgeboundsnodes2}). 
\begin{figure}
\centering
\begin{tikzpicture}[->,>=stealth',shorten >=1pt,auto,node distance=2cm,
                    thick,main node/.style={circle,draw,font=\sffamily\Large\bfseries}]

  \node[main node] (1) {7};
  \node[main node] (2) [below left of=1] {5};
  \node[main node] (3) [below right of = 1] {6};
  \node[main node] (4) [below left of = 2] {2};
  \node[main node] (5) [below left of = 3] {3};
  \node[main node] (6) [below right of = 3] {4};
  \node[main node] (7) [below  of = 5] {1};

  \path[every node/.style={font=\sffamily}]
    (2) edge node [left] { \ } (1)
    (3) edge node [left] { \ } (1)
    (4) edge node [left] { \ } (2)
    (5) edge node [left] { \ } (2)
        %edge node [left] { \ } (6)
    (6) edge node [right] {\ } (3)
	(4) edge node [right] {  } (3)
    (7) edge node [left] {  } (4)
    (7) edge node [left] { } (5)
    (7) edge node [left] { } (6);
\end{tikzpicture}
  %  \put(-265,15){\bf Input layer}
  %  \put(-265,145){\bf Output layer}
  %  \put(-265,65){\bf Hidden layer}
  %  \put(-265,105){\bf Hidden layer}
%    \put(-85,160){2}
%    \put(-193,110){3}
%    \put(-155,110){4}
%    \put(-105,110){5}
%    \put( -38,110){6}
    \caption{Enumeration of nodes as employed in the proof of the theorem.}\label{fig:Numbering}
\end{figure}
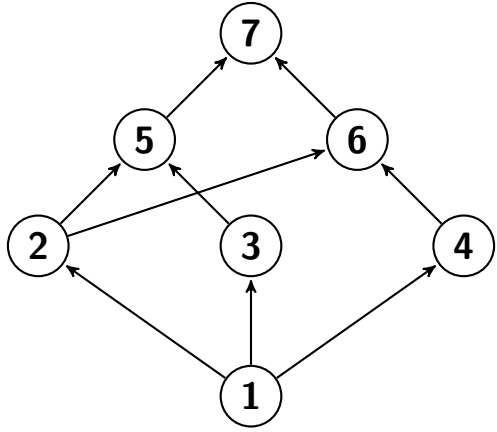

The bitstring representing $\Phi$ is constructed according to the following steps.

{\it Step 1:}  If $M=0$, we encode the network by a leading $0$ followed by the bitstring representing the node weight in the last layer. Upon defining $0 \log_2(0)=0$, we then note that (\ref{eq:CodeLength}) holds trivially and we terminate the encoding procedure. Else, we encode the number of nonzero edge weights, $M$, by starting the overall bitstring with $M$ $1$'s followed by a single $0$. The length of this bitstring is therefore bounded by $\widetilde{M}$.

{\it Step 2:} We continue by encoding the number of layers in the network. Thanks to \eqref{eq:edgeboundsnodes2} this requires no more than $\log_2(\widetilde{M})$ bits. We thus reserve the next $\log_2(\widetilde{M})$ bits for the binary representation of $L$. %\footnote{Here and in the remainder of the proof, we follow the convention that bit strings are zero-padded from the left, that is, for example, if the number of layers $L$ is equal to $2$ and $M=8$, we encode $L$ by the bit string $010$.}

{\it Step 3:} Next, we store the dimension $d$ of the input layer and the numbers of nodes $\widetilde{N}_\ell, \ell = 1, \dots, L$, associated with edges of nonzero weight.
As by \eqref{eq:edgeboundsnodes1} $d \le \widetilde{M}$ and $\widetilde{N}_\ell\le 2\widetilde{M}$, for all $\ell$, we can encode (generously) $d$ and each $\widetilde{N}_\ell$ using
$\log_2(\widetilde{M})+1$ bits. For the sake of concreteness, we first encode $d$ followed by $\widetilde{N}_1, \dots, \widetilde{N}_L$ in that order.
In total, Step 3 requires a bitstring of length
$$
 ((L+1)\cdot (\log_2(\widetilde{M})+1)) \leq  (\widetilde{M}+1)\log_2(\widetilde{M}) + \widetilde{M} + 1.
$$
In combination with Steps 1 and 2 this yields an overall bitstring of length at most
\begin{equation}
\label{sizecode}
\widetilde{M} \log_2(\widetilde{M}) + 2 \log_2(\widetilde{M}) + 2\widetilde{M}+1.
\end{equation}

{\it Step 4:} We encode the topology of the graph associated with $\Phi$ and consider only nodes that contribute to the mapping $\Phi(x)$. 
Recall that we assigned a unique index $i$, ranging from $1$ to $\widetilde{N}:=d+\sum_{\ell=1}^L \widetilde{N}_\ell$,
to each of these nodes.
%These indices range from $1$ to \pp{$\widetilde{N}:=d+\sum_{\ell=1}^L \widetilde{N}_\ell$} and, 
By (\ref{eq:edgeboundsnodes1}) each of these indices can be encoded by a bitstring of length $\log_2(\widetilde{M})+1$. We denote the bitstring corresponding to index $i$ by $b(i)\in \{0,1\}^{\log_2(\widetilde{M})+1}$ and let $n(i)$ be the number of children of the node with index $i$, i.e., the number of nodes in the next layer connected to the node with index $i$ via an edge. 
For each node $i=1,\dots , \widetilde{N}$, we form a bitstring of length $n(i)\cdot (\log_2(\widetilde{M})+1)$ by concatenating the bitstrings $b(j)$ for all $j$ such that there is an edge between $i$ and $j$. We follow this string with an all-zeros bitstring of length $\log_2(\widetilde{M})+1$ to signal the transition to the node with index $i+1$. 
The enumeration is concluded with an all-zeros bitstring of length $\log_2(\widetilde{M})+1$ signaling that the last node has been reached.
%transition to the last node, which does not have children.
Overall, this yields a bitstring of length
\begin{equation}
\label{topcode}
    \sum_{i=1}^{\widetilde{N}} (n(i)+1)\cdot (\log_2(\widetilde{M})+1) \leq 3\widetilde{M}\cdot (\log_2(\widetilde{M})+1),
\end{equation}
where we used $\sum_{i=1}^{\widetilde{N}} n(i) = M < \widetilde{M}$ and (\ref{eq:edgeboundsnodes1}). 
Combining (\ref{sizecode}) and (\ref{topcode}) it follows that we have encoded the overall topology of the network $\Phi$ using at most
\begin{equation}
\label{eq:graphcode}
    5\widetilde{M} + 4\widetilde{M} \log_2(\widetilde{M})+2\log_2(\widetilde{M})+1
\end{equation}
bits.

{\it Step 5:} We encode the weights of $\Phi$.
% i.e., those associated to the neurons and to the edges of $\Phi$. 
By assumption, each weight can be represented
by a bitstring of length $\lceil c\log_2(\varepsilon^{-1})\rceil$.
For each node $i=1,\dots, \widetilde{N}$, we reserve the first $\lceil c\log_2(\varepsilon^{-1})\rceil$ bits to encode its associated node weight and, for each of its children a bitstring of length $\lceil c\log_2(\varepsilon^{-1})\rceil$ to encode
the weight corresponding to the edge between that child and its parent node. Concatenating the results in ascending order of child node indices, we get
%the order of according to the associated indices in
a bitstring of length $(n(i)+1)\cdot (\lceil c \log_2(\varepsilon^{-1})\rceil)$ for node $i$, and an overall bitstring of length
\begin{equation}\label{weightcode}
    \sum_{i = 1}^{\widetilde{N}} (n(i)+1)\cdot \left(\lceil c \log_2 \! \left(\varepsilon^{-1}\right)\rceil\right) \leq 3\widetilde{M}\cdot  \lceil c\log_2 \! \left (\varepsilon^{-1}\right)\rceil
\end{equation}
representing the weights of the graph associated with the network $\Phi$. 
%Note that the encoder needs to know $\varepsilon$.

With (\ref{eq:graphcode}) this shows that the overall number of bits needed to encode the network topology and weights is no more than
%for the number of bits needed to encode the topology of $\Phi$ and (\ref{weightcode}) for the number of bits needed to encode the corresponding weights, we conclude that a bit %string of length at most
%
\begin{align}\label{eq:HelmutCalledThisAlpha}
    5\widetilde{M} + 4\widetilde{M} \log_2(\widetilde{M})+2\log_2(\widetilde{M})+ 1 + 3\widetilde{M}\cdot  \lceil c\log_2 \left(\varepsilon^{-1}\right)\rceil.
\end{align}
%
%is needed to encode $\Phi$. 
The network can be recovered by sequentially reading out $M,L,d$, the $\widetilde{N}_{\ell}$, the topology, and the quantized weights from the overall bitstring.
It is not difficult to verify that the individual steps in the encoding procedure were crafted such that this yields unique 
recovery. 
As \eqref{eq:HelmutCalledThisAlpha} can be upper-bounded by
\begin{equation} \label{totalcodelength}
    C_0 M \log_2(M) \log_2 \! \left(\varepsilon^{-1}\right)
\end{equation}
for a constant $C_0>0$ depending on $c$ and $d$ only, we have constructed an encoder-decoder pair $(E,D)\in \mathfrak{E}^{\ell(\varepsilon)} \times \mathfrak{D}^{\ell(\varepsilon)}$ 
with  $\ell(\varepsilon)$ satisfying (\ref{eq:CodeLength}).
% \le C_0 M \log_2(M) \! \log_2(\varepsilon^{-1})$ achieving uniform error $\varepsilon>0$ over the function class $\cC$.
This concludes the proof.
\end{proof}
Proposition \ref{prop:optimalitynoquant} applies to networks that have each weight represented by a finite number of bits scaling according to $\log_2(\varepsilon^{-1})$ 
while guaranteeing that the underlying encoder-decoder pair achieves uniform error $\varepsilon$ over $\cC$. 
We next show that such a compatibility is possible for networks with activation functions that are either Lipschitz or differentiable such that $\rho'$ is dominated by an arbitrary polynomial. 
We can now demonstrate that for sufficiently regular activation functions, faithful quantization of the weights of a network is possible.

\begin{lemma}\label{lem:PolynomiallyboundedImpliesFiniteBitLength}
Let $d, L, k, M \in \N, \eta\in (0,1/2), \Omega\subset \mathbb{R}^d$ be bounded, and $\rho : \R \to \R$ be either Lipschitz-continuous or differentiable such that $\rho'$ is dominated by an arbitrary polynomial.
Let $\Phi \in \cNN_{L,M,d,\rho}$ with $M \leq \eta^{-k}$ and all its weights bounded (in absolute value) by $\eta^{-k}$. Then, there exist $m\in \N$, depending on $k, L$, and $\rho$ only, and  $\widetilde{\Phi} \in \cNN_{L,M,d,\rho}$ such that
$$
     \| \widetilde{\Phi} - \Phi\|_{L^\infty(\Omega)} \leq \eta
$$
and all weights of $ \widetilde{\Phi}$ are elements of $\eta^{m} \Z \cap [-\eta^{-k}, \eta^{-k}]$.
\end{lemma}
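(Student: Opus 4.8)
The plan is to quantize the weights layer by layer and track how the perturbation propagates through the network, using a Lipschitz-type estimate for $\rho$ together with crude bounds on the norms of the affine maps $W_\ell$. First I would set up notation: write $\Phi = W_L \rho W_{L-1} \cdots \rho W_1$ with $W_\ell x = A_\ell x + b_\ell$, and for a target quantization step $\delta = \eta^m$ (with $m$ to be chosen) define $\widetilde{W}_\ell x = \widetilde{A}_\ell x + \widetilde{b}_\ell$, where each entry of $\widetilde{A}_\ell, \widetilde{b}_\ell$ is the nearest point of $\delta \Z$ to the corresponding entry of $A_\ell, b_\ell$. Since the original weights lie in $[-\eta^{-k}, \eta^{-k}]$, so do the rounded ones (rounding the endpoints inward if necessary), so $\widetilde{\Phi} \in \cNN_{L,M,d,\rho}$ with weights in $\eta^m \Z \cap [-\eta^{-k},\eta^{-k}]$; moreover rounding cannot create new nonzero entries, so $\mathcal{M}(\widetilde{\Phi}) \le M$. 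Then $\|A_\ell - \widetilde{A}_\ell\|$ and $\|b_\ell - \widetilde{b}_\ell\|$ are bounded by $\delta$ times a polynomial in the layer dimensions $N_\ell$; but $N_\ell \le M + d \le \eta^{-k} + d$, so all these are bounded by $\delta \cdot \mathrm{poly}(\eta^{-1})$.

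Next I would control the two cases for $\rho$ uniformly. If $\rho$ is Lipschitz with constant $\lambda$, then $\rho$ applied componentwise is Lipschitz with constant $\lambda$ in, say, the $\ell^\infty$ norm. If $\rho$ is differentiable with $\rho'$ dominated by a polynomial $p$, I would use the mean value theorem: on the relevant bounded range of inputs (which I bound below), $|\rho'|$ is bounded by $\max_{|t|\le R} p(t)$, again a polynomial in $R$, so $\rho$ is effectively Lipschitz with a constant that is polynomial in the radius $R$ of the region of interest. The key supporting estimate is therefore an a priori bound on the sizes of all the intermediate activation vectors $\rho(W_{\ell-1}\cdots\rho(W_1(x)))$ for $x \in \Omega$: since $\Omega$ is bounded, $\|x\| \le R_0$, and inductively $\|W_\ell(\cdot)\| \le \|A_\ell\| \cdot (\text{previous bound}) + \|b_\ell\|$, with $\|A_\ell\| \le N_\ell \cdot \eta^{-k} \le \mathrm{poly}(\eta^{-1})$; combined with the (at most polynomially growing) bound on $\rho$ on these ranges, this yields, after $L$ steps, a bound $R$ on all intermediate quantities that is of the form $\eta^{-c(k,L,\rho)}$ for some constant $c$ depending only on $k$, $L$, and $\rho$. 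The same computation applies to $\widetilde{\Phi}$ since its weights satisfy the same bounds.

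Then comes the telescoping argument. I would write $\Phi - \widetilde{\Phi}$ as a sum of $L$ terms, the $j$-th term replacing $W_j$ by $\widetilde{W}_j$ while keeping the first $j-1$ maps exact and the last $L-j$ maps quantized (or some consistent ordering), so that each term isolates the error introduced at a single layer. Propagating a layer-$j$ error of size $\delta \cdot \mathrm{poly}(\eta^{-1})$ through the remaining $L - j$ layers multiplies it by the product of the (effective, polynomially bounded) Lipschitz constants of $\rho$ and the operator norms $\|A_\ell\|$ for $\ell > j$; each such factor is at most $\eta^{-c'}$, and there are at most $L$ of them, so the total amplification is at most $\eta^{-c''(k,L,\rho)}$. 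Summing over $j$ gives
$$
\|\Phi - \widetilde{\Phi}\|_{L^\infty(\Omega)} \le L \cdot \delta \cdot \eta^{-C(k,L,\rho)} = L \cdot \eta^{m - C(k,L,\rho)}.
$$
Choosing $m$ an integer with $m \ge C(k,L,\rho) + 1 + \lceil \log_2 L \rceil$ (and recalling $\eta < 1/2$) makes the right-hand side at most $\eta$, which is the claim; note $m$ depends only on $k$, $L$, and $\rho$, as required.

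The main obstacle is the bookkeeping in the second and third paragraphs: one must make sure that every constant introduced (operator-norm bounds on the $A_\ell$, the radius $R$ of the region on which $\rho$ and $\rho'$ are evaluated, the effective Lipschitz constants, and the per-layer perturbation sizes) is genuinely bounded by a fixed power of $\eta^{-1}$ with exponent depending only on $k$, $L$, and $\rho$, and in particular not on $d$ in a way that interacts badly — here one uses $N_\ell \le M + d$ and $M \le \eta^{-k}$ to absorb the dimension into the polynomial. The differentiable-$\rho$ case needs slightly more care than the Lipschitz case because the effective Lipschitz constant itself depends on the region, so the a priori activation-size bound must be established first, independently of the quantization, and then reused to justify the linearization; doing the induction in the right order (sizes first, then errors) avoids circularity.
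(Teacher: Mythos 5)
Your proposal is correct and essentially follows the paper's own argument: quantize every weight to the nearest point of $\eta^{m}\Z\cap[-\eta^{-k},\eta^{-k}]$, establish an a priori bound, polynomial in $\eta^{-1}$, on all intermediate activations of both the original and the quantized network, propagate the per-layer quantization error using effective Lipschitz constants that are likewise polynomially bounded in $\eta^{-1}$, and finally choose $m$ large enough (depending on $k$, $L$, $\rho$). Your telescoping over hybrid networks is just the unrolled form of the paper's layer-wise recursion for the errors $e_\ell$, and your explicit treatment of the differentiable case (activation-size bound first, then the mean value theorem) fills in a step the paper only asserts to be ``along similar lines.''
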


\begin{proof}
We prove the statement for Lipschitz-continuous $\rho$ only. The argument for differentiable activation functions with first derivative not growing faster than every polynomial is along similar lines.

Let $m\in \N$, to be specified later, and denote by $\widetilde{\Phi}$ the network that results by replacing all weights of $\Phi$ by a closest element in $\eta^{m} \Z \cap [-\eta^{-k}, \eta^{-k}]$. Set $C_{\mathrm{max}}:=\eta^{-k}$ and denote the maximum of $1$ and the total number of edge weights plus node weights that contribute to the mapping $\Phi(x)$ by $C_W$.
Note that $C_W \leq 3M \leq 3\eta^{-k}$, where the latter inequality is by assumption.
For $\ell=1,\dots, L-1$, define $\Phi^\ell: \Omega \to \R^{N_\ell}$ as
$$
\Phi^\ell(x) := \rho \, ( W_{\ell} \rho\, ( \dots \rho \, ( W_{1}(x)))), \quad \text{ for }x\in \Omega,
$$
and $\widetilde{\Phi}^\ell$ accordingly, and let, for $\ell=1,\dots, L-1$,
\begin{align*}
 e_\ell : = \left \|\Phi^\ell - \widetilde{\Phi}^\ell \right \|_{L^\infty(\Omega, \R^{N_\ell})}, \quad \ e_L : = \left \|\Phi - \widetilde{\Phi} \right \|_{L^\infty(\Omega)}.
\end{align*}
Denote the maximum of 1 and the Lipschitz constant of $\rho$ by $C_\rho$, set $C_0 := \max\{1, \sup \{|x|: x\in \Omega\}\}$, and let 
$$
C_\ell := \max\left\{\left\|\Phi^\ell\right\|_{L^\infty(\Omega, \R^{N_\ell})}, \left\|\widetilde{\Phi}^\ell\right\|_{L^\infty(\Omega, \R^{N_\ell})}\right\},\quad \text{ for }\ell = 1, \dots, L-1.
$$
Then, it is not difficult to see that
\begin{align}
e_1 \leq C_0\, C_\rho\, C_W\, \eta^m, \text{ and } e_\ell  \leq  C_\rho\,  C_W\,  C_{\ell-1}\,  \eta^m + C_\rho\,  C_W\,  C_{\mathrm{max}}\,  e_{\ell-1}, \text{ for all } \ell = 2,\dots, L-1.\label{eq:EstimateOfEEll}
\end{align}
Additionally, we observe that 
\begin{align}\label{eq:EL}
 e_L \leq C_W\, C_{L-1}\,  \eta^m + C_W\,  C_{\mathrm{max}}\,  e_{L-1}.
\end{align}
We now bound the quantity $C_\ell$ for $\ell = 1,\dots , L-1$. A simple computation, exploiting the Lipschitz-continuity of $\rho$, yields
\begin{align*}
C_\ell \leq (|\rho(0)| + C_\rho\,  C_W\,  C_{\mathrm{max}}\,  C_{\ell-1}),\quad \mbox{ for all }\ell = 1,\dots, L-1.
\end{align*}
Since $\rho$ is continuous on $\R$ we have $|\rho(0)|<\infty$ and thus, by $C_\rho, C_W,  C_{\mathrm{max}} \ge 1$, there exists $C'>0$ such that
$$
C_\ell \leq C'\,  C_{0}\,  (C_\rho\,  C_W\,  C_{\mathrm{max}})^\ell, \quad \mbox{for all  }\ell=1,\dots, L-1.
$$
%%%P5: Warum ^{-k-2}? und nicht einfach ^{-k}? Wir haben oben, dass C_W \leq 3 \eta^{-k}. Da 3 \leq 2^2 \leq \eta^{-2} kommt dann diese Abschaetzung zustande.
As $C_W$ and $C_{\mathrm{max}}$ are both bounded by $\eta^{-k-2}$, it follows that $C_\ell$ is bounded by $\eta^{-p}$ for a $p\in \N$.
We can therefore find $n\in \N$ such that 
\begin{equation}
\max\{C_{0}C_{\rho}C_W, C_W\,  C_{\mathrm{max}}, C_W\,  C_{L-1}, C_\rho\,  C_W\,  C_{\ell-1}, C_\rho\,  C_W\,  C_{\mathrm{max}}\} \leq \frac{\eta^{-n}}{2}, \quad \mbox{for all } \ell=1,\dots, L-1.
\label{eq:cboundspoly}
\end{equation}
Invoking \eqref{eq:EstimateOfEEll}, we conclude that 
\begin{align}\label{eq:inductionStep}
e_{\ell} \leq \frac{\eta^{-n}}{2}(\eta^m + e_{\ell-1}), \quad \mbox{for all } \ell = 1,\dots, L-1,
\end{align}
where we set $e_{0}=0$. We proceed by induction to prove that there exists $r\in \N$ such that for all $\ell = 1,\dots, L-1$,
\begin{align} \label{eq:ThisFollowsByInduction}
e_{\ell} \leq \eta^{m-(\ell-1) n - r}.
\end{align}
Clearly there exists $r \in \N$ such that $e_{1} \leq \eta^{m-r}$. 
Moreover, one easily verifies that the existence of an $r \in \N$ such that \eqref{eq:ThisFollowsByInduction} is satisfied for an $\ell\in \{1,\dots, L-2\}$, thanks to \eqref{eq:inductionStep}, implies the existence of an $r \in \N$ such that \eqref{eq:ThisFollowsByInduction} is satisfied for $\ell$ replaced by $\ell+1$. This concludes the induction argument. 

Using \eqref{eq:cboundspoly} and \eqref{eq:ThisFollowsByInduction} in  \eqref{eq:EL}, we finally obtain
$$
e_L \leq  \frac{\eta^{m-n}}{2}  + \frac{\eta^{m-(L-1)n - r}}{2},
$$
which yields $e_L \leq \eta$ for sufficiently large $m$.
\end{proof}

\begin{remark}
Note that the weights of the network being elements of $\eta^{m} \Z \cap [-\eta^{-k}, \eta^{-k}]$ implies that each weight can be represented by no more than
$\lceil c \log_2(\eta^{-1})\rceil$ bits, for some constant $c>0$.
\end{remark}

Proposition \ref{prop:optimalitynoquant} not only says that the connectivity growth rate can not exceed
$\mathcal{O}\! \left(\varepsilon^{-1/\gamma^\ast(\cC)}\right), \, \varepsilon \rightarrow 0$,
but its proof, by virtue of constructing an encoder-decoder pair that achieves this growth rate also provides an achievability result.
We next establish a matching strong converse in the sense of showing that for $\gamma> {\gamma^\ast(\cC)}$, the uniform approximation error
remains bounded away from zero for infinitely many $M \in \N$. To simplify terminology in the sequel, we introduce the notion of a polynomially bounded variable.
%P5: I scheint hier in 2 verschiedenen contexts verwendet worden zu sein.
\begin{definition}\label{def:polybd}
A real variable $X$ depending on the variables $z_i \in D_i \subset \R$, $i = 1, \dots, N$, is said to be \emph{polynomially bounded in $z_1, \dots, z_N$}, if there exists an $N$-dimensional polynomial $\pi$ such that $|X| \leq |\pi( z_1, \dots, z_N)|$, for all $z_i \in D_i, i=1,\dots,N$. A set of real variables $(X_j)_{j\in J}$, each depending on $z_i \in D_i \subset \R$, $i = 1, \dots, N$, is \emph{uniformly polynomially bounded in $z_1, \dots, z_N$}, if there exists an $N$-dimensional polynomial $\pi$ such that $|X_j| \leq |\pi( z_1, \dots, z_N)|$, for all $j\in J$ and all $z_i \in D_i$, $i = 1, \dots, N$. 
\end{definition}
We will refrain from explicitly specifying the $D_i$ in Definition \ref{def:polybd} whenever they are clear from the context.

\begin{remark}\label{rem:PolyBoundAndExpoBoundareTheSame}
If $D_i = \R \setminus [-B_i,B_i]$ for some $B_i \ge 1, i=1,\dots,N$, then a variable $X$ depending on $z_i \in D_i, i=1,\dots,N,$ is polynomially bounded in $z_1, \dots, z_N$ if and only if there exists a $k\in \N$ such that $|X| \leq |z_1^k \cdot z_2^k \cdot . \ . \ . \cdot  z_N^k|$, for all $z_i \in D_i$.
\end{remark}

\begin{proposition}\label{prop:optimality}
Let $d, L\in \N$, $\Omega\subset \mathbb{R}^d$ be bounded, $\pi$ be a polynomial, $\cC \subset L^2(\Omega)$, $\rho : \R \to \R$ either Lipschitz-continuous or differentiable such that $\rho'$ is dominated by an arbitrary polynomial. Then, for all $C > 0$ and $\gamma > \gamma^*(\cC)$ we have that
\begin{align} \label{eq:fundbound2}
 \sup_{f \in \cC}  \ \inf_{\Phi \in {\cNN}_{L, M, d, \rho}^\pi} \|f - \Phi\|_{L^2(\Omega)} \geq C M^{-\gamma}, \text{ for infinitely many }M \in \N.
\end{align}
%where ${\cNN}_{L, M, d, \rho}^\pi$ denotes the class of networks $\Phi \in \cNN_{L, M, d, \rho}$ with weights bounded by $|\pi(M)|$.
\end{proposition}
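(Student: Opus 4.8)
The plan is to argue by contradiction: assuming the approximation error decays strictly faster than $CM^{-\gamma}$ for all large $M$, I will manufacture an encoder--decoder pair for $\cC$ that beats the minimax code length, contradicting Proposition~\ref{prop:optimalitynoquant}. So suppose the claim is false. Then there are $C>0$, $\gamma>\gamma^*(\cC)$, and $M_0\in\N$ with
$$
\sup_{f\in\cC}\ \inf_{\Phi\in\cNN_{L,M,d,\rho}^\pi}\|f-\Phi\|_{L^2(\Omega)}<CM^{-\gamma}\quad\text{for all }M\ge M_0 .
$$
Consequently, for every $M\ge M_0$ and every $f\in\cC$ we may choose $\Phi_{M,f}\in\cNN_{L,M,d,\rho}^\pi$ with $\|f-\Phi_{M,f}\|_{L^2(\Omega)}<CM^{-\gamma}$, and by definition of $\cNN_{L,M,d,\rho}^\pi$ all weights of $\Phi_{M,f}$ are bounded in absolute value by $\pi(M)$.

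Next I quantize at a scale tied to $M$. Put $\eta:=M^{-\gamma}$ and fix $k\in\N$, depending only on $\gamma$ and $\pi$, large enough that $M\le\eta^{-k}$ and $\pi(M)\le\eta^{-k}$ for all $M\ge M_0$ (enlarging $M_0$ if necessary, and ensuring $\eta<1/2$). Lemma~\ref{lem:PolynomiallyboundedImpliesFiniteBitLength} then provides an $m\in\N$ depending only on $k,L,\rho$, and a network $\widetilde{\Phi}_{M,f}\in\cNN_{L,M,d,\rho}$ with all weights in $\eta^{m}\Z\cap[-\eta^{-k},\eta^{-k}]$ and $\|\widetilde{\Phi}_{M,f}-\Phi_{M,f}\|_{L^\infty(\Omega)}\le\eta$. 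Since $\Omega$ is bounded, $\|f-\widetilde{\Phi}_{M,f}\|_{L^2(\Omega)}\le CM^{-\gamma}+|\Omega|^{1/2}\eta\le C'M^{-\gamma}$ uniformly in $f$, with $C':=C+|\Omega|^{1/2}$; moreover, by the remark following the lemma, each weight of $\widetilde{\Phi}_{M,f}$ can be represented by at most $\lceil c_0\log_2(\eta^{-1})\rceil=\lceil c_0\gamma\log_2 M\rceil$ bits.

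Finally I assemble the learning map and invoke Proposition~\ref{prop:optimalitynoquant}. For $\varepsilon\in(0,1/2)$ let $M(\varepsilon)$ be the smallest power of $2$ exceeding $\max\{M_0,(C'/\varepsilon)^{1/\gamma}\}$, so that $C'M(\varepsilon)^{-\gamma}\le\varepsilon$ and $M(\varepsilon)\in\mathcal{O}(\varepsilon^{-1/\gamma})$, whence $\log_2 M(\varepsilon)\le c_1\log_2(\varepsilon^{-1})$ for some $c_1>0$ and all small $\varepsilon$ (and $\log_2 M(\varepsilon)$ stays bounded for the remaining $\varepsilon$). Define $\Learn(\varepsilon,f):=\widetilde{\Phi}_{M(\varepsilon),f}\in\cNN_{\infty,\infty,d,\rho}$. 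Then $\sup_{f\in\cC}\|f-\Learn(\varepsilon,f)\|_{L^2(\Omega)}\le\varepsilon$ and every weight of $\Learn(\varepsilon,f)$ is represented by at most $\lceil c\log_2(\varepsilon^{-1})\rceil$ bits for a suitable $c>0$. Proposition~\ref{prop:optimalitynoquant} now forces $\sup_{f\in\cC}\mathcal{M}(\Learn(\varepsilon,f))\notin\mathcal{O}(\varepsilon^{-1/\gamma})$; yet $\mathcal{M}(\Learn(\varepsilon,f))\le M(\varepsilon)\in\mathcal{O}(\varepsilon^{-1/\gamma})$, a contradiction that proves the proposition.

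I expect the only delicate point to be the coupling of the quantization scale $\eta$ to $\varepsilon$: one must keep $k$ — and hence the exponent $m$ furnished by Lemma~\ref{lem:PolynomiallyboundedImpliesFiniteBitLength} — independent of $\varepsilon$, so that the per-weight bit budget stays $\mathcal{O}(\log_2(\varepsilon^{-1}))$, which is precisely the hypothesis of Proposition~\ref{prop:optimalitynoquant}. This works because the weight magnitude bound $\pi(M)$ is polynomial in $M$ and because $m$ depends only on $k$, $L$, and $\rho$. Everything else is routine bookkeeping of constants together with the elementary bound $\|\cdot\|_{L^2(\Omega)}\le|\Omega|^{1/2}\|\cdot\|_{L^\infty(\Omega)}$ on the bounded domain $\Omega$.
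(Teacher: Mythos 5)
Your proposal is correct and follows essentially the same route as the paper: negate the statement, select near-optimal networks in $\cNN_{L,M,d,\rho}^\pi$, quantize their polynomially bounded weights via Lemma~\ref{lem:PolynomiallyboundedImpliesFiniteBitLength} (with the quantization exponent independent of $\varepsilon$, exactly the point you flag), and derive a contradiction with Proposition~\ref{prop:optimalitynoquant} through a suitable $\Learn$ map with $M(\varepsilon)\in\mathcal{O}(\varepsilon^{-1/\gamma})$. The only differences are bookkeeping (you parametrize by $M$ and then pick $M(\varepsilon)$, and make the $L^\infty$-to-$L^2$ step on bounded $\Omega$ explicit, whereas the paper fixes $M_\varepsilon$ first and splits the error as $2\varepsilon/3+\varepsilon/3$).
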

\begin{proof}
Let $\gamma > \gamma^*(\cC)$. Assume, towards a contradiction, that \eqref{eq:fundbound2} holds only for finitely many $M \in \N$. Then, there exists a constant $C$ such that
\eqref{eq:fundbound2} holds for no $M \in \N$ and hence there exists a constant $C$ so that
$$
	\sup_{f \in \cC}  \ \inf_{\Phi \in {\cNN}_{L, M, d, \rho}^\pi} \|f - \Phi\|_{L^2(\Omega)} \leq CM^{-\gamma}, \quad \text{ for all } M \in \N.
$$
Setting $M_\varepsilon := \lceil (\varepsilon/(3C))^{-1/\gamma}\rceil$, it follows that, for every $f \in \mathcal{C}$ and every $\varepsilon \in (0,1/2)$, there exists a neural network $\Phi_{\varepsilon,f} \in {\cNN}_{L, M_\varepsilon, d, \rho}^\pi$ such that 
$$
     \|f- \Phi_{\varepsilon,f} \|_{L^2(\Omega)} \leq 2 \sup_{f \in \cC}  \ \inf_{\Phi \in {\cNN}_{L, M_\varepsilon, d, \rho}^\pi} \|f - \Phi\|_{L^2(\Omega)} \leq 2 C M_\varepsilon^{-\gamma} \leq \frac{2 \varepsilon}{3}.
$$
As the weights of $\Phi_{\varepsilon,f}$ are polynomially bounded in $M_{\varepsilon}$, they are polynomially bounded in $\varepsilon^{-1}$. By Lemma \ref{lem:PolynomiallyboundedImpliesFiniteBitLength} and Remark \ref{rem:PolyBoundAndExpoBoundareTheSame}, there hence exists a network $\widetilde{\Phi}_{\varepsilon,f}$ 
whose weights are represented by no more than $\lceil c\log_2(\varepsilon^{-1}) \rceil$ bits, for some constant $c>0$, satisfying 
$$
    \left\|\Phi_{\varepsilon,f} - \widetilde{\Phi}_{\varepsilon,f} \right\|_{L^2(\Omega)} \leq \frac{\varepsilon}{3}.
$$
Defining
\begin{equation*}
    \Learn: \left(\!0,\frac{1}{2}\right) \times \cC  \to \cNN_{\infty, \infty, d, \rho}, \quad (\varepsilon, f) \mapsto \widetilde{\Phi}_{\varepsilon, f},
\end{equation*}
it follows that
$$
    \sup_{f \in \cC}\|f - \Learn(\varepsilon, f)\|_{L^2(\Omega)} \leq \varepsilon \quad \text{ with } \quad \mathcal{M}(\Learn(\varepsilon, f)) \leq M_\varepsilon  \in \mathcal{O}(\varepsilon^{-1/\gamma}), \, \, \varepsilon \to 0.
$$
The proof is concluded by noting that $\textbf{Learn}$ violates Proposition \ref{prop:optimalitynoquant}.
\end{proof}
We can now proceed to the proof of Theorem~\ref{thm:EffRepNN}.
\begin{proof}[Proof of Theorem~\ref{thm:EffRepNN}]
		Suppose towards a contradiction that 
		$
		\gamma_{\cNN}^{\ast, \text{eff}}(\cC, \rho) > {\gamma^\ast(\cC)}
		$. Let $\gamma \in \left(\gamma^\ast(\cC),\gamma_{\cNN}^{\ast, \text{eff}}(\cC, \rho)\right)$.
		Then, Definition~\ref{def:repoptiNN} implies that 
		there exist a polynomial $\pi, L\in \mathbb{N}$, and $C>0$ such that
		\begin{align*} 
		\sup_{f \in \mathcal{C}}\, \inf_{\Phi_{M} \in \cNN_{L, M,d,\rho}^\pi}  \|f    - \Phi_{M}\|_{L^2(\Omega)} \leq
		C M^{-\gamma}, \,\, \text{for all}\,\, M \in \N.
		\end{align*}
		This, however, constitutes a contradiction to Proposition \ref{prop:optimality}.
\end{proof}

We conclude this section with a discussion of the conceptual implications of the results established above. Proposition \ref{prop:optimalitynoquant} combined with Lemma \ref{lem:PolynomiallyboundedImpliesFiniteBitLength} establishes that
neural networks with weights polynomially bounded in $\varepsilon^{-1}$ and achieving uniform approximation error $\varepsilon$ over $\mathcal{C}$ 
cannot exhibit edge growth rate smaller than $\mathcal{O}(\varepsilon^{-1/\gamma^{*}(\mathcal{C})}), \varepsilon \rightarrow 0$; in other words, a decay of the uniform approximation error, as a function of $M$, faster than $\mathcal{O}(M^{-\gamma^{\ast}(\mathcal{C})}), M \rightarrow \infty$, is not possible. 

Note that requiring uniform approximation error $\varepsilon$ only (without imposing the constraint of the network's weights being polynomially bounded in $\varepsilon^{-1}$)
can lead to arbitrarily large rate $\gamma$ as exemplified by Theorem~\ref{pinkusthm}, which proves the existence of networks realizing an arbitrarily small approximation error over $L^{2}([0,1]^{d})$ with a finite number of nodes; in particular, the number of nodes remains constant as $\varepsilon \rightarrow 0$. However, as argued right after Theorem~\ref{pinkusthm}, these networks necessarily lead to weights that are not polynomially bounded in $\varepsilon^{-1}$.

%------------------------------------------------------------------------------------------------------------------------------
\section{Transitioning from Representation Systems to Neural Networks}\label{sec:bestapprox}
%------------------------------------------------------------------------------------------------------------------------------
The remainder of this paper is devoted to identifying function classes that are optimally representable---according to Definition~\ref{def:NNopti}---by neural networks. The mathematical technique we develop in the process is interesting in its own right as it constitutes a general framework for transferring results on function approximation through representation systems to results on approximation by neural networks.
In particular, we prove that for a given function class $\cC$ and an associated representation system $\mathcal{D}$ which satisfies certain technical conditions, there exists
a neural network with $\mathcal{O}(M)$ nonzero edge weights that achieves (up to a multiplicative constant) the same uniform error over $\cC$ as a best $M$-term approximation
in $\mathcal{D}$. This will finally lead to a characterization of function classes $\mathcal{C}$ that are
optimally representable by neural networks in the sense of Definition~\ref{def:NNopti}.

We start by stating technical conditions on representation systems for the transference principle outlined above to apply.
\begin{definition}\label{def:wellrep} Let $d \in \N$, $\Omega\subset \mathbb{R}^d$, $\rho : \R \to \R$, and $\mathcal{D} = (\varphi_i)_{i\in I}\subset L^2(\Omega)$ be a representation system. Then,
$\mathcal{D}$ is said to be \emph{representable by neural networks (with activation function $\rho$)}, if
there exist $L, R \in \N$ such that for all $\eta >0$ and every $i\in I$, there is a neural network $\Phi_{i,\eta}\in \cNN_{L, R, d, \rho}$ with
    $$
        \|\varphi_i - \Phi_{i,\eta}\|_{L^2(\Omega)}\le \eta.
    $$
   If, in addition, the weights of $\Phi_{i,\eta}\in \cNN_{L, R, d, \rho}$ are polynomially bounded in $i,\eta^{-1}$, 
   and if $\rho$ is either Lipschitz-continuous or differentiable such that $\rho'$ is dominated by an arbitrary polynomial, then we say that $\mathcal{D}$ is \em{effectively representable by neural networks (with activation function $\rho$)}.
\end{definition}
%
%
%The next result formalizes our transference principle for networks with \pp{bounded} weights in $\R$. Stimmt! bounded war ueberfluessig.
The next result formalizes our transference principle for networks with weights in $\R$.
%relates best $M$-term approximation properties of representation systems
%to $M$-edge approximation properties for neural networks.
%
\begin{theorem}\label{theo:ApproxOfNeuralNetworks}
Let $d \in \N$, $\Omega \subset \R^d$, and $\rho : \R \to \R$. Suppose that  $\mathcal{D} = (\varphi_i)_{i\in I} \subset L^2(\Omega)$ is representable by neural networks. Let $f\in L^2(\Omega)$
and, for $M\in \mathbb{N}$, let $f_M = \sum_{i\in I_M} c_i \varphi_i$, $I_M\subset I$, $\#I_M = M$,
satisfy
$$
\| f - f_M\|_{L^2(\Omega)} \leq \varepsilon,
$$
where $\varepsilon \in (0,1/2)$.
Then, there exist $L \in \N$ (depending on $\mathcal{D}$ only) and a neural network $\Phi(f,M)\in \cNN_{L, M', d, \rho}$ with $M' \in \mathcal{O}(M)$, satisfying
\begin{align}\label{eq:ApproxErrorOfDictNetwork}
    \|f - \Phi(f,M)\|_{L^2(\Omega)} \le 2\varepsilon.
\end{align}
In particular, for all function classes $\cC\subset L^2(\Omega)$ it holds that
\begin{align} \label{eq:theSameStatementAgainButWithSymbols}
    \gamma^\ast_{\cNN}(\cC, \rho)\geq \gamma^\ast(\cC,\mathcal{D}).
\end{align}%
\end{theorem}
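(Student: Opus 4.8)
The plan is to construct, given a best $M$-term approximant $f_M = \sum_{i \in I_M} c_i \varphi_i$, a neural network that approximates $f_M$ (and hence $f$) with comparable error and with connectivity $\mathcal{O}(M)$. The key idea is to build the network by first approximating each participating atom $\varphi_i$ by a small network, then combining these sub-networks in parallel in the first $L$ layers, and finally adding a single output layer that forms the weighted linear combination $\sum_i c_i \Phi_{i,\eta}$ with the coefficients $c_i$ realized as the edge weights feeding into the output node. Concretely, since $\mathcal{D}$ is representable by neural networks, there are fixed $L, R \in \N$ such that for the prescribed tolerance $\eta$ (to be chosen below) and each $i \in I_M$ there is $\Phi_{i,\eta} \in \cNN_{L,R,d,\rho}$ with $\|\varphi_i - \Phi_{i,\eta}\|_{L^2(\Omega)} \le \eta$. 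Stacking the $M$ networks $\Phi_{i,\eta}$ side by side produces a single depth-$L$ network with $d$-dimensional input and $M$ outputs; it has connectivity at most $MR$, hence $\mathcal{O}(M)$. One then appends an affine output map $y \mapsto \sum_{i \in I_M} c_i y_i$, which adds $M$ more edges, for a total of $M' \le M(R+1) \in \mathcal{O}(M)$ edges, and produces a network $\Phi(f,M) \in \cNN_{L+1,M',d,\rho}$. (A minor technical point: if $\rho$ acts on the output layer in our Definition~\ref{def:NN} convention one only counts layers with $W_\ell$, so the output affine map can be absorbed — the paper's Definition~\ref{def:NN} has $W_L$ applied \emph{after} the last nonlinearity, so this is literally the construction.)

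Next I would control the error. By the triangle inequality,
\begin{align*}
\|f - \Phi(f,M)\|_{L^2(\Omega)} \le \|f - f_M\|_{L^2(\Omega)} + \Big\| \sum_{i \in I_M} c_i(\varphi_i - \Phi_{i,\eta}) \Big\|_{L^2(\Omega)} \le \varepsilon + \Big(\sum_{i \in I_M} |c_i|\Big)\,\eta.
\end{align*}
Since $M$ and the coefficients $c_i$ are given (fixed) for the approximant under consideration, the quantity $S := \sum_{i \in I_M} |c_i|$ is a finite constant, so choosing $\eta \le \varepsilon / S$ (and $\eta \le \varepsilon$ to be safe, which does not change $L$ or $R$) gives $\|f - \Phi(f,M)\|_{L^2(\Omega)} \le 2\varepsilon$, which is \eqref{eq:ApproxErrorOfDictNetwork}. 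The crucial structural point that makes this work cleanly is that $L$ and $R$ do \emph{not} depend on $\eta$ — this is exactly the content of Definition~\ref{def:wellrep} — so shrinking $\eta$ costs nothing in depth or per-atom width, only the atom-networks' weights may blow up (irrelevant here since we are in the $\R$-valued, unquantized setting of Theorem~\ref{theo:ApproxOfNeuralNetworks}).

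Finally, the consequence \eqref{eq:theSameStatementAgainButWithSymbols} follows by applying the above uniformly over $f \in \cC$. If $\gamma < \gamma^\ast(\cC,\mathcal{D})$, then by Definition~\ref{def:optimalApproximationRate} there is $C>0$ with $\sup_{f \in \cC} \Gamma_M^{\mathcal{D}}(f) \le C M^{-\gamma}$ for all $M$; picking for each $f$ a near-optimal $M$-term approximant $f_M$ with $\|f - f_M\| \le 2 C M^{-\gamma}$ (say), the construction yields $\Phi(f,M) \in \cNN_{L,M',d,\rho}$ with $M' \in \mathcal{O}(M)$ and $\|f - \Phi(f,M)\| \le 4 C M^{-\gamma}$, whence $\sup_{f\in\cC} \Gamma_{M'}^{\cNN}(f) \in \mathcal{O}((M')^{-\gamma})$ and thus $\gamma^\ast_{\cNN}(\cC,\rho) \ge \gamma$; letting $\gamma \uparrow \gamma^\ast(\cC,\mathcal{D})$ finishes it. The main thing to be careful about — the only place any real thought is needed — is the uniformity: the sum $S = \sum_{i\in I_M}|c_i|$ must be handled so that the choice of $\eta$ can be made to depend only on $\varepsilon$ (and $f$, $M$) in a way compatible with the statement; since Theorem~\ref{theo:ApproxOfNeuralNetworks} fixes a single pair $(f,M)$ and a single approximant $f_M$, this is immediate, but when passing to \eqref{eq:theSameStatementAgainButWithSymbols} one should note that the per-$f$ choice of $\eta$ is allowed and that $\mathcal{O}(M) \to \mathcal{O}(M')$ reindexing only affects the rate by a constant. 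I expect no genuine obstacle; the construction is essentially a bookkeeping exercise once Definition~\ref{def:wellrep} is in hand.
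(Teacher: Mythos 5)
Your proposal is correct and follows essentially the same route as the paper's proof: place the $M$ subnetworks $\Phi_{i,\eta}$ in parallel with tolerance $\eta \sim \varepsilon/\max\{1,\sum_{i\in I_M}|c_i|\}$, fold the coefficients $c_i$ into the affine output map (legitimate precisely because the last layer in Definition~\ref{def:NN} is affine, as you note), and conclude by the triangle inequality. Your explicit near-optimal-approximant argument for \eqref{eq:theSameStatementAgainButWithSymbols} is just a spelled-out version of what the paper delegates to Definitions~\ref{def:optimalApproximationRate} and \ref{def:optimalApproximationRateNN}, so there is no substantive difference.
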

\begin{proof}
By representability of $\mathcal{D}$ according to Definition~\ref{def:wellrep}, it follows that there exist $L,R\in\mathbb{N}$, 
%depending on $\mathcal{D}$ only, 
such that for each $i \in I_M$ and for $\eta:=\varepsilon / \max\{1,\sum_{i\in I_M}|c_i|\}$, there exists a neural network $\Phi_{i,\eta} \in \cNN_{L, R, d, \rho}$ with
\begin{align} \label{eq:2}
\|\varphi_i - \Phi_{i,\eta}\|_{L^2(\Omega)} \le \eta.
\end{align}
Let then $\Phi(f,M)$ be the neural network consisting of the networks $(\Phi_{i,\eta})_{i \in I_M}$ operating in parallel, all with the same
input, and summing their one-dimensional outputs (see Figure \ref{fig:network} below for an illustration) with weights $(c_i)_{i\in I_M}$ according to
\begin{equation}
\label{eq:NNadd}
    \Phi(f,M)(x):= \sum_{i\in I_M}c_i\Phi_{i,\eta}(x), \quad \text{ for }x \in \Omega.
\end{equation}
This construction is legitimate as all networks $\Phi_{i,\eta}$ have the same number of layers and the last layer of a neural network according to Definition~\ref{def:NN} implements an affine function only (without subsequent application of the activation function $\rho$). Then, $\Phi(f,M) \in \cNN_{L, RM, d, \rho}$, and application of the triangle inequality together with \eqref{eq:2} 
%and \pp{$\eta=\varepsilon / \max\{ 1, \sum_{i\in I_M}|c_i|\}$} 
yields $\left\|f_M - \Phi(f,M)\right\|_{L^2(\Omega)} \le \varepsilon$.
Another application of the triangle inequality according to 
\begin{align*}
    \|f - \Phi(f,M)\|_{L^2(\Omega)} \leq \| f - f_M\|_{L^2(\Omega)}  + \| f_M -  \Phi(f,M)\|_{L^2(\Omega)}  \leq 2\varepsilon 
\end{align*}
finalizes the proof of \eqref{eq:ApproxErrorOfDictNetwork} which by Definitions \ref{def:optimalApproximationRate} and \ref{def:optimalApproximationRateNN} implies \eqref{eq:theSameStatementAgainButWithSymbols}.
\end{proof}

Theorem~\ref{theo:ApproxOfNeuralNetworks} shows that we can restrict ourselves to the approximation of the individual elements of a representation
system by neural networks with the only constraint being that the number of nonzero edge weights in the individual networks must admit a uniform upper bound.
Theorem~\ref{theo:ApproxOfNeuralNetworks} does, however, not guarantee that the weights of the network $\Phi(f,M)$ can be represented with no more than $\lceil c\log_2(\varepsilon^{-1}) \rceil$ bits when the overall approximation error is proportional to $\varepsilon$. This will again be accomplished through a transfer argument, applied to representation systems $\mathcal{D}$ satisfying slightly more stringent technical conditions.

\begin{theorem}\label{theo:EncodeOfNeuralNetworks}Let $d\in \N$, $\Omega\subset \mathbb{R}^d$ be bounded, and $\mathcal{C}\subset L^2(\Omega)$. Suppose that
  the representation system $\mathcal{D}=(\varphi_i)_{i\in \mathbb{N}}\subset L^2(\Omega)$ is effectively representable by neural networks.
    Then, for all $\gamma <\gamma^{\ast, \text{eff}}(\mathcal{C},\mathcal{D})$, there exist a polynomial $\pi$, constants $c > 0, L \in \N$, and a map
    $$
        \Learn: \left(\!0,\frac{1}{2}\right) \times L^2(\Omega) \to \cNN_{L, \infty, d, \rho}^{\pi} ,
    $$
    such that for every $f\in \cC$ the weights in $\Learn (\varepsilon,f)$ can be represented by no more than $\lceil c \log_2(\varepsilon^{-1}) \rceil$ bits while
    $\|f - \Learn (\varepsilon,f)\|_{L^2(\Omega)}\le \varepsilon$ and $\mathcal{M}(\Learn (\varepsilon,f)) \in \mathcal {O}(\varepsilon^{-1/\gamma}), \varepsilon \rightarrow 0$.
    
    %following statements hold:
    %\begin{itemize}
     %   \item[(i)] there exist $m,k \in \N$ such that each weight of the network $\Learn (\varepsilon,f)$ is an element of $\varepsilon^m \Z \cap [-\varepsilon^{-k},\varepsilon^{-k}]$,
      %  \item[(ii)] the error bound $\|f - \Learn (\varepsilon,f)\|_{L^2(\Omega)}\le \varepsilon
      %  $ holds true, and
      %  \item[(iii)] the neural network $\Learn (\varepsilon,f)$ has at most $c\varepsilon^{-1/\gamma}$ edges.
   % \end{itemize}
\end{theorem}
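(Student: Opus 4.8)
The plan is to exhaust an error budget of $\varepsilon$ in three equal parts: the first third on a best $M$-term approximation of $f$ in $\mathcal{D}$, the second on replacing the chosen atoms $\varphi_i$ by neural networks via the parallel construction of Theorem~\ref{theo:ApproxOfNeuralNetworks}, and the last third on quantizing the resulting weights via Lemma~\ref{lem:PolynomiallyboundedImpliesFiniteBitLength}. Fix $\gamma<\gamma^{\ast,\text{eff}}(\mathcal{C},\mathcal{D})$. Since the set of exponents admissible in Definition~\ref{def:polydepth} is downward closed (replacing $M^{-\gamma''}$ by the larger $M^{-\gamma}$ for $M\geq 1$), there are a polynomial $\pi_0$, a constant $D>0$, and — from the $\mathcal{O}(M^{-\gamma})$ decay — constants $C_1>0$, $M_0\in\N$ such that for every $f\in\cC$ and every $M\geq M_0$ one can pick $I_M\subset\{1,\dots,\pi_0(M)\}$ with $\#I_M=M$ and coefficients $(c_i)_{i\in I_M}$ with $\max_i|c_i|\leq D$ and $\|f-\sum_{i\in I_M}c_i\varphi_i\|_{L^2(\Omega)}\leq C_1 M^{-\gamma}$. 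For $\varepsilon$ below a threshold $\varepsilon_0$ I would set $M_\varepsilon:=\lceil(3C_1/\varepsilon)^{1/\gamma}\rceil$, so that $M_\varepsilon\geq M_0$, $C_1 M_\varepsilon^{-\gamma}\leq\varepsilon/3$, $M_\varepsilon\in\mathcal{O}(\varepsilon^{-1/\gamma})$, and — crucially for later — $\varepsilon^{-1}\leq C_1^{-1}M_\varepsilon^{\gamma}$; write $f_{M_\varepsilon}=\sum_{i\in I_{M_\varepsilon}}c_i\varphi_i$ for the corresponding approximant.

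Next I would invoke effective representability of $\mathcal{D}$ (Definition~\ref{def:wellrep}): there are $L,R\in\N$, depending only on $\mathcal{D}$, so that for $\eta_\varepsilon:=\varepsilon/(3DM_\varepsilon)$ and each $i\in I_{M_\varepsilon}$ there is a network $\Phi_{i,\eta_\varepsilon}\in\cNN_{L,R,d,\rho}$ with $\|\varphi_i-\Phi_{i,\eta_\varepsilon}\|_{L^2(\Omega)}\leq\eta_\varepsilon$ whose weights are polynomially bounded in $i,\eta_\varepsilon^{-1}$. Assembling $\Phi(f,M_\varepsilon):=\sum_{i\in I_{M_\varepsilon}}c_i\Phi_{i,\eta_\varepsilon}$ exactly as in the proof of Theorem~\ref{theo:ApproxOfNeuralNetworks} yields a network in $\cNN_{L,RM_\varepsilon,d,\rho}$ with $\|f_{M_\varepsilon}-\Phi(f,M_\varepsilon)\|_{L^2(\Omega)}\leq\sum_i|c_i|\eta_\varepsilon\leq DM_\varepsilon\eta_\varepsilon=\varepsilon/3$, hence $\|f-\Phi(f,M_\varepsilon)\|_{L^2(\Omega)}\leq 2\varepsilon/3$. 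The key point here is the weight bound: every participating index satisfies $i\leq\pi_0(M_\varepsilon)$ and $\eta_\varepsilon^{-1}=3DM_\varepsilon/\varepsilon$, both polynomial in $\varepsilon^{-1}$ since $M_\varepsilon\in\mathcal{O}(\varepsilon^{-1/\gamma})$, so the inner weights are polynomially bounded in $\varepsilon^{-1}$; the outer weights are the $c_i$, bounded by $D$; thus all weights of $\Phi(f,M_\varepsilon)$ are bounded by $\pi_1(\varepsilon^{-1})$ for a fixed polynomial $\pi_1$, uniformly in $f$.

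Finally I would quantize. Applying Lemma~\ref{lem:PolynomiallyboundedImpliesFiniteBitLength} with quantization error $\eta_q:=\varepsilon/(3\max\{1,|\Omega|^{1/2}\})\in(0,1/2)$: since $\mathcal{M}(\Phi(f,M_\varepsilon))\leq RM_\varepsilon$ and the weights are at most $\pi_1(\varepsilon^{-1})$, both are $\leq\eta_q^{-k}$ for a fixed $k\in\N$ once $\varepsilon$ is small enough, so the lemma produces $\widetilde{\Phi}_{\varepsilon,f}\in\cNN_{L,RM_\varepsilon,d,\rho}$ with $\|\widetilde{\Phi}_{\varepsilon,f}-\Phi(f,M_\varepsilon)\|_{L^\infty(\Omega)}\leq\eta_q$, hence $\|\widetilde{\Phi}_{\varepsilon,f}-\Phi(f,M_\varepsilon)\|_{L^2(\Omega)}\leq\varepsilon/3$, with all its weights lying in $\eta_q^{m}\Z\cap[-\eta_q^{-k},\eta_q^{-k}]$ for some $m$ depending only on $k,L,\rho$. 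By the Remark following that lemma and $\eta_q^{-1}\in\mathcal{O}(\varepsilon^{-1})$, each weight is then representable by $\lceil c\log_2(\varepsilon^{-1})\rceil$ bits for a suitable $c>0$; and since those weights are $\leq\eta_q^{-k}$ while $\varepsilon^{-1}\leq C_1^{-1}M_\varepsilon^{\gamma}\leq(RM_\varepsilon)^{\gamma}/C_1$, they are bounded by a fixed polynomial evaluated at the connectivity bound $RM_\varepsilon$, which places $\widetilde{\Phi}_{\varepsilon,f}$ in $\cNN_{L,RM_\varepsilon,d,\rho}^{\pi}\subset\cNN_{L,\infty,d,\rho}^{\pi}$ for an appropriate polynomial $\pi$. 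I would then set $\Learn(\varepsilon,f):=\widetilde{\Phi}_{\varepsilon,f}$ for $f\in\cC$, $\varepsilon<\varepsilon_0$; extend to $\varepsilon\in[\varepsilon_0,1/2)$ by $\Learn(\varepsilon,f):=\Learn(\varepsilon_0,f)$; and to $f\notin\cC$ by letting $\Learn(\varepsilon,f)$ be an $L$-layer network realizing the zero function. Enlarging $\pi$ and $c$ by constants absorbs these cases, and the triangle inequality gives $\|f-\Learn(\varepsilon,f)\|_{L^2(\Omega)}\leq 2\varepsilon/3+\varepsilon/3=\varepsilon$ while $\mathcal{M}(\Learn(\varepsilon,f))\leq RM_\varepsilon\in\mathcal{O}(\varepsilon^{-1/\gamma})$.

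The step I expect to be the main obstacle — and the reason the preceding definitions are shaped as they are — is the quantization. Turning the real-weighted network of Theorem~\ref{theo:ApproxOfNeuralNetworks} into one whose weights cost only $\mathcal{O}(\log_2(\varepsilon^{-1}))$ bits each, without inflating the error or the connectivity, is precisely what forces Definition~\ref{def:wellrep} to demand polynomially bounded weights and Definition~\ref{def:polydepth} to confine the search to indices in $\{1,\dots,\pi_0(M)\}$: together these are exactly what makes the weights of $\Phi(f,M_\varepsilon)$ polynomial in $\varepsilon^{-1}$, the hypothesis of Lemma~\ref{lem:PolynomiallyboundedImpliesFiniteBitLength}. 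The remaining bookkeeping — reconciling ``polynomial in $\varepsilon^{-1}$'' with ``polynomial in the connectivity bound'' through $\varepsilon^{-1}\lesssim M_\varepsilon^{\gamma}$, and the harmless passage from $L^\infty(\Omega)$ to $L^2(\Omega)$ on the bounded set $\Omega$ — is routine.
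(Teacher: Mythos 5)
Your proposal is correct and follows essentially the same route as the paper's proof: an effective best $M$-term approximant from Definition~\ref{def:polydepth}, the parallel network assembly of Theorem~\ref{theo:ApproxOfNeuralNetworks} under effective representability, and quantization via Lemma~\ref{lem:PolynomiallyboundedImpliesFiniteBitLength}, with $M_\varepsilon\sim\varepsilon^{-1/\gamma}$ and a split error budget (thirds for you, $\delta_M/2+\delta_M/4+\delta_M/4$ in the paper). The only differences are cosmetic bookkeeping you make explicit (the $L^\infty$-to-$L^2$ passage on bounded $\Omega$, the extension of $\Learn$ to $f\notin\cC$ and large $\varepsilon$, and the reconciliation of weight bounds in $\varepsilon^{-1}$ with the polynomial-in-connectivity requirement), which the paper treats implicitly.
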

\begin{remark}
Theorem~\ref{theo:EncodeOfNeuralNetworks} implies that if $\mathcal{D}$ optimally represents the function class $\cC$ in the sense of Definition~\ref{def:repopti} and at the same time is effectively representable by neural networks, then $\cC$ is optimally representable by neural networks in the sense of Definition~\ref{def:NNopti}.
\end{remark}

\begin{proof}[Proof of Theorem~\ref{theo:EncodeOfNeuralNetworks}]

    Let $M\in \mathbb{N}$ and $\gamma <\gamma^{\ast, \text{eff}}(\cC,\mathcal{D})$. 
    According to Definition~\ref{def:polydepth}, there exist constants $C,D>0$ and a polynomial $\pi$ such that for every $f\in \cC$, there is a subset $I_M\subset \{1,\dots , \pi(M)\}$, 
    and coefficients $(c_i)_{i\in I_M}$ with $\max_{i\in I_M} \! |c_i|\le D$ so that 
    \begin{equation}
         \left\|f - \sum_{i \in I_M} c_i \varphi_i\right\|_{L^2(\Omega)} \le \frac{C M^{-\gamma}}{2} =: \frac{\delta_{M}}{2}.
    \end{equation}
    We only need to consider the case $\delta_{M} \leq 1/2$ as will become clear below.
    %, by (\ref{eq:ErrorBoundedByEpsM}), we only need to deal with that case.
By effective representability according to Definition~\ref{def:wellrep}, there are $L, R\in\mathbb{N}$ such that for each $i \in I_M$ and with $\eta:= \delta_{M} / \max\{1, 4 \sum_{i\in I_M}|c_i| \}$, there exists a neural network $\Phi_{i,\eta} \in \cNN_{L, R, d, \rho}$ (with $\rho$ either Lipschitz-continuous or differentiable such that $\rho'$ is dominated by an arbitrary polynomial) satisfying
        \begin{align*} 
        \left \|\varphi_i - \Phi_{i,\eta}\right\|_{L^2(\Omega)} \le \eta.
        \end{align*}
        In addition, the weights of $\Phi_{i,\eta}$ are polynomially bounded in $i, \eta^{-1}$.
    Let then $\Phi(f,M) \in \cNN_{L,RM,d,\rho}$ be the neural network consisting of the networks $(\Phi_{i,\eta})_{i \in I_M}$ operating in parallel, according to \eqref{eq:NNadd}. 
    We conclude that 
$$
   \left\|\sum_{i \in I_M} c_i \varphi_i -   \Phi(f,M)\right\|_{L^2(\Omega)} \leq \frac{\delta_M}{4}.
$$
%        
    %Next, note that by effective representability of $\cC$ in $\mathcal{D}$, the weights $(c_i)_{i\in I_M}$ are uniformly bounded, independently of $f\in \cC$. 
    As the weights of the networks $\Phi_{i,\eta}$ are polynomially bounded in $i, \eta^{-1}$ and $i\le \pi(M), \delta_{M} \sim M^{-\gamma}$, it follows that the weights of $\Phi(f,M)$ are polynomially bounded in $\delta_M^{-1}$.
    $$
        \left\|\Phi(f,M) - \widetilde{\Phi}(f,M)\right\|_{L^2(\Omega)} \leq \frac{\delta_M}{4},
    $$
    and all weights of $\widetilde{\Phi}(f,M)$ can be represented with no more than $\lceil c \log_{2}(\delta_M^{-1})\rceil$ bits, for some $c>0$.
    %    are elements of $(\delta/4)^m \Z \cap [-(\delta/4)^{-k},(\delta/4)^{-k}]$.
    Moreover, we have
    \begin{align} \nonumber
            \left\|f - \widetilde{\Phi}(f,M)\right\|_{L^2(\Omega)} \leq \left\|f - \sum_{i \in I_M} c_i \varphi_i \right\|_{L^2(\Omega)} &+ \left\|\sum_{i \in I_M} c_i \varphi_i - \Phi(f,M) \right\|_{L^2(\Omega)}\\
            & \quad +  \left\|\Phi(f,M) - \widetilde{\Phi}(f,M)\right\|_{L^2(\Omega)} \leq \delta_M  = CM^{-\gamma}. \label{eq:ErrorBoundedByEpsM}
    \end{align}
    
    For $\varepsilon \in (0, 1/2)$, we now set
    $$
        \Learn (\varepsilon,f):= \widetilde{\Phi}(f,M_\varepsilon),
    $$
    where 
    \begin{align} \label{eq:DefOfLearnForSmallEps}
        M_\varepsilon := \left\lceil \left(\frac{C}{\varepsilon}\right)^{\frac{1}{\gamma}} \right \rceil.
    \end{align}
    With this choice of $M_\varepsilon$, we have $CM_\varepsilon^{-\gamma} \leq \varepsilon$, which, when used in \eqref{eq:ErrorBoundedByEpsM}, yields
    \begin{align}\label{eq:TheConstructionOfLearnForSmallEpsilon}
        \left\|f - \Learn (\varepsilon,f)\right\|_{L^2(\Omega)} \leq  \varepsilon.
    \end{align}
    Since, by construction, $\Learn (\varepsilon,f)$ has $R M_\varepsilon$ edges and $M_\varepsilon\leq C^{1/\gamma} \varepsilon^{-1/\gamma}+1 \leq 2 C^{1/{\gamma}} \varepsilon^{-1/{\gamma}}$, it follows that $\Learn (\varepsilon,f)$ has at most $2RC^{1/{\gamma}} \varepsilon^{-1/{\gamma}}$ edges. 
    Moreover, as all weights of $\Learn (\varepsilon,f)$ can be represented by no more than $\lceil c\log_2(\delta^{-1}_{M_\varepsilon}) \rceil$ bits, it follows from 
    $\delta_{M_{\varepsilon}} \sim M_{\varepsilon}^{-\gamma}\sim \varepsilon$ that they can be represented by no more than     
    $\lceil c'\log_2(\varepsilon^{-1})\rceil$ bits, for some $c'>0$. This concludes the proof.
\end{proof}

%
%------------------------------------------------------------------------------------------------------------------------------
\section{All Affine Representation Systems are Effectively Representable by Neural Networks}\label{sec:optimalapprox}
%------------------------------------------------------------------------------------------------------------------------------
%
This section shows that a large class of representation systems, namely \emph{affine systems}, defined below, are effectively representable by neural networks. Affine systems include as special cases wavelets, ridgelets, curvelets, shearlets, $\alpha$-shearlets, and more generally $\alpha$-molecules. Combined with Theorem~\ref{theo:EncodeOfNeuralNetworks} the results in this section establish that any function class that is optimally represented by an arbitrary affine system is optimally represented by neural networks in the sense of Definition~\ref{def:NNopti}.

Clearly, such strong statements are possible only under restrictions on the choice of the activation function for the approximating neural networks.

\subsection{Choice of Activation Function}\label{subsec:rects}

We consider two classes of activation functions, namely sigmoidal functions and smooth approximations of rectified linear units. We start with the formal definition of sigmoidal activation functions as considered in \cite{Cybenko1989, Mhaskar1993, Mhaskar-Micchelli,ChuXM1994networksforlocApprox}.

\begin{definition}\label{def:sigmoidal}
A continuous function $\rho: \R \to \R$ is called a {\em sigmoidal function
of order $k \in \N$, $k\geq 2$}, if there exists $C>0$ such that
\[
\lim_{x \to - \infty} \frac{1}{x^k}\rho(x) = 0, \quad \lim_{x \to \infty} \frac{1}{x^k}\rho(x) = 1, \quad \mbox{and} \quad
|\rho(x)| \leq C(1+ |x|)^k, \text{ for } x \in \R.
\]
A differentiable function $\rho$ is called {\em strongly sigmoidal of order $k$}, if there exist constants $a,b,C>0$ such that
\begin{align*}
\left| \frac{1}{x^k}\rho(x)\right| \le C|x|^{-a}, \,\, \mbox{ for } x<0,\quad  \left|\frac{1}{x^k}\rho(x) - 1\right|\le Cx^{-a},\,\, \mbox{ for }x\geq 0, \quad \text{ and }\\
|\rho(x)| \leq C(1+ |x|)^k , \,\,  \left|\frac{d}{dx}\rho(x)\right|\le C |x|^b, \quad \mbox{for }x\in \R.
\end{align*}
\end{definition}
One of the most widely used activation functions is the so-called rectified linear unit (ReLU) given by $x\mapsto \max\{0, x\}$.
The second class of activation functions we consider here are smooth versions of the ReLU.
\begin{definition}\label{def:admissible}
Let $\rho: \R \to \R^+$, $\rho \in C^\infty(\R)$ satisfy
\[
 \rho(x) =
 \left\{ \begin{array}{rcl} 0, &  \text{ for }x \leq 0,\\ x, &  \text{ for } x \geq K, \end{array} \right.
\]
for some constant $K > 0$. Then, we call $\rho$ an \emph{admissible smooth activation function}.
\end{definition}
The reason for considering these two specific classes of activation functions resides in the fact that neural networks based thereon allow economical representations of multivariate bump functions, which, in turn, leads to effective representation of all affine systems (built from bump functions) by neural networks. Approximation of multivariate bump functions using sparsely connected neural networks is a classical topic in neural network theory \cite{yann1987modeles}. What is new here is the aspect of quantized weights and rate-distortion optimality.

A class of bump functions of particular importance in wavelet theory are $B$-splines.
In \cite{ChuXM1994networksforlocApprox} it was shown that $B$-splines can be parsimoniously approximated by neural networks with sigmoidal activation functions. It is instructive to recall this result. To this end, for $m\in \N$, we denote the univariate cardinal $B$-spline
of order $m \in \N$ by $N_m$, i.e., $N_1 = \chi_{[0,1]}$, where $\chi_{[0,1]}$ denotes the characteristic function of the interval ${[0,1]}$, and $N_{m+1} = N_{m} * \chi_{[0,1]}$, for all $m \ge 1$. Multivariate $B$-splines are simply tensor products of univariate $B$-splines. Specifically, we denote, for $d\in \mathbb{N}$, the $d$-dimensional cardinal $B$-spline of order $m$ by $N_m^d$.

\begin{theorem}[\cite{ChuXM1994networksforlocApprox}, Thm.~4.2]\label{thm:ApproxWithSplines}
Let $d,m,k\in \N$, and take $\rho$ to be a sigmoidal function of order $k\geq 2$. Further, let $L:= \lceil\log_2(md - d)/ \log_2(k) \rceil+1$. Then, there is $M\in \N$, possibly dependent on $d,m,k$, such that for all $D,\varepsilon>0$, there exists a neural network $\Phi_{D,\varepsilon} \in \cNN_{L,M,d,\rho}$ with
\begin{align*}
    \|N^d_{m} - \Phi_{D,\varepsilon}\|_{L^2([-D,D]^d)} \leq \varepsilon.
\end{align*}
\end{theorem}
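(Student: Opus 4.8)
The plan is to reduce the approximation of the $d$-dimensional cardinal $B$-spline $N_m^d$ to the approximation of the univariate $B$-spline $N_m$ and to exploit the tensor-product structure $N_m^d(x_1,\dots,x_d) = \prod_{j=1}^d N_m(x_j)$. The heart of the matter is the univariate statement: a cardinal $B$-spline $N_m$ is a piecewise polynomial of degree $m-1$ with compact support, and such functions can be approximated arbitrarily well on a bounded interval by shallow neural networks with sigmoidal activation of order $k$. First I would recall the construction underlying \cite{ChuXM1994networksforlocApprox}: since $\rho$ is sigmoidal of order $k$, the network can implement (approximately, up to error $\eta$, uniformly on $[-D,D]$) any monomial $x\mapsto x^j$ for $j\le k$ by taking an appropriate finite difference of dilated and shifted copies of $\rho$; iterating this over $L = \lceil\log_2(md-d)/\log_2(k)\rceil+1$ layers allows the network to produce polynomials of degree up to $k^{L-1}\ge md-d$ (which is the degree needed for the $d$-fold product of degree-$(m-1)$ pieces), and combining polynomial pieces with localizing bump factors yields any fixed compactly supported piecewise polynomial. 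The key point for the theorem as stated is that the number of neurons $M$ and the number of layers $L$ can be fixed independently of $\varepsilon$ and $D$; only the weights scale with $\varepsilon^{-1}$ and $D$.

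The main steps I would carry out are: (i) establish the univariate case --- there exist $L_1 := \lceil\log_2(m-1)/\log_2(k)\rceil + 1$ and $M_1\in\N$ so that for every $D,\varepsilon>0$ there is $\psi_{D,\varepsilon}\in\cNN_{L_1,M_1,1,\rho}$ with $\|N_m - \psi_{D,\varepsilon}\|_{L^2([-D,D])}$ as small as desired, using that $N_m$ is a finite sum of truncated-power functions $(x-t)_+^{m-1}$ and that truncated powers, being products of a polynomial with a step, are representable to within $\eta$ by fixed-size sigmoidal networks; (ii) form the $d$-fold tensor product $\prod_{j=1}^d N_m(x_j)$. For step (ii) the cleanest route is to observe that a product of $d$ one-dimensional functions can itself be realized by a network: one feeds each coordinate $x_j$ into a copy of the univariate $B$-spline sub-network, obtaining approximations $y_j\approx N_m(x_j)$, and then implements the multiplication map $(y_1,\dots,y_d)\mapsto \prod_j y_j$ using a further fixed-depth network built from the polynomial-approximation primitives of the same activation function (multiplication of two bounded reals is a degree-$2$ polynomial, and a $d$-fold product is obtained by a balanced binary tree of such multiplications, adding $\lceil\log_2 d\rceil$ extra layers and $O(d)$ neurons, all independent of $\varepsilon$). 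One then needs the depths of the univariate blocks and the multiplication tree to match; this is arranged by padding the shorter branches with identity layers (the identity is realizable since $\rho$ is sigmoidal and we can again difference shifted copies to get the linear map), giving a single network of depth $L = \lceil\log_2(md-d)/\log_2(k)\rceil+1$ as claimed.

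The error analysis then proceeds by a standard triangle-inequality/telescoping argument: on the bounded cube $[-D,D]^d$ the quantities $N_m(x_j)$ and their network approximants are uniformly bounded, so an error of $\eta$ in each factor propagates to an error of $O(\eta)$ in the product, and an error of $\eta$ in the multiplication sub-network adds another $O(\eta)$; choosing $\eta$ proportional to $\varepsilon$ (with constant depending on $d$, $m$, $k$, and $D$ only through the weights, not through $M$ or $L$) gives $\|N_m^d - \Phi_{D,\varepsilon}\|_{L^2([-D,D]^d)}\le\varepsilon$ after accounting for the finite volume $(2D)^d$ of the cube.

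The step I expect to be the main obstacle is step (ii), namely realizing the multiplication map by a network with the \emph{same fixed number of neurons} regardless of $\varepsilon$ while keeping the total depth exactly $\lceil\log_2(md-d)/\log_2(k)\rceil+1$ --- one must verify that the depth budget of the tensor/multiplication construction does not exceed that of a single $md-d$-degree polynomial approximation, which is precisely why the bound is stated in terms of $md-d$ rather than $m-1$: the $d$-fold product of degree-$(m-1)$ polynomials has degree $d(m-1)=md-d$, so a single ``polynomial of degree $md-d$'' network already has enough depth, and the construction must be organized (via finite differences producing the full product monomial in one pass through $L$ layers) so as not to pay the extra $\lceil\log_2 d\rceil$ layers that a naive binary multiplication tree would incur. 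Handling this bookkeeping carefully --- essentially showing that a fixed-size sigmoidal network of depth $\lceil\log_2(md-d)/\log_2(k)\rceil+1$ can produce \emph{any} fixed $d$-variate polynomial of coordinate-degree $\le m-1$, localized by bump factors --- is the technical crux; everything else is routine triangle-inequality estimation and the observation that $M$ and $L$ are $\varepsilon$-independent.
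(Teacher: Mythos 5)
You should first be aware that the paper contains no proof of this statement: Theorem~\ref{thm:ApproxWithSplines} is quoted directly from Chui, Li, and Mhaskar \cite{ChuXM1994networksforlocApprox} (their Theorem~4.2), and the construction is only revisited in Theorem~\ref{thm:EffApproxWithSplines}, where the authors assert that retracing the steps of that external construction with a \emph{strongly} sigmoidal activation gives polynomially bounded weights. So there is no internal argument to compare against; your proposal must stand on its own as a reconstruction of the cited result.

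Judged that way, your outline has the right raw ingredients ($u_+^k \approx \lambda^{-k}\rho(\lambda u)$ with $\lambda$ large, composition across hidden layers to boost the degree, $\varepsilon$- and $D$-dependence pushed entirely into the weights, tensorization, triangle-inequality error propagation on the bounded cube), but the step you yourself flag as the crux is a genuine gap, and it is exactly where the cited proof does its real work. Two concrete points. First, your claim that truncated powers $(x-t)_+^{m-1}$ are ``representable by fixed-size sigmoidal networks'' skips the mechanism: a single order-$k$ layer gives exponent $k$, not $m-1$; exponents below $k$ require divided differences in the shift of $u\mapsto u_+^k$, and exponents strictly between powers of $k$ require an additional polarization/multiplication step inside the layer budget. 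Second, and decisively, the factorized architecture you propose --- univariate spline blocks of $\lceil\log_k(m-1)\rceil$ hidden layers followed by a product stage --- needs at least $\lceil\log_k(m-1)\rceil+\lceil\log_k d\rceil$ hidden layers even if you multiply $k$ factors per layer via polarization, and this exceeds the stated budget of $\lceil\log_k(d(m-1))\rceil$ hidden layers in general: for $k=2$, $m-1=5$, $d=5$ you get $3+3=6$ versus the claimed $\lceil\log_2 25\rceil=5$. Padding shorter branches with (approximate) identity layers cannot repair this, because the problem is an excess of depth, not a mismatch of branch lengths. Your proposed fix --- ``finite differences producing the full product monomial in one pass through $L$ layers'' --- is asserted, not constructed, and it is precisely the content of the theorem: an integrated construction that builds the $d$-variate piecewise polynomial of total degree $md-d$ within $\lceil\log_k(md-d)\rceil$ hidden layers, rather than composing a univariate-spline stage with a separate multiplication tree. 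As written, your argument proves a weaker statement (fixed, $\varepsilon$-independent size and one extra layer), and the stated depth bound remains unproved; for the purposes of this paper that weaker statement would in fact suffice for everything downstream (only the existence of some fixed $L$ and $M$ is used), but it is not the theorem as quoted.
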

Additionally, we will need to control the weights in the approximating networks $\Phi_{D,\varepsilon}$. We next show that this is, indeed, possible for strongly sigmoidal activation functions.
\begin{theorem}\label{thm:EffApproxWithSplines}
Let $d,m,k\in \N$, and $\rho$ strongly sigmoidal of order $k\geq 2$. Further, let $L:= \lceil\log_2(md - d)/ \log_2(k) \rceil+1$.
Then, there is $M\in \N$, and a two-dimensional polynomial $\pi$ possibly dependent on $d ,m,k$, such that for all $D,\varepsilon>0$, there exists a neural network $\Phi_{D,\varepsilon} \in \cNN_{L,M,d,\rho}$ with
\begin{align*}
\|N^d_{m} - \Phi_{D,\varepsilon}\|_{L^2([-D,D]^d)} \leq \varepsilon.
\end{align*}
Moreover, the weights of $\Phi_{D,\varepsilon}$ are polynomially bounded in $D, \varepsilon^{-1}$.
\end{theorem}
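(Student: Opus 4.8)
The strategy is to re-examine the network construction underlying Theorem~\ref{thm:ApproxWithSplines} and to supplement it with a quantitative analysis of its weights and of its error, which the additional structure of a \emph{strongly} sigmoidal activation function makes possible. Recall that that construction rests on two ingredients: the representation of the tensor-product $B$-spline $N_m^d$ as a \emph{fixed} linear combination of products of shifted univariate truncated powers, via the identity $N_m(t)=\frac{1}{(m-1)!}\sum_{j=0}^m(-1)^j\binom{m}{j}(t-j)_+^{m-1}$ together with the product structure of $N_m^d$ (products being in turn reduced to powers by network multiplication gadgets); and the approximate realization of a single truncated power $t\mapsto t_+^{k}$ by one neuron $t\mapsto \lambda^{-k}\rho(\lambda t)$, with higher truncated powers $t_+^{k^j}$ obtained by composing $j$ such neurons, which is what forces the depth $L=\lceil\log_2(md-d)/\log_2(k)\rceil+1$, since one needs $k^{L-1}\ge md-d$. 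The architecture so obtained is fixed: $L$ and the edge count $M$ depend only on $d,m,k$. What remains is to pick the free scaling parameters and to bound the weights they produce.

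First I would establish the base estimate: for strongly sigmoidal $\rho$ of order $k$, the tail bounds $|x^{-k}\rho(x)|\le C|x|^{-a}$ for $x<0$ and $|x^{-k}\rho(x)-1|\le C x^{-a}$ for $x>0$, together with continuity of $\rho$ near the origin, yield $\|\lambda^{-k}\rho(\lambda\,\cdot)-(\cdot)_+^{k}\|_{L^\infty([-D,D])}\le C_1 D^{c_1}\lambda^{-c_2}$ for constants $c_1,c_2>0$ depending only on $\rho$; choosing $\lambda$ a suitable fixed power of $D\varepsilon^{-1}$ then drives this below the prescribed accuracy while the corresponding neuron weights---$\lambda$, a bias that is either $0$ or of magnitude $\mathcal{O}(D)$, and the output weight $\lambda^{-k}\le 1$---are polynomially bounded in $D,\varepsilon^{-1}$. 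Next comes the composition step: when the $k$-th-power neurons are stacked to realize higher truncated powers (and likewise inside the multiplication gadgets), the derivative bound $|\rho'(x)|\le C|x|^{b}$ furnishes, on the compact range of each intermediate layer---itself bounded by a fixed power of $D$---a Lipschitz-type estimate for that layer with constant polynomial in $\lambda$ and $D$. Propagating the per-layer errors through the fixed number $L-1$ of compositions, precisely in the spirit of the error-propagation computation in the proof of Lemma~\ref{lem:PolynomiallyboundedImpliesFiniteBitLength}, keeps the overall error below the target for $\lambda$ still polynomial in $D,\varepsilon^{-1}$.

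It then remains to assemble the full network: put the finitely many depth-$L$ subnetworks---one per product-of-truncated-powers term in the expansion of $N_m^d$, their number and internal edge count depending only on $d,m,k$---in parallel with a common input, and combine their scalar outputs through the final affine layer using the fixed combinatorial coefficients coming from the $B$-spline identity and the multiplication gadgets. This yields $\Phi_{D,\varepsilon}\in\cNN_{L,M,d,\rho}$ with $L,M$ independent of $D,\varepsilon$ and, by a triangle inequality over the finitely many branches (with the per-branch accuracy scaled down by a constant depending only on $d,m,k$), $\|N_m^d-\Phi_{D,\varepsilon}\|_{L^2([-D,D]^d)}\le\varepsilon$. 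Finally, every weight of $\Phi_{D,\varepsilon}$ is a product of boundedly many factors, each of which is either a universal constant, a scaling parameter $\lambda$, a shift of magnitude $\mathcal{O}(D)$, or a normalization $\le 1$; since a product of finitely many quantities polynomially bounded in $(D,\varepsilon^{-1})$ is again polynomially bounded in $(D,\varepsilon^{-1})$, the weights are polynomially bounded in $D,\varepsilon^{-1}$, as claimed.

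The step I expect to be the main obstacle is the composition analysis: one must check that realizing a truncated power of degree $\sim md-d$ by iterated composition of approximate $k$-th powers does not force the scaling parameters, hence the weights, to grow faster than polynomially in $D$ or $\varepsilon^{-1}$, and that the per-layer errors---amplified at each of the $L-1$ layers by factors governed by $\sup|\rho'|$ over intermediate ranges that are only polynomially large in $D$---do not accumulate beyond $\varepsilon$. This is exactly where being \emph{strongly} sigmoidal is indispensable: the algebraic tail rate $|x|^{-a}$ turns a prescribed accuracy into a merely polynomial scaling requirement, and the polynomial derivative bound $|x|^{b}$ controls the error amplification through the compositions---neither being available under the weaker hypothesis of Theorem~\ref{thm:ApproxWithSplines}, which is why no weight bound is stated there.
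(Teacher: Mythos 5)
Your proposal follows essentially the same route as the paper: the paper's proof simply points to the explicit construction behind Theorem~\ref{thm:ApproxWithSplines} in \cite{ChuXM1994networksforlocApprox} and notes that redoing it with strong sigmoidality (the algebraic tail rate for the weight/scaling bounds and the polynomial derivative bound for error propagation through the fixed number of layers) yields the polynomial weight bounds. Your write-up just carries out in detail the quantitative bookkeeping that the paper leaves implicit, so it is the same argument, correctly executed.
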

\begin{proof}
 The neural network $\Phi_{D,\varepsilon}$ in  Theorem~\ref{thm:ApproxWithSplines} is explicitly
 constructed in \cite{ChuXM1994networksforlocApprox}. Carefully following the
 steps in that construction and making explicit use of the strong sigmoidality of $\rho$, as opposed to plain sigmoidality as in \cite{ChuXM1994networksforlocApprox}, yields the desired result.
\end{proof}

\begin{remark}
We observe that the number of edges of the approximating network in Theorem~\ref{thm:EffApproxWithSplines} does not depend on the approximation error $\varepsilon$. 
%In other words, a representation system containing only a single $B$-spline is effectively representable in the sense of Definition~\ref{def:wellrep} by neural networks with strongly
% sigmoidal activation functions.
\end{remark}

While Theorem~\ref{thm:ApproxWithSplines} demonstrates that a $B$-spline of order $m$ can be approximated to arbitrary accuracy by a neural network based on a sigmoidal activation function and of depth depending on $m,d$, and the order of sigmoidality of the activation function, we next establish that for admissible smooth activation functions, exact
representation of a general class of bump functions is possible with a network of $3$ layers only. Before proceeding, we define for $f\in L^1(\R^d)$, $d\in \N$, the \emph{Fourier transform} of $f$ by 
$$
\hat{f}(\xi) := \int_{\R^d} f(x) e^{-2\pi i \langle x, \xi \rangle } dx, \text{ for } \xi\in \R^d.
$$
\begin{theorem}\label{thm:Approxsmoothrec}
Let $\rho$ be an admissible smooth activation function. Then, for all $d\in \mathbb{N}$, there exist $M\in \N$ and a neural network $\Phi_\rho
\in \cNN_{3, M, d, \rho}$
such that
\begin{itemize}
\item[(i)] $\Phi_\rho$ is compactly supported,
\item[(ii)] $\Phi_\rho \in C^\infty(\R)$, and
\item[(iii)] $\widehat\Phi_\rho(\xi)\neq 0$, for all $\xi \in [-3,3]^d$.
\end{itemize}
\end{theorem}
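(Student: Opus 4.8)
The plan is to construct $\Phi_\rho$ explicitly as a three-layer network whose output is a smooth, compactly supported bump function on $\R^d$ with non-vanishing Fourier transform on the cube $[-3,3]^d$. The key observation is that an admissible smooth activation $\rho$ already behaves like a smoothed ReLU: it is identically $0$ for $x\le 0$ and identically $x$ for $x\ge K$. Hence for any $a<b$ the combination $x\mapsto \rho(x-a)-\rho(x-b)$ is a smooth function that vanishes for $x\le a$, equals $x-a$ for $a\le x\le a+?$ (once we are past the transition), and equals the constant $b-a$ for $x\ge b$ provided $b-a\ge K$; more useful is the second-difference bump $g(x):=\rho(x-a)-\rho(x-b)-\rho(x-c)+\rho(x-e)$ with suitably spaced knots, which is smooth, compactly supported, nonnegative, and not identically zero. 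So the first hidden layer produces shifted copies $\rho(\langle v,x\rangle - t)$ and the second layer takes linear combinations of these to form a univariate smooth bump $g$ composed with a linear functional; but to get a genuinely $d$-dimensional compactly supported bump one needs a product structure, which a three-layer network cannot implement directly by multiplication.

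The standard device to get around the lack of multiplication is to build the bump as a \emph{sum} of univariate ridge bumps rather than a product. Concretely, I would first build a smooth, compactly supported, nonnegative univariate function $g:\R\to\R$ with $g>0$ on a neighborhood of $0$, realized exactly as $g(s)=\sum_j \alpha_j \rho(s-t_j)$ (a finite linear combination of shifts of $\rho$; finitely many shifts suffice because $\rho$ is piecewise-affine outside $[0,K]$, so differences telescope to a compactly supported function). Then set
\begin{equation}
\Phi_\rho(x) := \sum_{k=1}^{d} g(x_k) \;-\; (d-1)\,g_0,
\end{equation}
or better, use the fact that a function of the form $\sum_k g(x_k)$ composed appropriately is \emph{not} compactly supported on $\R^d$, so instead one realizes the bump via the identity that a smooth compactly supported $d$-dimensional bump can be obtained as $h\big(\sum_k g(x_k)\big)$ for a smooth cutoff $h$ — but that needs a fourth layer. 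The cleaner route, and the one I expect the authors take: realize $\Phi_\rho(x) = \sum_{k=1}^d w_k\,\psi(\langle a_k, x\rangle - b_k)$ where $\psi$ is a single univariate smooth compactly supported bump built from $O(1)$ shifts of $\rho$, and choose the directions $a_k$ and offsets so that the resulting ridge-sum is compactly supported (possible because each ridge term is compactly supported as a function on $\R^d$ only if $\langle a_k,x\rangle$ ranges over a bounded set — which it does not).

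Given that genuine obstacle, the actual construction must be: take $\psi$ a univariate smooth compactly supported bump, form the \emph{tensor-product} bump $\prod_{k=1}^d \psi(x_k)$, and observe that $\log$-type tricks fail; so instead invoke that $\prod_k \psi(x_k)$ can be written as a composition $\exp\big(\sum_k \log\psi(x_k)\big)$ on the support — still not a three-layer net. I therefore expect the paper instead exploits admissibility more cleverly: because $\rho$ is smooth and equals the identity on $[K,\infty)$, one can realize the \emph{exact} indicator-like smooth bump of a simplex or box by taking $\min$-type combinations, and $\min(u,v) = u - \rho(u-v)$ when $\rho$ is ReLU-like only approximately — here it is exactly ReLU outside $[0,K]$, so rescaling the argument by a large factor $\lambda$ makes $\rho(\lambda\cdot)/\lambda$ agree with ReLU except on an interval of length $K/\lambda$, which does not give an \emph{exact} min but a smooth one. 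The honest statement is that $\Phi_\rho$ is defined as a fixed three-layer network with $\rho$-nonlinearities whose weights are chosen by hand; smoothness (ii) is immediate since $\rho\in C^\infty$ and $\Phi_\rho$ is a finite composition of affine maps and $\rho$; compact support (i) follows from the structure of the chosen weights together with the fact that $\rho$ vanishes on $(-\infty,0]$, arranged so that outside a large cube every relevant pre-activation is $\le 0$ and the second layer's output is the constant that the final affine layer subtracts off; and (iii) is then verified by a direct Fourier-transform computation — $\widehat{\Phi_\rho}$ is continuous, and one checks it is nonzero on the compact set $[-3,3]^d$, shrinking the support scale if necessary (scaling in space dilates the Fourier transform, so a sufficiently narrow bump has $\widehat{\Phi_\rho}$ close to a nonzero constant on any fixed cube).

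\textbf{Main obstacle.} The crux is reconciling ``three layers only'' with ``$d$-dimensional compactly supported bump,'' since the natural tensor-product construction needs multiplication and hence depth growing with $d$. I would resolve it by not insisting on a tensor product: build a single multivariate smooth bump as an affine-then-$\rho$-then-affine-then-$\rho$-then-affine sandwich where the inner part computes a smooth surrogate for $\mathrm{dist}(x,\text{cube})$ or for $\max_k|x_k|$ using that $\rho$ is exactly ReLU far from the origin (so $\sum$-of-$\rho$'s reproduces piecewise-linear convex functions like $\max_k\{x_k,-x_k\}$ exactly outside a bounded region), and the outer $\rho$-layer together with the final affine layer turns that into something supported in the cube. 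The verification of (iii) — nonvanishing Fourier transform on $[-3,3]^d$ — is then the only genuinely computational step, handled by choosing the spatial scale of the bump small enough that $\widehat{\Phi_\rho}(\xi)$ stays within a small neighborhood of $\widehat{\Phi_\rho}(0)=\int\Phi_\rho\ne 0$ for all $\xi\in[-3,3]^d$, using $|\widehat{\Phi_\rho}(\xi)-\widehat{\Phi_\rho}(0)|\le |\xi|\,\|\,|x|\,\Phi_\rho\|_{L^1}$ and a rescaling argument.
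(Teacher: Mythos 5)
You assemble the right ingredients---second differences of shifted copies of $\rho$ give a univariate, compactly supported $C^\infty$ bump, and the multivariate bump should come from summing these over coordinates---but you never commit to a construction that works, and the one explicit formula you write down, $\Phi_\rho(x)=\sum_{k=1}^d g(x_k)-(d-1)g_0$, is not compactly supported for $d\ge 2$ (fix $x_1$ with $g(x_1)\neq g_0$ and let $x_2\to\infty$). The idea you are missing is that the ``smooth cutoff $h$'' you considered and then rejected is simply the activation $\rho$ itself, applied with an offset in the \emph{second} hidden layer. The paper sets $t(x):=\rho(x)-\rho(x-p_1)-\rho(x-p_2)+\rho(x-p_3)$ with $p_1+p_2=p_3$ (so $t$ is $C^\infty$ and compactly supported), puts $q:=\|t\|_{L^\infty(\R)}$, and defines
\[
g(x):=\rho\Bigl(\sum_{i=1}^d t(x_i)-(d-1)\,q\Bigr),\qquad x\in\R^d .
\]
Since each $t(x_i)\le q$, the pre-activation is $\le 0$ as soon as one coordinate leaves $\supp\, t$, so $g$ is compactly supported, nonnegative, $C^\infty$, and not identically zero (taking $p_1\ge K$ makes the peak value of the pre-activation at least $K$, where $\rho$ is the identity). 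Crucially, this \emph{is} a three-layer network in the paper's convention $W_3\,\rho(W_2\,\rho(W_1 x))$: the first activation produces the $4d$ numbers $\rho(x_i-p_j)$, the linear combination $\sum_i t(x_i)-(d-1)q$ is absorbed into the affine map $W_2$, the second activation performs the cutoff, and $W_3$ is the output map. Your claim that composing with a cutoff ``needs a fourth layer'' conflates layers with activations ($L=3$ means two activations), and that miscount is precisely what pushed you toward the non-working alternatives (ridge sums, tensor products, min/distance-function surrogates) and left your final description non-constructive (``weights chosen by hand'').

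Your handling of (ii) is fine, and your plan for (iii)---$\widehat{\Phi_\rho}(0)=\int\Phi_\rho\neq 0$ because the bump is nonnegative and nontrivial, then a rescaling so that the transform stays nonzero on $[-3,3]^d$---is essentially the paper's argument (which uses continuity of $\hat g$ to get nonvanishing on some $[-\delta,\delta]^d$ and then dilates, $\varphi:=g(3\,\cdot/\delta)$, the dilation being absorbed into $W_1$). But both of these steps presuppose the compactly supported three-layer bump, which your proposal as written does not actually produce; that construction is the substance of the theorem.
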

\begin{proof} We start by constructing an auxiliary function as follows. For $0<p_1\leq p_2\leq p_3$ such that $p_1 + p_2  = p_3$, define $t : \R \to \R$ as
\begin{align}\label{eq:FirstLayer}
t(x) := \rho(x) - \rho(x - p_1) - \rho(x - p_2) + \rho(x - p_3), \,\, x\in \R.
\end{align}
Then, $t \in C^\infty$ is compactly supported. Letting $q = \|t\|_{L^\infty(\mathbb{R})}$, we define $g : \R^d \to \R$ according to
\begin{align} \label{eq:SecondLayer}
g(x):= \rho\left(\sum_{i = 1}^d t(x_i) - (d-1) \cdot q\right), \,\, x\in \R^d.
\end{align}
By construction, $g \in C^\infty$ is compactly supported. Moreover, $g$ can be realized through a three-layer neural network thanks to its
two-step design per \eqref{eq:FirstLayer} and \eqref{eq:SecondLayer}.
Since $g \geq 0$ and $g\neq 0$, it follows that $|\hat{g}(0)| > 0$. By continuity of $\hat{g}$ there exists a $\delta>0$ such that $|\hat{g}(\xi)|>0$ for all
$\xi \in [-\delta,\delta]^d$. We now set
\[
\varphi := g\left(3 \left(\frac{\cdot}{\delta}\right)\right),
\]
and note that $\varphi$ can be realized through a three-layer neural network $\Phi_\rho \in \cNN_{3, M, d, \rho}$, for some $M\in \N$. As 
$|\hat{\varphi}(\xi)| >0$, for all $\xi \in [-3,3]^d$, $\Phi_\rho$ satisfies the desired assumptions.
\end{proof}

\subsection{Invariance to Affine Transformations}
We next leverage Theorems \ref{thm:EffApproxWithSplines} and \ref{thm:Approxsmoothrec} to demonstrate that a wide class of representation systems built through affine transformations of $B$-splines and bump functions as constructed in Theorem~\ref{thm:Approxsmoothrec} is effectively representable by neural networks. As a first step towards this general result, we show that representability---in the sense of Definition~\ref{def:wellrep}---of a single function $f$ by neural networks  is invariant to the operation of taking finite linear combinations of affine transformations of $f$. 
\begin{proposition}\label{prop:affscalinv}
    Let $d\in \mathbb{N}$, $\rho:\mathbb{R}\to \mathbb{R}$, and $f\in L^2(\R^d)$. Assume
    that there exist $M,L\in\mathbb{N}$ such that for all $D,\varepsilon>0$, there is
    $\Phi_{D,\varepsilon}\in \cNN_{L,M,d,\rho}$ with
    \begin{equation}\label{eq:nodilapp}
        \|f - \Phi_{D,\varepsilon}\|_{L^2([-D,D]^d)} \leq \varepsilon.
    \end{equation}

    Let $A\in \mathbb{R}^{d \times d}$ be full-rank and $b\in \mathbb{R}^d$. Then, there exists $M'\in \N$, depending on $M$ and $d$ only, such that for all $E,\eta>0$, there is $\Psi_{E,\eta}\in \cNN_{L,M',d,\rho}$ with
    \begin{equation*}%\label{eq:dilapp}
        \left\||\!\det(A)|^{\frac12}f(A\cdot-\,b) - \Psi_{E,\eta}\right\|_{L^2([-E,E]^d)} \leq \eta.
    \end{equation*}
    Moreover, if the weights of $\Phi_{D,\varepsilon}$ are polynomially bounded in $D, \varepsilon^{-1}$, then
    the weights of $\Psi_{E,\eta}$ are polynomially bounded in $\|A\|_\infty, E, \|b\|_\infty,  \eta^{-1}$, where $\|A\|_\infty$ and $\|b\|_\infty$ denote the max-norm of $A$ and $b$, respectively.
\end{proposition}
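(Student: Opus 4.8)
The plan is to realize the function $g := |\det(A)|^{1/2}\,f(A\cdot - b)$ on the cube $[-E,E]^d$ by directly pre- and post-composing the approximating networks $\Phi_{D,\varepsilon}$ for $f$ with affine maps, choosing the domain parameter $D$ large enough so that $x\mapsto Ax-b$ sends $[-E,E]^d$ into $[-D,D]^d$. Concretely, for given $E,\eta>0$ I would set $D := d\,\|A\|_\infty E + \|b\|_\infty$; then for $x\in[-E,E]^d$ one has $|(Ax-b)_i| \le \sum_{j=1}^d |A_{ij}|\,|x_j| + |b_i| \le d\,\|A\|_\infty E + \|b\|_\infty = D$, hence $A[-E,E]^d - b \subseteq [-D,D]^d$. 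I then take $\Phi := \Phi_{D,\eta}\in\cNN_{L,M,d,\rho}$ from the hypothesis and define $\Psi_{E,\eta}(x) := |\det(A)|^{1/2}\,\Phi(Ax-b)$.

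The first thing to verify is that $\Psi_{E,\eta}$ is again an $L$-layer network with controlled connectivity. Writing $\Phi = W_L\rho(W_{L-1}\rho(\cdots\rho(W_1(\cdot))))$ with $W_\ell(\cdot)=A_\ell\cdot + b_\ell$, one replaces $W_1$ by $\widetilde{W}_1(x):= W_1(Ax-b) = (A_1A)x + (b_1 - A_1 b)$ and $W_L$ by $\widetilde{W}_L := |\det(A)|^{1/2}W_L$; both remain affine, so $\Psi_{E,\eta}\in\cNN_{L,M',d,\rho}$ for a suitable $M'$. For the connectivity bound, right-multiplication by $A$ can only redistribute the nonzero entries of $A_1$ within their rows, so the number of nonzero entries of $A_1A$ is at most $d$ times the number of nonzero \emph{rows} of $A_1$, hence at most $d\,m_1$, where $m_1$ denotes the number of nonzero entries of $A_1$; scaling $A_L$ by the nonzero constant $|\det(A)|^{1/2}$ leaves its sparsity pattern unchanged; and node weights do not count toward $\mathcal{M}$. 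Since $m_1\le M$, this yields $\mathcal{M}(\Psi_{E,\eta}) \le d\,m_1 + (M-m_1) \le dM =: M'$, which depends only on $M$ and $d$.

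For the error estimate I substitute $y = Ax-b$, so that $dy = |\det(A)|\,dx$, and compute
\begin{align*}
\left\| |\det(A)|^{1/2} f(A\cdot - b) - \Psi_{E,\eta}\right\|_{L^2([-E,E]^d)}^2
&= |\det(A)| \int_{[-E,E]^d} \left| f(Ax-b) - \Phi(Ax-b)\right|^2 dx \\
&= \int_{A[-E,E]^d - b} \left| f(y) - \Phi(y)\right|^2 dy
\le \left\| f - \Phi\right\|_{L^2([-D,D]^d)}^2 \le \eta^2,
\end{align*}
using $A[-E,E]^d - b \subseteq [-D,D]^d$ and the defining property of $\Phi = \Phi_{D,\eta}$. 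This establishes the first assertion, with $M' = dM$.

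It remains to track the weight growth under the extra hypothesis that the weights of $\Phi_{D,\varepsilon}$ are polynomially bounded in $D,\varepsilon^{-1}$. Since $D = d\,\|A\|_\infty E + \|b\|_\infty$ is itself a polynomial in $\|A\|_\infty, E, \|b\|_\infty$, composing polynomials shows that, with $\varepsilon=\eta$, the weights of $\Phi$ are polynomially bounded in $\|A\|_\infty, E, \|b\|_\infty, \eta^{-1}$. The hidden-layer weights of $\Psi_{E,\eta}$ are exactly those of $\Phi$; each entry of $A_1A$ is a sum of $d$ products of an entry of $A_1$ with an entry of $A$, hence bounded by $d$ times a polynomial times $\|A\|_\infty$, and likewise for the entries of $b_1 - A_1 b$; and for the output layer one uses the Leibniz formula $|\det(A)| \le d!\,\|A\|_\infty^d$ together with $\sqrt t \le 1+t$ to get $|\det(A)|^{1/2} \le 1 + d!\,\|A\|_\infty^d$, so the rescaled output weights are again bounded by a product of polynomials. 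Collecting these estimates yields the claimed polynomial bound. The proof is essentially bookkeeping; the two points that require a little care are keeping the connectivity of $\Psi_{E,\eta}$ bounded by a quantity depending on $M$ and $d$ alone despite the matrix product $A_1A$ in the first layer (handled by counting nonzero rows rather than entries), and propagating the polynomial weight bound through the affine composition and the $|\det(A)|^{1/2}$ rescaling (where the square root and the determinant are dealt with by $\sqrt t\le 1+t$ and $|\det A|\le d!\,\|A\|_\infty^d$).
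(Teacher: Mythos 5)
Your proof is correct and follows essentially the same route as the paper: absorb the affine map $x\mapsto Ax-b$ into the first layer and the factor $|\!\det(A)|^{1/2}$ into the last, enlarge the cube to $D=dE\|A\|_\infty+\|b\|_\infty$, and conclude by the change of variables $y=Ax-b$ together with the inclusion $A[-E,E]^d-b\subseteq[-D,D]^d$, then propagate the polynomial weight bounds using that $|\!\det(A)|$ is polynomially bounded in $\|A\|_\infty$. Your explicit accounting of the connectivity (yielding $M'\le dM$) and of the bound $|\!\det(A)|^{1/2}\le 1+d!\,\|A\|_\infty^d$ only makes explicit what the paper leaves implicit.
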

\begin{proof}
    By a change of variables, we have for every $\Phi \in \cNN_{L,M,d,\rho}$ that
    \begin{align}\label{eq:weApplyThisInTheFirstEstimate}
        \left\| |\!\det(A)|^{\frac12}f(A\cdot- \,  b) - |\!\det(A)|^{\frac12}\Phi(A\cdot - \, b)\right\|_{L^2([-E,E]^d)}
        =\|f-\Phi\|_{L^2(A\cdot [-E,E]^d\, - \, b)},
    \end{align}
    and there exists $M'$ depending  on $M$ and $d$ only such that $|\!\det(A)|^{1/2}\Phi(A\cdot- \, b)\in \cNN_{L,M',d,\rho}$. We furthermore have that
   \begin{align}\label{eq:weApplyThisInTheSecondEstimate}
        A\cdot [-E,E]^d-\,b\subset \left[-(d E \|A\|_\infty +\|b\|_\infty), (d E \|A\|_\infty +\|b\|_\infty)\right]^d.
    \end{align}
    We now set $F= d E \|A\|_\infty +\|b\|_\infty$ and $\Psi_{E,\eta}:= |\!\det(A)|^{1/2}\Phi_{F,\eta}(A\cdot -\,b)$ and observe that 
    \begin{align*}
             &\left\||\!\det(A)|^{\frac12}f(A\cdot-\,b) - \Psi_{E,\eta}\right\|_{L^2([-E,E]^d)}
              =  \  \left\|f - \Phi_{F,\eta} \right\|_{L^2(A\cdot[-E,E]^d -\,b)} \leq \ \left\|f - \Phi_{F,\eta} \right\|_{L^2(\left[-F, F)\right]^d)} \leq   \eta,
    \end{align*}    
    where we applied the same reasoning as in \eqref{eq:weApplyThisInTheFirstEstimate} in the first equality, \eqref{eq:weApplyThisInTheSecondEstimate} in the first inequality, and \eqref{eq:nodilapp} in the second inequality. Moreover, we see that if the weights of $\Phi_{D,\varepsilon}$ are polynomially bounded in $D, \varepsilon^{-1}$, then the weights of $\Psi_{E,\eta}$ are polynomially bounded in $\|A\|_\infty, |\!\det(A)| , E, \|b\|_\infty,  \eta^{-1}$.
%P6: im Satz darueber hatten wir unseren Begriff von polynomially bounded nicht verwendet, brauchen wir denn das Polynom pi^* irgendwo explizit?    
%    Moreover, we see that if there exists a \pp{two-dimensional polynomial $\pi$ such that the absolute value of the weights of $\Phi_{D,\varepsilon}$ are bounded by $\pi(D, 
%\varepsilon^{-1})$, then there exists 
%    a four-dimensional polynomial $\pi^*$ such that the absolute value of the weights of $\Psi_{E,\eta}$ are bounded by 
  %  $\pi^*(\|A\|_\infty, |\!\det(A)| , E, \|b\|_\infty,  \eta^{-1})$}.
    Since $|\!\det(A)|$ is polynomially bounded in $\|A\|_\infty$, it follows that the weights of $\Psi_{E,\eta}$ are polynomially bounded in $|\|A\|_\infty, E, \|b\|_\infty,  \eta^{-1}$. This yields the claim.
\end{proof}
Next, we show that representability by neural networks is preserved under finite linear combinations
of translates.
\begin{proposition}\label{prop:transapp}
Let $d\in \mathbb{N}$, $\rho:\mathbb{R}\to \mathbb{R}$, and $f\in L^2(\R^d)$. Assume that there exist $M,L\in \mathbb{N}$ such that for all $D,\varepsilon>0$, there is
    $\Phi_{D,\varepsilon}\in \cNN_{L,M,d,\rho}$ with
    \begin{equation}\label{eq:notransapp}
        \|f - \Phi_{D,\varepsilon}\|_{L^2([-D,D]^d)} \leq \varepsilon.
    \end{equation}

    Let $r\in \mathbb{N}$, $(c_i)_{i=1}^r\subset \mathbb{R}$, and $(d_i)_{i=1}^r\subset \mathbb{R}^d$. Then, there exists $M'\in \N$, depending on $M,d$, and $r$ only, such that for all $E,\eta>0$, there is $\Psi_{E,\eta}\in \cNN_{L,M',d,\rho}$ with
    \begin{equation}\label{eq:transapp}
        \left\|\sum_{i=1}^rc_if(\cdot - d_i) - \Psi_{E,\eta}\right\|_{L^2([-E,E]^d)} \leq \eta.
    \end{equation}
    Moreover, if the weights of $\Phi_{D,\varepsilon}$ are polynomially bounded in $D,\varepsilon^{-1}$, 
    then the weights of $\Psi_{E,\eta}$ are polynomially bounded in 
    $$
      \sum_{i=1}^r|c_i|, E, \max_{i=1, \dots, r} \|d_i\|_\infty, \eta^{-1}.
    $$
\end{proposition}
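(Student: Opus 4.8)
The plan is to reduce Proposition~\ref{prop:transapp} to Proposition~\ref{prop:affscalinv} by handling each translate $f(\cdot - d_i)$ individually and then summing the resulting networks in parallel, exactly as in the construction \eqref{eq:NNadd} used in the proof of Theorem~\ref{theo:ApproxOfNeuralNetworks}. Concretely, fix $E,\eta>0$. For each $i=1,\dots,r$, apply Proposition~\ref{prop:affscalinv} with $A = \mathrm{Id}$ and $b = d_i$: since $|\!\det(A)|^{1/2}=1$, this gives an $M_i' \in \N$ (depending only on $M$ and $d$, hence we may take a common $M' _0:=\max_i M_i'$) and, for the error level $\eta_i := \eta / \max\{1, r\max_j|c_j|\}$, a network $\Psi_{i} \in \cNN_{L, M_0', d, \rho}$ with $\|f(\cdot - d_i) - \Psi_i\|_{L^2([-E,E]^d)} \le \eta_i$.

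Next I would assemble the $\Psi_i$ into a single network $\Psi_{E,\eta}$ that runs the $r$ subnetworks in parallel on the same input $x$ and forms the linear combination $\sum_{i=1}^r c_i \Psi_i(x)$ in the final (affine) layer. As noted in the proof of Theorem~\ref{theo:ApproxOfNeuralNetworks}, this is legitimate because all $\Psi_i$ have the same number of layers $L$ and the last layer of each network per Definition~\ref{def:NN} implements an affine map with no subsequent activation, so the outer linear combination can be absorbed into that last affine layer. The resulting network lies in $\cNN_{L, M', d, \rho}$ with $M' := r\,M_0'$ (up to the constant blow-up needed to stack the inputs), which depends only on $M$, $d$, and $r$ as required. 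A triangle inequality then gives
\begin{align*}
\left\|\sum_{i=1}^r c_i f(\cdot - d_i) - \Psi_{E,\eta}\right\|_{L^2([-E,E]^d)}
&\le \sum_{i=1}^r |c_i|\,\bigl\|f(\cdot - d_i) - \Psi_i\bigr\|_{L^2([-E,E]^d)}
\le \Bigl(\sum_{i=1}^r |c_i|\Bigr)\eta_i \le \eta,
\end{align*}
which is \eqref{eq:transapp}.

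For the weight bound, suppose the weights of $\Phi_{D,\varepsilon}$ are polynomially bounded in $D,\varepsilon^{-1}$. Proposition~\ref{prop:affscalinv} then guarantees that the weights of each $\Psi_i$ are polynomially bounded in $\|\mathrm{Id}\|_\infty, E, \|d_i\|_\infty, \eta_i^{-1}$; since $\|\mathrm{Id}\|_\infty$ is an absolute constant and $\eta_i^{-1} = \eta^{-1}\max\{1, r\max_j|c_j|\} \le \eta^{-1}\bigl(1 + r\sum_j|c_j|\bigr)$, these are polynomially bounded in $E, \max_i\|d_i\|_\infty, \sum_i|c_i|, \eta^{-1}$. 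The only additional weights in $\Psi_{E,\eta}$ beyond those of the $\Psi_i$ are the coefficients $c_i$ appearing in the final linear combination (folded into the last affine layer), and $|c_i| \le \sum_i|c_i|$, which is trivially polynomially bounded in $\sum_i|c_i|$. Hence all weights of $\Psi_{E,\eta}$ are polynomially bounded in $\sum_{i=1}^r|c_i|, E, \max_{i=1,\dots,r}\|d_i\|_\infty, \eta^{-1}$, as claimed.

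I do not expect a serious obstacle here; the proposition is essentially a corollary of Proposition~\ref{prop:affscalinv} plus the parallelization trick. The only points requiring care are bookkeeping ones: verifying that the parallel construction keeps the number of layers fixed at $L$ (it does, because translation does not change depth and all branches share the same $L$), tracking how the per-branch error budget $\eta_i$ and the outer coefficients $c_i$ enter the final polynomial bound, and making sure the constant $M'$ genuinely depends only on $M$, $d$, and $r$ and not on $E$ or $\eta$ (which follows from the corresponding independence in Proposition~\ref{prop:affscalinv}).
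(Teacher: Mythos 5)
Your proposal is correct and follows essentially the same route as the paper: translate the approximating network for $f$ to handle each $f(\cdot-d_i)$ (which forces the enlargement of the approximation cube by $\max_i\|d_i\|_\infty$), sum the translates in parallel with coefficients $c_i$ absorbed into the final affine layer, and conclude by the triangle inequality with a per-term error budget of order $\eta/\max\{1,\sum_i|c_i|\}$. The only cosmetic difference is that you invoke Proposition~\ref{prop:affscalinv} with $A=\mathrm{Id}$, $b=d_i$ for each translate, whereas the paper inlines the same change-of-variables argument directly using a single network $\Phi_{D,\varepsilon}$ with $D=E+\max_i\|d_i\|_\infty$; both yield the same connectivity and weight bounds.
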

\begin{proof}
Let $E, \eta>0$. We start by noting that, for all $D,\varepsilon >0$,
$$
    \left\|\sum_{i=1}^rc_if(\cdot - d_i) - \sum_{i=1}^rc_i \Phi_{D,\varepsilon}(\cdot - d_i) \right\|_{L^2([-E,E]^d)}
    \le \left(\sum_{i=1}^r|c_i|\right)\cdot \|f-\Phi_{D,\varepsilon}\|_{L^2([-(E + d^*),(E + d^*)]^d)},
$$
where $d^* = \max_{i=1, \dots, r} \|d_i\|_\infty$. Setting $D = E + d^*$ and $\varepsilon = \eta/\max\{1 , \sum_{i=1}^r|c_i|\}$, 
and noting that for every $\Phi\in \cNN_{L,M,d,\rho}$, the function
 $$
     \Psi:=\sum_{i=1}^r c_i \Phi(\cdot - d_i)
 $$
is in $\cNN_{L,M',d,\rho}$ with $M'\in \N$ depending on $d,r$, and $M$ only, it follows that the network
$$
    \Psi_{E,\eta}:=\sum_{i=1}^r c_i \Phi_{D,\varepsilon}(\cdot - d_i)
$$
%Setting $d^* = \max_{i=1}^r \|d_i\|_\infty$, %
%
satisfies \eqref{eq:transapp}. 
%P6: auch den folgenden Satz habe ich umformuliert unter Verwendung unserer Def. von polynomially bounded.
Finally, if the weights of $\Phi_{D,\varepsilon}$ are polynomially bounded in $D,\varepsilon^{-1}$, then the weights of $\Psi_{E,\eta}$ are polynomially bounded 
in $\sum_{i=1}^r|c_i|, E, d^*, \eta^{-1}$.

%\pp{Finally, if there is a two-dimensional polynomial $\pi$ such that the absolute values of the weights of $\Phi_{D,\varepsilon}$ are bounded by $\pi(D,\varepsilon^{-1})$, then we %observe that there  is a four-dimensional polynomial $\widetilde{\pi}$ such that the absolute values of the weights of $\Psi_{E,\eta}$ are bounded 
%by $\widetilde{\pi}( \sum_{i=1}^r|c_i|, E, d^*, \eta^{-1})$}.
\end{proof}
Based on the invariance results in Propositions \ref{prop:affscalinv} and \ref{prop:transapp}, we now construct neural networks which approximate functions with a given number of vanishing moments with arbitrary accuracy. The resulting construction will be crucial in establishing representability of affine representation systems (see Definition~\ref{def:affsys}) by neural networks.
\begin{definition}
    \label{def:vanishingmoments}Let $R,d\in \mathbb{N}$, and $k\in \{1,\dots , d\}$.
    A function $g\in C(\R^d)$ is said to possess \emph{$R$ directional vanishing moments}\/ in $x_k$-direction,
    if
    $$
        \int_{\mathbb{R}}x_k^\ell g(x_1,\dots , x_k,\dots  , x_d)dx_k = 0,\quad \mbox{for all }x_1,\dots , x_{k-1},x_{k+1},\dots , x_d\in \R,\ \ell\in \{0,\dots , R-1\}.
    $$
\end{definition}
The next result establishes that functions with an arbitrary number of vanishing moments in a given coordinate direction can be built from suitable linear combinations of translates of a given continuous function with compact support.
\begin{lemma}\label{lem:generatevanishingmoments}
    Let $R,d\in \mathbb{N}$, $B>0$, $k\in \{1,\dots , d\}$, and $f\in C(\R^d)$ with compact support.
    Then, the function
    \begin{equation}\label{eq:vanmom}
        g(x_1,\dots , x_d):=\sum_{\ell=0}^{R-1}\binom{R-1}{\ell}(-1)^\ell f\left(x_1,\dots , x_k - \frac{\ell}{B},\dots , x_d\right)
    \end{equation}
    has $R$ directional vanishing moments in $x_k$-direction. Moreover, if $\hat f(\xi)\neq 0$ for all $\xi \in [-B,B]^d \setminus \{0\}$, then
    \begin{equation}\label{eq:nozero}
        \hat g(\xi)\neq 0, \quad \mbox{for all }\xi\in [-B,B]^d\mbox{ with } \xi_k \neq 0.
    \end{equation}
\end{lemma}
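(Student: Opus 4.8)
The plan is to verify the two claims separately: first the vanishing-moment property of $g$, which is a direct computation, and then the non-vanishing of $\hat g$ on the relevant portion of $[-B,B]^d$, which follows from factoring $\hat g$ as the product of $\hat f$ with a trigonometric polynomial whose zeros I can locate explicitly.

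\textbf{Step 1: directional vanishing moments.} Fix $x_1,\dots,x_{k-1},x_{k+1},\dots,x_d$ and $\ell_0 \in \{0,\dots,R-1\}$. I would compute
$$
\int_{\R} x_k^{\ell_0}\, g(x_1,\dots,x_d)\,dx_k = \sum_{\ell=0}^{R-1}\binom{R-1}{\ell}(-1)^\ell \int_{\R} x_k^{\ell_0} f\!\left(x_1,\dots,x_k-\tfrac{\ell}{B},\dots,x_d\right) dx_k.
$$
In each summand substitute $y = x_k - \ell/B$, so that $x_k^{\ell_0} = (y+\ell/B)^{\ell_0}$; expanding by the binomial theorem, the inner integral becomes a polynomial in $\ell$ of degree $\ell_0 \le R-1$ with coefficients given by the moments $\int_{\R} y^j f(\dots,y,\dots)\,dy$. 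Thus the whole expression is $\sum_{j=0}^{\ell_0} (\text{const}_j)\cdot \sum_{\ell=0}^{R-1}\binom{R-1}{\ell}(-1)^\ell \ell^{\,j}$, and the inner sum vanishes for every $j < R$ because the $(R-1)$-st finite difference of a polynomial of degree $j \le R-1 < R$ vanishes identically (equivalently, $\sum_{\ell}\binom{R-1}{\ell}(-1)^\ell p(\ell)=0$ for any polynomial $p$ of degree $\le R-2$, and here $j\le \ell_0\le R-1$ forces $j\le R-1$; one checks the boundary case $j=R-1$ separately, noting that the relevant polynomial still has degree $\le R-1$ but the sum $\sum_\ell \binom{R-1}{\ell}(-1)^\ell \ell^{R-1} = (-1)^{R-1}(R-1)! \ne 0$, so I must be slightly more careful and only claim vanishing for $\ell_0 \le R-1$ using that the expansion of $(y+\ell/B)^{\ell_0}$ produces powers $\ell^j$ with $j \le \ell_0 \le R-1$, and the offending top-degree term arises only when $\ell_0 = R-1$; in fact the cleanest route is to integrate against $x_k^{\ell_0}$ only for $\ell_0 \le R-1$ and observe the integral is the value at a point of the $(R-1)$-fold difference operator applied to the smooth function $t \mapsto \int x_k^{\ell_0} f(\dots,x_k - t,\dots)dx_k$, hence it equals an $(R-1)$-st derivative of a polynomial of degree $\le R-1$ in the shift, which is constant, not zero — so the honest statement is that $g$ has $R$ vanishing moments, and I would instead track the order correctly via the Fourier side in Step 2, which settles both claims simultaneously).

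\textbf{Step 2 (the clean approach via Fourier transform).} I would actually prove both \eqref{eq:vanmom} and \eqref{eq:nozero} through the Fourier transform. Taking Fourier transforms in \eqref{eq:vanmom} and using that translation by $\ell/B$ in the $x_k$ variable corresponds to multiplication by $e^{-2\pi i \ell \xi_k / B}$, I obtain
$$
\hat g(\xi) = \hat f(\xi)\sum_{\ell=0}^{R-1}\binom{R-1}{\ell}(-1)^\ell e^{-2\pi i \ell \xi_k/B} = \hat f(\xi)\left(1 - e^{-2\pi i \xi_k/B}\right)^{R}.
$$
From this factorization the vanishing-moment claim is immediate: since $f\in C(\R^d)$ has compact support, $\hat f$ is smooth (indeed real-analytic), hence so is $\hat g$, and the factor $(1-e^{-2\pi i \xi_k/B})^R$ has a zero of order exactly $R$ at $\xi_k = 0$; therefore $\partial_{\xi_k}^{\ell}\hat g(\xi)\big|_{\xi_k=0}=0$ for $\ell = 0,\dots,R-1$, which is precisely the statement that $\int_{\R} x_k^\ell g(x)\,dx_k = 0$ for $\ell\in\{0,\dots,R-1\}$ (up to the standard constant $(-2\pi i)^{-\ell}$, and after noting these partial derivatives in $\xi_k$ at $\xi_k=0$ are exactly the Fourier transforms in the remaining variables of the moment functions, which must then vanish as functions). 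For \eqref{eq:nozero}: fix $\xi \in [-B,B]^d$ with $\xi_k \ne 0$. Then $\xi \ne 0$, so by hypothesis $\hat f(\xi)\ne 0$; and $1 - e^{-2\pi i \xi_k/B} = 0$ only when $\xi_k/B \in \Z$, i.e. $\xi_k \in B\Z$, which for $\xi_k \in [-B,B]\setminus\{0\}$ forces $\xi_k = \pm B$ — so I need the hypothesis range to exclude these, but it does not. Here I would use that $\hat f(\xi)\ne 0$ is assumed on all of $[-B,B]^d\setminus\{0\}$ including $\xi_k=\pm B$, yet the second factor genuinely vanishes there; hence the correct reading is that \eqref{eq:nozero} should be understood with $\xi_k \notin B\Z$, or—more likely as intended—the hypothesis $\hat f \ne 0$ on $[-B,B]^d\setminus\{0\}$ combined with the product formula gives $\hat g(\xi)\ne 0$ for all $\xi\in[-B,B]^d$ with $\xi_k \ne 0$ and $\xi_k \ne \pm B$, which suffices for the downstream application (where a strict neighborhood of the origin is what matters). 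I would state the factorization $\hat g = \hat f \cdot (1-e^{-2\pi i\,\cdot_k/B})^R$ as the key lemma and deduce both parts from it.

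\textbf{Main obstacle.} The genuine computation is the identity $\sum_{\ell=0}^{R-1}\binom{R-1}{\ell}(-1)^\ell z^\ell = (1-z)^{R-1}$ with $z = e^{-2\pi i\xi_k/B}$ — wait, this is the binomial theorem and gives exponent $R-1$, not $R$; so I would need to double-check whether the intended coefficients in \eqref{eq:vanmom} produce $(1-e^{-2\pi i\xi_k/B})^{R-1}$ (giving $R-1$ vanishing moments, off by one from the claim) or whether an extra factor is hidden. Resolving this normalization — i.e. confirming that $\binom{R-1}{\ell}$ with $\ell$ running $0$ to $R-1$ yields a zero of order $R-1$ at the origin, and then arguing that a zero of order $R-1$ of $\hat g$ at $\xi_k=0$ is equivalent to $R$ vanishing moments ($\ell=0,\dots,R-1$ is $R$ conditions, matching a zero of order $R-1$ in the derivative count $0,1,\dots,R-1$) — is the only subtle point; everything else is the substitution argument of Step 1 or the two-line Fourier factorization of Step 2. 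I expect the cleanest writeup to lead entirely with the Fourier factorization, derive the vanishing moments by reading off the order of the zero at $\xi_k = 0$, and obtain \eqref{eq:nozero} by noting $\hat f$ is nonzero off the origin while the trigonometric factor is nonzero whenever $\xi_k/B\notin\Z$.
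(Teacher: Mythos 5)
Your Step 2 is essentially the paper's own proof: the paper likewise takes the Fourier transform of \eqref{eq:vanmom}, obtains the factorization $\hat g(\xi)=\bigl(1-e^{-2\pi i\xi_k/B}\bigr)^{R-1}\hat f(\xi)$ (their \eqref{eq:fouriermoments}, written out for $B=1$), reads off the vanishing of $\xi_k$-derivatives of $\hat g$ on the hyperplane $\{\xi_k=0\}$, and disposes of \eqref{eq:nozero} ``by inspection'' of the same identity. Your abandoned Step 1 (finite differences) is not needed and can be dropped; your remark that the derivatives must vanish on the whole hyperplane, not just at the origin, is handled correctly since the trigonometric factor depends on $\xi_k$ only.

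The two points you hedge on are genuine defects of the lemma as stated, not of your approach, but you should resolve them instead of leaving them open. First, the binomial theorem really does give exponent $R-1$, and a zero of order $R-1$ at $\xi_k=0$ yields vanishing of the derivatives of orders $0,\dots,R-2$ only, i.e.\ $R-1$ vanishing moments in the sense of Definition~\ref{def:vanishingmoments}; your parenthetical claim that a zero of order $R-1$ ``is equivalent to $R$ vanishing moments'' is false. A concrete check: for $R=2$, $B=1$, $g=f-f(\cdot-e_k)$ has $\int_\R x_k\,g\,dx_k=-\int_\R f\,dx_k$, which is nonzero in general. The paper's proof makes exactly this silent off-by-one step from the correct factorization; the honest conclusion is either ``$R-1$ vanishing moments'' or one should use coefficients $\binom{R}{\ell}(-1)^\ell$ with $\ell=0,\dots,R$ (one extra translate), which is what the downstream application actually needs. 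Second, your boundary observation is also correct: for $R\ge2$ the factor $1-e^{-2\pi i\xi_k/B}$ vanishes at $\xi_k=\pm B$, so \eqref{eq:nozero} can only hold for $0<|\xi_k|<B$; the paper's ``by inspection'' overlooks this, and the standard repair is, as you suggest, to invoke the lemma with $B$ strictly larger than the frequency box required later. With these two corrections made explicit, your write-up coincides with (and is more careful than) the paper's argument.
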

\begin{proof}For simplicity of exposition, we consider the case $B=1$ only.
    Taking the Fourier transform of (\ref{eq:vanmom}) yields
    \begin{align}
    \label{eq:fouriermoments}
        \hat{g}(\xi)=\sum_{\ell=0}^{R-1}\binom{R-1}{\ell}(-1)^\ell e^{-2\pi i \ell\xi_k}\hat f(\xi)
        = \left(1-e^{-2\pi i \xi_k}\right)^{R-1}\cdot \hat f(\xi)
    \end{align}
    which implies
    $$
        \left(\frac{\partial^{\, \ell}}{\partial \xi_k^\ell}\hat g \right)_{\xi_k = 0} = 0, \quad \mbox{for all } \ell\in \{0,\dots , R-1\}.
    $$
    But by Definition~\ref{def:vanishingmoments}, this says precisely that $g$ possesses
    the desired vanishing moments.
    Statement (\ref{eq:nozero}) follows by inspection of (\ref{eq:fouriermoments}).
\end{proof}

\subsection{Affine Representation Systems}
We are now ready to introduce the general family of representation systems announced earlier in the paper as \emph{affine systems}. This class 
%as In this section, we introduce a family of representation systems, coined \emph{affine systems}, that 
includes all representation systems based on affine transformations of a given ``mother function''. Special cases of affine systems are wavelets, ridgelets, curvelets, shearlets, $\alpha$-shearlets, and more generally $\alpha$-molecules, as well as tensor products thereof. The formal definition of affine systems is as follows.
\begin{definition}\label{def:affsys}
    Let $d, r, S\in \mathbb{N}$, $\Omega\subset \mathbb{R}^d$ be bounded, and $f\in L^2(\R^d)$ compactly supported.
    Let $\delta >0$, $(c_i^s)_{i=1}^r \subset \mathbb{R}$, for $s=1,\dots , S$, and
    $(d_i)_{i=1}^r\subset \mathbb{R}^d$. Further, let $A_j \in \R^{d \times d}, j \in \N$, be full-rank, with the absolute values of the eigenvalues of $A_j$ bounded below by $1$.     
    Consider the compactly supported functions
    $$
        g_s:=\sum_{i=1}^r c_i^s f(\cdot - d_i), \quad s = 1,\dots , S.
    $$
    We define the \emph{affine system} $\mathcal{D}\subset L^2(\Omega)$ corresponding to $(g_s)_{s = 1}^S$ according to
    $$
        \mathcal{D}:=\left\{g_s^{j,b} := \left( | \! \det(A_j)|^{\frac{1}{2}}g_s(A_j\cdot - \, \delta \cdot b)\right)_{|\Omega}:\ s=1,\dots , S,\ b\in \mathbb{Z}^d, \ j\in \mathbb{N}, \mbox{ and } g_s^{j,b} \neq 0\right\},
    $$
    and refer to $f$ as the \emph{generator function of $\mathcal{D}$.}
    \end{definition}
We define the sub-systems $\mathcal{D}_{s,j} := \{g_s^{j,b} \in \mathcal{D}:\ b\in \mathbb{Z}^d \}$. Since every $g_s$, $s = 1, \dots, S,$ has compact support, $|\mathcal{D}_{s,j}|$ is finite for all $s= 1, \dots, S$ and $j\in \N$. 
Indeed, we observe that there exists $c_{\textrm{b}} := c_{\textrm{b}}((g_s)_{s = 1}^S, \delta, d ) > 0$ such that for all $s\in \{1, \dots, S \}$, $j\in \Z$, and $b \in \Z^d$,
\begin{align}\label{eq:boundOnB}
g_s^{j,b} \in \mathcal{D} \implies \|b\|_\infty \leq c_b \|A_j\|_\infty.
\end{align}
As the $\mathcal{D}_{s,j}$ are finite, we can organize the representation system $\mathcal{D}$ according to
\begin{equation}
\label{eq:canonicalordering}
    \mathcal{D} = (\varphi_i)_{i\in \mathbb{N}}=\left(\mathcal{D}_{1,1},\dots , \mathcal{D}_{S,1},\mathcal{D}_{1,2},\dots , \mathcal{D}_{S,2},\dots \right),
\end{equation}
where the elements within each sub-system $\mathcal{D}_{s,j}$ may be ordered arbitrarily.
This ordering of $\mathcal{D}$ is assumed in the remainder of the paper and will be referred to as \emph{canonical ordering}. 

Moreover, we note that if there exists $s_o\in \{1, \dots, S\}$ such that $g_{s_o}$ is nonzero, then there \pp{is} a constant $c_{\textrm{o}} := c_{\textrm{o}}((g_s)_{s = 1}^S, \delta, d) > 0$ such that
\begin{align}\label{eq:WeNeedThisAssumption}
    \sum_{s = 1}^S|\mathcal{D}_{s,j}| \geq c_{\textrm{o}}|\! \det(A_j)|, \text{ for all } j\in \N.
\end{align}
The next result establishes that all affine systems whose generator functions can be approximated to within arbitrary accuracy by neural networks are (effectively) representable by neural networks.
\begin{theorem}\label{thm:affdicopt} Let $d\in \N$, $\rho:\mathbb{R}\to \mathbb{R}$, $\Omega\subset \mathbb{R}^d$ be bounded, and $\mathcal{D}=(\varphi_i)_{i\in \mathbb{N}}\subset L^2(\Omega)$ an affine system with generator function $f$. Suppose that there exist constants $L, R \in \N$ such that for all $D,\varepsilon>0$,  there is
    $\Phi_{D,\varepsilon}\in \cNN_{L,R,d,\rho}$ with
    \begin{equation}\label{eq:bootstrapbound}
        \|f - \Phi_{D,\varepsilon}\|_{L^2([-D,D]^d)} \leq \varepsilon.
    \end{equation}
    Then, $\mathcal{D}$ is representable by neural networks with activation function $\rho$.
    If, in addition, the weights of $\Phi_{D,\varepsilon}$ are polynomially bounded in $D, \varepsilon^{-1}$, and if there exist $a>0$ and $c>0$ such that 
    \begin{equation}\label{eq:detgrowth}
        \sum_{k=1}^{j-1}|\det(A_k)| \ge c \|A_j\|_\infty^a, \,\, \text{ for all } j \in \N, 
    \end{equation}
    then $\mathcal{D}$ is effectively representable by neural networks with activation function $\rho$.
\end{theorem}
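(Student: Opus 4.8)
The plan is to bootstrap from the approximation hypothesis on the generator $f$ to \emph{all} elements of $\mathcal{D}$ by applying, in turn, the two invariance results Proposition~\ref{prop:transapp} and Proposition~\ref{prop:affscalinv}, and then to translate the resulting parameter bounds from the ``intrinsic'' scale parameters $\|A_j\|_\infty,\|b\|_\infty$ into a polynomial bound in the canonical enumeration index $i$. Recall that, by Definition~\ref{def:affsys}, every $\varphi_i\in\mathcal{D}$ is of the form $\varphi_i=g_s^{j,b}=\bigl(|\!\det(A_j)|^{1/2}g_s(A_j\cdot-\,\delta b)\bigr)|_\Omega$ with $g_s=\sum_{i=1}^r c_i^s f(\cdot-d_i)$, where $S,r,(c_i^s),(d_i),\delta$ are fixed. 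We may assume $\mathcal{D}\neq\emptyset$, so some $g_{s_o}$ is nonzero and \eqref{eq:WeNeedThisAssumption} applies; fix also $E_0$ with $\Omega\subset[-E_0,E_0]^d$.

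First I would pass from $f$ to the $g_s$: applying Proposition~\ref{prop:transapp} with coefficients $(c_i^s)_{i=1}^r$ and shifts $(d_i)_{i=1}^r$, starting from hypothesis \eqref{eq:bootstrapbound} (so $M=R$ there), yields for each $s$ a uniform edge bound $M_s$ and networks in $\cNN_{L,M_s,d,\rho}$ approximating $g_s$ on every box $[-E,E]^d$ to arbitrary accuracy; moreover, when the weights of $\Phi_{D,\varepsilon}$ are polynomially bounded in $D,\varepsilon^{-1}$, the weights of the $g_s$-approximants are polynomially bounded in $E,\eta^{-1}$, the finitely many constants $\sum_i|c_i^s|$ and $\max_i\|d_i\|_\infty$ being absorbed. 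Next I would pass from $g_s$ to $g_s^{j,b}$ via Proposition~\ref{prop:affscalinv} with $A=A_j$ (full-rank) and shift $\delta b$: this gives networks in $\cNN_{L,M_s',d,\rho}$ approximating $|\!\det(A_j)|^{1/2}g_s(A_j\cdot-\,\delta b)$ on $[-E,E]^d$, with $M_s'$ depending only on $M_s$ and $d$, hence \emph{independent of $j$ and $b$}. Setting $R':=\max_s M_s'$ and restricting to $\Omega\subset[-E_0,E_0]^d$ produces, for every index $i$ and every $\eta>0$, a network in $\cNN_{L,R',d,\rho}$ within $\eta$ of $\varphi_i$ in $L^2(\Omega)$ — which is exactly representability of $\mathcal{D}$ by neural networks in the sense of Definition~\ref{def:wellrep}.

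For effective representability it then remains to bound the weights polynomially in $i,\eta^{-1}$. Proposition~\ref{prop:affscalinv} gives that the weights of the approximant to $g_s^{j,b}$ are polynomially bounded in $\|A_j\|_\infty$, $E_0$, $\|\delta b\|_\infty$, and $\eta^{-1}$; since $E_0$ and $\delta$ are fixed, it suffices to bound $\|A_j\|_\infty$ and $\|b\|_\infty$ polynomially in $i$. This is where the two extra hypotheses are used. If $\varphi_i\in\mathcal{D}_{s,j}$, the canonical ordering \eqref{eq:canonicalordering} forces $i\ge\sum_{k=1}^{j-1}\sum_{s'=1}^S|\mathcal{D}_{s',k}|$, which by \eqref{eq:WeNeedThisAssumption} is at least $c_{\mathrm o}\sum_{k=1}^{j-1}|\!\det(A_k)|$; invoking \eqref{eq:detgrowth} then yields $\|A_j\|_\infty^a\le (c\,c_{\mathrm o})^{-1}i$ for $j\ge 2$, while for $j=1$ the quantity $\|A_1\|_\infty$ is simply a fixed constant. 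Combining with \eqref{eq:boundOnB}, which gives $\|b\|_\infty\le c_{\mathrm b}\|A_j\|_\infty$, shows that $\|A_j\|_\infty$ and $\|\delta b\|_\infty$ are both polynomially bounded in $i$, so the weights of the approximating networks are polynomially bounded in $i,\eta^{-1}$; together with the regularity of $\rho$ required in Definition~\ref{def:wellrep} (Lipschitz, or differentiable with $\rho'$ polynomially dominated — as is the case for the activation functions of interest), this gives effective representability.

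I expect the routine composition of the two invariance propositions — which already absorb all the layer- and edge-counting and the weight estimates — to be painless; the only real point requiring care is the combinatorial bookkeeping of the last paragraph, namely converting the scale-dependent parameters $\|A_j\|_\infty,\|b\|_\infty$ into a polynomial bound in the linear index $i$. The determinant-growth condition \eqref{eq:detgrowth} is exactly what makes this possible: without it the enumeration index could grow far more slowly than $\|A_j\|_\infty$ (for instance if $|\!\det(A_j)|$ stays bounded while $\|A_j\|_\infty\to\infty$), so that even uniformly bounded weights per scale would fail to be polynomially bounded in $i$, and effective representability would break down.
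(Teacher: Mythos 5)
Your proposal is correct and follows essentially the same route as the paper's proof: bootstrap from the generator $f$ to the elements $g_s^{j,b}$ via Propositions~\ref{prop:transapp} and \ref{prop:affscalinv} (the paper applies them in one combined step to $\varphi_i = |\!\det(A_{j_i})|^{1/2}\sum_k c_k^{s_i} f(A_{j_i}\cdot-\,\delta b_i-d_k)$, which is only a cosmetic difference), then use \eqref{eq:boundOnB}, the canonical ordering with \eqref{eq:WeNeedThisAssumption}, and \eqref{eq:detgrowth} to convert the bound in $\|A_{j_i}\|_\infty,\|b_i\|_\infty,\eta^{-1}$ into one polynomial in $i,\eta^{-1}$. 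Your bookkeeping (including the explicit treatment of $j=1$) matches the paper's argument, so no gap to report.
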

\begin{proof}
    Let $(g_s)_{s = 1}^S$ be as in Definition~\ref{def:affsys}.
If $g_s = 0$ for all $s \in \{1, \dots, S\}$, then $\mathcal{D} = \mathlarger{\mathlarger{\varnothing}}$ and the result is trivial. 
Hence, we can assume that there exists at least one $s \in \{1, \dots, S\}$ such that $g_s\neq 0$, implying that \eqref{eq:WeNeedThisAssumption} holds.

    Pick $D$ such that $\Omega\subset [-D,D]^d$. We first show that (\ref{eq:bootstrapbound}) implies representability of $\mathcal{D}$ by neural networks with activation function $\rho$. To this end, we need to establish the existence of constants $L, R \in \N$ such that for all $i\in \mathbb{N}$ and all $\eta>0$, there exist $\Phi_{i,\eta}\in \cNN_{L, R, d, \rho}$
    with
    \begin{align}
    \label{eq:approxOfDictElements}
           \|\varphi_i- \Phi_{i,\eta}\|_{L^2(\Omega)}\le \eta.
    \end{align}
    The elements of $\mathcal{D}$ consist of dilations and translations of $f$ according to
    \begin{equation}  \label{eq:affsysex}
    \varphi_i = |\!\det(A_{j_i})|^{\frac{1}{2}}\left(\sum_{k=1}^r c_k^{s_i} f(A_{j_i}\cdot-\, \delta b_i - d_k)\right)_{|\Omega},
    \end{equation}
for some $r\in \N$ independent of $i$, and $s_i\in \{1,\dots , S\}$, $j_i\in \mathbb{N}$, and $b_i\in \mathbb{Z}^d$. Thus \eqref{eq:approxOfDictElements} follows directly by Propositions \ref{prop:affscalinv} and \ref{prop:transapp}.

    It remains to show that the weights of $\Phi_{D,\varepsilon}$ in (\ref{eq:bootstrapbound}) polynomially bounded in $D, \varepsilon^{-1}$ implies that $\mathcal{D}$ is effectively representable by neural networks with activation function $\rho$, which, by Definition~\ref{def:wellrep}, means that the weights of $\Phi_{i,\eta}$ are polynomially bounded in $i,\eta^{-1}$.
Propositions \ref{prop:affscalinv} and \ref{prop:transapp} state that the weights of $\Phi_{i,\eta}$ are polynomially bounded in
    $$
        \|A_{j_i}\|_\infty, D, \|b_i\|_\infty, \sum_{k=1}^r |c_k|, \max_{k=1, \dots, r} \|d_k\|_\infty,  \eta^{-1}.
    $$
    Thanks to \eqref{eq:boundOnB} we have $\|b_i\|_\infty \in \mathcal{O}(\|A_{j_i}\|_\infty)$.
    Moreover, the quantities $D$, $\sum_{k=1}^r|c_k|$, and $\max_{k=1,\dots,r} \|d_k\|_\infty$ do not depend on $i$. 
    We can thus conclude that the weights of $\Phi_{i,\eta}$ are polynomially bounded in
    \begin{equation}
    \label{eq:polydetbd}
        \|A_{j_i}\|_\infty, \eta^{-1}.
    \end{equation}
To complete the proof, we need to show that the quantities $\|A_{j_i}\|_\infty$ are polynomially bounded in $i$. To this end, we first observe that $\varphi_i$ according to (\ref{eq:affsysex})
    satisfies $\varphi_i\in \mathcal{D}_{s_i,j_i}$ for some $s_i\in \{1,\dots , S\}$. Thanks to (\ref{eq:WeNeedThisAssumption}) and the canonical ordering (\ref{eq:canonicalordering}), there exists a constant $c>0$ such that
    $$
        i \ge c \sum_{k=1}^{j_i-1} |\det(A_k)|.
    $$
    We finally appeal to (\ref{eq:detgrowth}) to conclude that $\|A_{j_i}\|_\infty$ is polynomially bounded in $i$,
    which, together with (\ref{eq:polydetbd}), establishes the desired result.
\end{proof}
We remark that condition (\ref{eq:detgrowth}) is very weak; in fact, we are not aware of an affine system in the literature that would violate it.

We now proceed to what is probably the central result of this paper, namely that neural networks provide optimal approximations for all function classes that are optimally approximated by any affine system with generator function that can be approximated to within arbitrary accuracy by neural networks.
\begin{theorem}
    \label{thm:optitransgeneral} Let $d\in \N$, $\Omega\subset \mathbb{R}^d$ be bounded, $\rho:\mathbb{R}\to \mathbb{R}$, and $\mathcal{D}=(\varphi_i)_{i\in \mathbb{N}}\subset L^2(\Omega)$ an affine system with generator function $f$. Assume that there exist $L, R\in \N$ such that for all $D, \varepsilon>0$, there is
    $\Phi_{D,\varepsilon}\in \cNN_{L,R,d,\rho}$ satisfying $\| f -\Phi_{D,\varepsilon} \|_{L^2([-D,D]^{d})} \leq \varepsilon$.
        Then, for all function classes $\cC\subset L^2(\Omega)$, we have
    $$
        \gamma_{\cNN}^\ast(\cC, \rho)\geq \gamma^\ast(\cC,\mathcal{D}).
    $$
    If, in addition, there is a two-dimensional polynomial $\widetilde{\pi}$ such that the weights of $\Phi_{D,\varepsilon}$ are bounded by $|\widetilde{\pi}(D,\varepsilon^{-1})|$, there exist $a > 0$ and $c > 0$ such that \eqref{eq:detgrowth} holds, and $\mathcal{C}$ is optimally represented by $\mathcal{D}$ (according to Definition~\ref{def:repopti}), then 
    for all $\gamma <\gamma^{\ast}(\mathcal{C})$, there exist a constant $c > 0$, a polynomial $\pi$, and
    % $c',C > 0$ and 
    a map
    $$
    \Learn: \left(\!0,\frac{1}{2}\right) \times L^2(\Omega) \to \cNN_{L, \infty, d, \rho}^{\pi} ,
    $$
    such that for every $f\in \cC$ the weights in $\Learn (\varepsilon,f)$ can be represented by no more than $\lceil c\log_2(\varepsilon^{-1})\rceil$ bits while
   $\|f - \Learn (\varepsilon,f)\|_{L^2(\Omega)}\le \varepsilon$ and $\mathcal{M}(\Learn (\varepsilon,f)) \in \mathcal{O}(\varepsilon^{-1/\gamma}), \varepsilon \to 0$.
\end{theorem}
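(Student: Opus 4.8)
The plan is to obtain the statement as a direct consequence of the machinery assembled in Section~\ref{sec:bestapprox} and in the present section, so that the ``proof'' amounts to chaining together Theorems~\ref{thm:affdicopt}, \ref{theo:ApproxOfNeuralNetworks}, and \ref{theo:EncodeOfNeuralNetworks} after checking that the hypotheses line up.

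First I would dispatch the inequality $\gamma_{\cNN}^\ast(\cC,\rho)\geq \gamma^\ast(\cC,\mathcal{D})$. The hypothesis that $f$ is approximable by networks in $\cNN_{L,R,d,\rho}$ for all $D,\varepsilon>0$ is exactly condition \eqref{eq:bootstrapbound} of Theorem~\ref{thm:affdicopt}, whose first conclusion then gives that the affine system $\mathcal{D}$ is representable by neural networks with activation function $\rho$ in the sense of Definition~\ref{def:wellrep}. Feeding this into Theorem~\ref{theo:ApproxOfNeuralNetworks} (whose hypotheses are now met verbatim, with the same $L$ and with $M'\in\mathcal{O}(M)$) yields \eqref{eq:theSameStatementAgainButWithSymbols}, i.e.\ $\gamma_{\cNN}^\ast(\cC,\rho)\geq \gamma^\ast(\cC,\mathcal{D})$, for every function class $\cC\subset L^2(\Omega)$. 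Note that this first part uses none of the regularity or growth assumptions, which is why it is stated separately.

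For the second part I would argue as follows. The extra hypotheses are that the weights of $\Phi_{D,\varepsilon}$ are bounded by $|\widetilde{\pi}(D,\varepsilon^{-1})|$ for a two-dimensional polynomial $\widetilde{\pi}$ (equivalently, polynomially bounded in $D,\varepsilon^{-1}$) and that the eigenvalue-growth condition \eqref{eq:detgrowth} holds; these are precisely the additional hypotheses of the second part of Theorem~\ref{thm:affdicopt}, which therefore tells us that $\mathcal{D}$ is \emph{effectively} representable by neural networks with activation function $\rho$ (in particular $\rho$ is of the regularity demanded by Definition~\ref{def:wellrep}, and $\mathcal{D}=(\varphi_i)_{i\in\mathbb{N}}$ via the canonical ordering \eqref{eq:canonicalordering}). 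Since by assumption $\cC$ is optimally represented by $\mathcal{D}$ in the sense of Definition~\ref{def:repopti}, we have $\gamma^{\ast,\text{eff}}(\cC,\mathcal{D})=\gamma^\ast(\cC)$, so any $\gamma<\gamma^\ast(\cC)$ also satisfies $\gamma<\gamma^{\ast,\text{eff}}(\cC,\mathcal{D})$. Applying Theorem~\ref{theo:EncodeOfNeuralNetworks} to this $\gamma$ then produces a polynomial $\pi$, constants $c>0$ and $L\in\N$, and a map $\Learn\colon(0,1/2)\times L^2(\Omega)\to\cNN_{L,\infty,d,\rho}^\pi$ such that for every $f\in\cC$ the weights of $\Learn(\varepsilon,f)$ are represented by at most $\lceil c\log_2(\varepsilon^{-1})\rceil$ bits, $\|f-\Learn(\varepsilon,f)\|_{L^2(\Omega)}\le\varepsilon$, and $\mathcal{M}(\Learn(\varepsilon,f))\in\mathcal{O}(\varepsilon^{-1/\gamma})$ as $\varepsilon\to0$, which is exactly the asserted conclusion.

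Since every analytical ingredient has already been established, there is no genuine obstacle here; the only points requiring care are purely formal: confirming that ``weights bounded by $|\widetilde{\pi}(D,\varepsilon^{-1})|$'' coincides with ``polynomially bounded in $D,\varepsilon^{-1}$'' as used in Theorem~\ref{thm:affdicopt}; tracking that the number of layers $L$ returned by Theorem~\ref{theo:EncodeOfNeuralNetworks} is the one inherited from the representability of $\mathcal{D}$, as in the statement; and observing that the activation-function regularity needed by Theorems~\ref{theo:ApproxOfNeuralNetworks} and \ref{theo:EncodeOfNeuralNetworks} is automatically in force because effective representability of $\mathcal{D}$ already presupposes $\rho$ to be Lipschitz-continuous or to have a polynomially dominated derivative.
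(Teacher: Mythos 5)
Your proposal is correct and follows exactly the route the paper takes: its proof of Theorem~\ref{thm:optitransgeneral} is precisely the combination of Theorem~\ref{thm:affdicopt} (representability, resp.\ effective representability, of the affine system) with Theorem~\ref{theo:ApproxOfNeuralNetworks} for the rate inequality and Theorem~\ref{theo:EncodeOfNeuralNetworks} for the quantized \(\Learn\) map, using \(\gamma^{\ast,\text{eff}}(\cC,\mathcal{D})=\gamma^\ast(\cC)\) from optimal representability. Your additional bookkeeping (polynomial weight bounds, the inherited \(L\), and the activation-function regularity being built into effective representability) is consistent with how the paper itself handles these points.
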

\begin{proof}
    The proof follows directly by combining Theorem~\ref{thm:affdicopt} with Theorems \ref{theo:ApproxOfNeuralNetworks} and \ref{theo:EncodeOfNeuralNetworks}.
\end{proof}
Theorem~\ref{thm:optitransgeneral} reveals a remarkable universality and optimality property of neural networks: All function classes that can be optimally represented by an affine system with generator $f$ satisfying \eqref{eq:bootstrapbound} are also optimally representable by neural networks. 

%------------------------------------------------------------------------------------------------------------------------------
\section{$\alpha$-Shearlets and Cartoon-Like Functions}\label{sec:alphacart}
%------------------------------------------------------------------------------------------------------------------------------

We next present an explicit pair $(\cC, \mathcal{D})$ of function class and representation system satisfying $\gamma_{\cNN}^\ast(\cC, \rho) =  \gamma^\ast(\cC,\mathcal{D})$.
Specifically, we take $\alpha$-shearlets as representation system $\mathcal{D} \subset L^2(\R^2)$ and $\alpha^{-1}$-cartoon-like functions as function class $\cC$. 
%P6: with two pieces "separated by" oder jedes der beiden pieces hat eine smooth boundary?
Cartoon-like functions are piecewise smooth functions with only two pieces. These pieces are separated by a smooth interface. In a sense, they can be understood as a prototype of a two-dimensional classification function with two homogeneous areas corresponding to two classes. Understanding neural network approximation of this function class is hence relevant to classification tasks in machine learning.
We point out that the definition of $\alpha$-shearlets in this paper differs slightly
from that in \cite{GroKKS2016alphaMolecules}. Concretely, relative to \cite{GroKKS2016alphaMolecules} our definition replaces 
$\alpha^{-1}$ by $\alpha$ so that $\alpha$-shearlets are a special case of $\alpha$-molecules, whereas in \cite{GroKKS2016alphaMolecules} $\alpha$-shearlets are a special case of $\alpha^{-1}$-molecules. We will need dilation and shearing matrices defined as
\begin{align*}
D_{\alpha, a} := \left(\begin{array}{l l}
a & 0 \\
0 & a^\alpha
\end{array} \right),\quad \quad
    J := \left(\begin{array}{l l}
        0 & 1 \\
        1 & 0
    \end{array} \right),\quad \text{and} \quad S_{k} := \left(\begin{array}{l l}
        1 & k \\
        0 & 1
    \end{array} \right).
\end{align*}
This leads us to the following definition which is a slightly modified version of the corresponding definition in \cite{Voi17}.

\begin{definition}[\cite{Voi17}] \label{def:alphashearlet}
For $\delta\in \R^+$, $\alpha \in [0,1]$, and $f,g\in L^2(\mathbb{R}^2)$, the \emph{cone-adapted $\alpha$-shearlet system} $\mathcal{SH}_{\alpha}(f,g, \delta)$
generated by $f,g \in L^2(\R^2)$ is defined as
\begin{align*}
    \mathcal{SH}_{\alpha}(f,g,\delta):= \mathcal{SH}_{\alpha}^0(f,g,\delta)\cup  \mathcal{SH}_{\alpha}^1(f,g,\delta),
\end{align*}
where
\begin{align*}
    \mathcal{SH}_{\alpha}^0(f,g,\delta) :&= \left\{  f(\cdot - \delta t):  t\in \Z^2\right\}\!, \\
    \mathcal{SH}_{\alpha}^1(f,g,\delta) :&= \left\{2^{\ell (1 + \alpha)/2} g(S_k D_{\alpha, 2^{\ell}}J^\tau  \cdot - \, \delta t ): \, \ell\in \mathbb{N}_0,\, |k| \leq \lceil 2^{\ell(1-\alpha)} \rceil,\,
    t \in \Z^2\!,\, k \in \Z , \ \tau \in \{0,1\} \right\}\!.
\end{align*}
\end{definition}
Our interest in $\alpha$-shearlets stems from the fact that they optimally represent $\alpha^{-1}$-cartoon-like functions in the sense of Definition~\ref{def:repopti}.

\begin{definition}
Let $\beta \in [1,2)$, and $\nu >0$. Define
\begin{align*}
    \mathcal{E}^{\beta}(\R^2; \nu) = \{ f\in L^2(\R^2) : f = f_0 + \chi_B f_1\},
\end{align*}
where $f_0, f_1 \in C^\beta(\R^2)$, $\suppp f_0, \suppp f_1 \subset (0,1)^2$, $B \subset [0,1]^2$, $\partial B \in C^\beta$, $\|f_1\|_{C^\beta},\|f_2\|_{C^\beta}, \|\partial B\|_{C^\beta} < \nu$, and $\chi_B$ denotes the characteristic function of $B$. The elements of $\mathcal{E}^{\beta}(\R^2; \nu)$ are called \emph{$\beta$-cartoon-like functions}.
\end{definition}

This function class was originally introduced in \cite{Don2001Sparse} as a model class for functions governed by curvilinear discontinuities of prescribed regularity. In this sense, $\beta$-cartoon-like functions provide a convenient model
for images governed by edges or for the solutions of transport equations which often exibit curvilinear singularities.

The optimal exponent $\gamma^\ast(\mathcal{E}^{\beta}(\R^2; \nu))$ was found in \cite{Don2001Sparse,Grohs2016}:

\begin{theorem}\label{thm:ExponentOfCartoons}
For $\beta \in [1,2]$, and $\nu>0$, we have
\[
\gamma^*(\mathcal{E}^{\beta}(\R^2; \nu)) = \frac{\beta}{2}.
\]
\end{theorem}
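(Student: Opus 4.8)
The plan is to reduce the computation of $\gamma^\ast(\mathcal{E}^\beta(\R^2;\nu))$ — write $\cC=\mathcal{E}^\beta(\R^2;\nu)$ for brevity — to a two-sided metric-entropy estimate. Since the encoders and decoders in Definition~\ref{def:optexp} are arbitrary maps, one has, for every $\varepsilon>0$, the elementary bounds $L(\varepsilon,\cC)\le 1+\log_2 N(\varepsilon,\cC)$ and $L(\varepsilon,\cC)\ge\log_2 P(2\varepsilon,\cC)$, where $N(\varepsilon,\cC)$ is the $\varepsilon$-covering number and $P(2\varepsilon,\cC)$ the $2\varepsilon$-packing number of $\cC$ in $L^2(\R^2)$ (this is the link to Kolmogorov entropy recalled after Definition~\ref{def:optexp}; see \cite{grohs2015optimally}). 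Hence it suffices to show that $N(\varepsilon,\cC)\le\exp(C\,\varepsilon^{-2/\beta}\log_2(\varepsilon^{-1}))$ and that $\cC$ contains, for every small $\varepsilon$, an $\varepsilon$-separated set of cardinality $\exp(c\,\varepsilon^{-2/\beta})$. Because $\gamma^\ast$ is defined through a supremum, the $\log_2(\varepsilon^{-1})$ factor is immaterial: the first estimate gives $L(\varepsilon,\cC)\in\mathcal{O}(\varepsilon^{-1/\gamma})$ for every $\gamma<\beta/2$, hence $\gamma^\ast(\cC)\ge\beta/2$, while the second (after rescaling $\varepsilon$) gives $L(\varepsilon,\cC)\ge c'\varepsilon^{-2/\beta}$, hence $L(\varepsilon,\cC)\notin\mathcal{O}(\varepsilon^{-1/\gamma})$ for $\gamma>\beta/2$ and so $\gamma^\ast(\cC)\le\beta/2$. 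Together these yield $\gamma^\ast(\cC)=\beta/2$.

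\textbf{Upper bound on the covering number.} I would build an explicit $O(\varepsilon)$-net of $\cC$ by digitizing, separately, the three ingredients of a representative $f=f_0+\chi_B f_1$. The $C^\beta$-balls containing $f_0$ and $f_1$ admit $L^\infty$-nets of mesh $\varepsilon$ consisting of piecewise polynomials of degree $\lceil\beta\rceil-1$ on a grid of side $\sim\varepsilon^{1/\beta}$ with coefficients quantized at resolution $\sim\varepsilon$; such a net has $\exp(C\varepsilon^{-2/\beta}\log(\varepsilon^{-1}))$ elements, and on the bounded domain $L^\infty$-closeness implies $L^2$-closeness. The boundary $\partial B$ is a $C^\beta$ curve of uniformly bounded length inside $[0,1]^2$, so it decomposes into finitely many graph patches, each of which lies within $L^\infty$-distance $\varepsilon^2$ of a piecewise-polynomial graph chosen from a family of size $\exp(C(\varepsilon^2)^{-1/\beta}\log(\varepsilon^{-1}))=\exp(C\varepsilon^{-2/\beta}\log(\varepsilon^{-1}))$; since the symmetric difference of two regions whose boundaries are $\varepsilon^2$-close in this sense is contained in an $\varepsilon^2$-tube around a rectifiable curve and hence has area $O(\varepsilon^2)$, replacing $\chi_B$ by the digitized indicator $\chi_{\tilde B}$ costs only $O(\varepsilon)$ in $L^2$, using $\|\chi_B f_1-\chi_{\tilde B}\tilde f_1\|_{L^2}\le\|f_1-\tilde f_1\|_{L^2}+\|\tilde f_1\|_\infty\,|B\triangle\tilde B|^{1/2}$. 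Taking all products of these finite families and applying the triangle inequality produces the desired $O(\varepsilon)$-net; note that the non-uniqueness of the decomposition is harmless, since we only need \emph{some} net element near $f$ and a valid decomposition exists by the definition of $\cC$.

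\textbf{Lower bound on the packing number.} I would exhibit an exponentially large $\varepsilon$-separated subset of $\cC$ obtained by perturbing a single reference edge. Fix $f_0\equiv 0$, a smooth bump $f_1$ that is constant and nonzero on a neighbourhood of a reference disk $B_\ast\subset(0,1)^2$, and $N\sim\varepsilon^{-2/\beta}$ pairwise disjoint arcs of length $\sim N^{-1}$ on $\partial B_\ast$. For $\sigma\in\{\pm1\}^N$, deform $\partial B_\ast$ on the $i$-th arc by a $C^\beta$ bump of width $\sim N^{-1}$, height $\sim c_0 N^{-\beta}$, and sign $\sigma_i$, with $c_0>0$ small enough (and the reference region rescaled if necessary) that the resulting curve has $C^\beta$-seminorm below $\nu$ and stays in $(0,1)^2$; the resulting $f_\sigma=\chi_{B_\sigma}f_1$ then lies in $\cC$. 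Each bump changes the region by area $\sim N^{-\beta-1}$, so $\|f_\sigma-f_{\sigma'}\|_{L^2}^2\gtrsim (\text{Hamming distance of }\sigma,\sigma')\cdot N^{-\beta-1}$. Applying the Varshamov--Gilbert bound to extract $\Sigma\subset\{\pm1\}^N$ with $|\Sigma|\ge 2^{N/8}$ and pairwise Hamming distance $\ge N/4$, we obtain $\|f_\sigma-f_{\sigma'}\|_{L^2}\gtrsim N^{-\beta/2}\sim\varepsilon$ for distinct $\sigma,\sigma'\in\Sigma$, i.e., an $\varepsilon$-separated subfamily of size $2^{N/8}=\exp(c\,\varepsilon^{-2/\beta})$.

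I expect the main obstacle — and the precise reason the exponent is $2/\beta$ rather than $1/\beta$ — to be the interplay between the $C^\beta$ regularity budget on $\partial B$ and the ``square-root law'' for indicators: an $L^\infty$-perturbation of the edge of size $\delta$ moves $\chi_B$ by $\sim\sqrt{\delta}$ in $L^2$, so the boundary must be resolved to accuracy $\sim\varepsilon^2$, which under $C^\beta$ control consumes $\sim(\varepsilon^2)^{-1/\beta}=\varepsilon^{-2/\beta}$ degrees of freedom. Making both the net construction and the packing construction respect this budget simultaneously, checking that the smooth-part contribution is of the same order and not larger, and carrying out the area estimates for tubular neighbourhoods of $C^\beta$ curves (including the behaviour near points where $\partial B$ has a vertical tangent, where one switches between the two graph representations) carefully, is the technical heart of the argument. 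An alternative route to the bound $\gamma^\ast(\cC)\ge\beta/2$ is to invoke the optimal best $M$-term approximation rate $\beta/2$ of $\alpha$-shearlets with $\alpha=1/\beta$ for $\cC$, together with the bounded-coefficient and polynomial-depth refinement of that approximation and Theorem~\ref{thm:optDictApproxLwrBd}; this is the viewpoint underlying \cite{Grohs2016}.
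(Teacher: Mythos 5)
Your argument is correct in outline, but it takes a genuinely different route from the paper. The paper's proof is essentially two citations: it invokes the proof of Theorem 2 in \cite{Don2001Sparse}, which shows that a function class containing a copy of $\ell^p_0$ has optimal exponent $(2-p)/(2p)$, and then cites \cite{Grohs2016} for the fact that $\mathcal{E}^{\beta}(\R^2;\nu)$ contains a copy of $\ell^p_0$ with $p=2/(\beta+1)$, giving $\gamma^\ast = \beta/2$. Your proposal replaces this by a self-contained two-sided metric-entropy computation: the elementary bounds $L(\varepsilon,\cC)\le 1+\log_2 N(\varepsilon,\cC)$ and $L(\varepsilon,\cC)\ge \log_2 P(2\varepsilon,\cC)$ are correct, your covering-number bound (digitizing $f_0$, $f_1$ to accuracy $\varepsilon$ and the boundary to accuracy $\varepsilon^2$, with the tube-area estimate converting boundary accuracy into an $O(\varepsilon)$ perturbation of $\chi_B f_1$) correctly yields $\gamma^\ast\ge\beta/2$ since the logarithmic factor is absorbed by the supremum, and your sign-perturbation hypercube with bumps of width $N^{-1}$ and height $\sim N^{-\beta}$ plus Varshamov--Gilbert is exactly the explicit form of the $\ell^p_0$-copy embedding that \cite{Grohs2016} provides, yielding $\gamma^\ast\le\beta/2$. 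What your route buys is transparency and self-containedness (and it makes visible why the exponent is $2/\beta$, via the square-root law for indicators); what the paper's route buys is brevity and the avoidance of the technical bookkeeping you rightly flag, in particular the uniform control of the graph-patch decomposition of $\partial B$ for $\beta$ close to $1$ --- a point you could sidestep entirely by approximating a bounded-$C^\beta$ parametrization of $\partial B$ directly by piecewise polynomials to accuracy $\varepsilon^2$ and using the tube-area bound, rather than working with graphs over coordinate axes. Your alternative suggestion for the achievability direction (effective $M$-term rate $\beta/2$ of $\alpha$-shearlets combined with Theorem~\ref{thm:optDictApproxLwrBd}) is also valid and is closer in spirit to the achievability implicit in the cited results.
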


\begin{proof}
The proof of \cite[Theorem 2]{Don2001Sparse} demonstrates that a general function class 
$\cC$
has optimal exponent 
$\gamma^*(\cC) = {(2-p)}/{2p}$
if $\cC$ contains a copy of $\ell^p_0$. The result now follows, since by \cite{Grohs2016}, the function class $\mathcal{E}^{\beta}(\R^2; \nu)$ does, indeed, contain a copy of $\ell^p_0$ for $p =  {2}/{(\beta+1)}$.
\end{proof}

Using Proposition \ref{prop:optimalitynoquant}, this result allows to conclude that neural networks achieving uniform approximation error $\varepsilon$ over the class $\cC$ of cartoon-like functions, with weights represented by no more than $\lceil c \log_{2}(\varepsilon^{-1}) \rceil$ bits, for some constant $c > 0$, yield an effective best $M$-edge approximation rate of at most $\beta/2$. 
Theorem~\ref{thm:mainopti} below demonstrates achievability for $\beta=1/\alpha$, with $\alpha \in [1/2,1]$.

The following theorem states that $\alpha$-shearlets yield optimal best $M$-term approximation rates for $\alpha^{-1}$-cartoon-like functions.

\begin{theorem}[\cite{Voi17}, Theorem 6.3 and Remark 6.4]\label{thm:FelixThm}
Let $\alpha \in [1/2, 1]$, $\nu>0$, $f \in C^{12}(\R^2)$, $g \in C^{32}(\R^2)$, both compactly supported and such that
\begin{itemize}
\item[(i)] $\widehat{f}(\xi) \neq 0$, \, for all $|\xi| \leq 1$,
\item[(ii)] $\widehat{g(\xi)} \neq 0$, \, for all $\xi=(\xi_1,\xi_2)^T\in \R^2$ such that $1/3\leq |\xi_1| \leq 3$ and $|\xi_2|\le |\xi_1|$,
\item[(iii)] $g$ has at least $7$ vanishing moments in $x_1$-direction, i.e.,
$$
    \int_\mathbb{R} x_1^\ell g(x_1,x_2)dx_1 = 0, \quad \mbox{for all }x_2\in \R,\ \ell\in \{0,\dots , 6\}.
$$
\end{itemize}
Then, there exists $\delta^\ast>0$ such that for all $\delta<\delta^\ast$, the function class $\mathcal{E}^{1/\alpha}(\R^2; \nu)$ is optimally represented by $\mathcal{SH}_{\alpha}(f,g,\delta)$.
\end{theorem}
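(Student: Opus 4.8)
The plan is to establish the two inequalities that together constitute optimal representability in the sense of Definition~\ref{def:repopti}, namely $\gamma^{\ast,\text{eff}}(\mathcal{E}^{1/\alpha}(\R^2;\nu),\mathcal{SH}_{\alpha}(f,g,\delta)) = \gamma^\ast(\mathcal{E}^{1/\alpha}(\R^2;\nu))$. The bound ``$\le$'' is immediate from Theorem~\ref{thm:optDictApproxLwrBd}, so the entire content is the matching lower bound on the effective best $M$-term approximation rate. By Theorem~\ref{thm:ExponentOfCartoons} the target exponent is $\beta/2 = 1/(2\alpha)$, and I would prove that $\alpha$-shearlets realize this rate through an approximation that respects a polynomial search depth and a uniform coefficient bound, i.e., in the sense of Definition~\ref{def:polydepth}.

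First I would show that, for $\delta$ sufficiently small, $\mathcal{SH}_{\alpha}(f,g,\delta)$ is a frame for $L^2(\R^2)$. Hypotheses (i) and (ii) ensure that the frequency-domain tiles generated by the low-pass function $f$ and by the anisotropically dilated, sheared copies of $g$ cover the whole plane without gaps: (ii) handles the cone $|\xi_2|\le|\xi_1|$ via the annulus $1/3\le|\xi_1|\le 3$ together with the dilations $D_{\alpha,2^\ell}$ and the shears $S_k$ spanning $|k|\le\lceil 2^{\ell(1-\alpha)}\rceil$, while the factor $J^\tau$ produces the complementary cone; smallness of $\delta$ then yields a lower frame bound by a standard oversampling/Calder\'on-type argument. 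Next I would verify that the elements of $\mathcal{SH}_{\alpha}(f,g,\delta)$ form a consistent system of $\alpha$-molecules of sufficiently high order in the sense of \cite{GroKKS2016alphaMolecules}: compact support together with $f\in C^{12}$ and $g\in C^{32}$ supply the required spatial decay and number of derivatives, the seven vanishing moments of $g$ in $x_1$-direction (hypothesis (iii)) supply the moment conditions, and the affine actions $S_k D_{\alpha,2^\ell}J^\tau$ are exactly the parabolic-type scalings and shears defining $\alpha$-molecules.

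Given the $\alpha$-molecule structure, I would invoke the transfer principle for $\alpha$-molecules: the cross-Gramian between two $\alpha$-molecule systems of sufficiently high order is almost diagonal, so the known optimal sparsity of the analysis coefficients of $\alpha^{-1}$-cartoon-like functions against a reference $\alpha$-molecule system (their coefficient sequences lie in a weak-$\ell^p$ ball with $p=\tfrac{2\alpha}{1+\alpha}$, whence $1/p-1/2 = 1/(2\alpha)$) transfers to the $\alpha$-shearlet coefficients, giving $\sup_{h\in\mathcal{E}^{1/\alpha}(\R^2;\nu)}\Gamma_M^{\mathcal{SH}_{\alpha}(f,g,\delta)}(h)\in\mathcal{O}(M^{-1/(2\alpha)})$ up to a logarithmic factor that does not affect the exponent. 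To upgrade this to the \emph{effective} rate I would take the approximant to consist of the $M$ largest frame coefficients: since $\mathcal{E}^{1/\alpha}(\R^2;\nu)$ is bounded in $L^2$ and the canonical dual frame elements have uniformly bounded $L^2$-norm, each retained coefficient obeys $|c_i|\le D$ for a fixed $D$; and since the coefficient magnitudes decay geometrically across scales, the $M$ retained indices all lie within the first $\mathcal{O}(M^c)$ entries of the canonical ordering \eqref{eq:canonicalordering}, so the search depth is polynomial in $M$. This verifies Definition~\ref{def:polydepth} with exponent $\beta/2$, and combined with Theorem~\ref{thm:ExponentOfCartoons} it closes the argument.

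The main obstacle is not the raw $N$-term rate, which is the classical curvelet/shearlet computation, but rather (a) the bookkeeping needed to certify the frame property and the precise $\alpha$-molecule order under exactly the smoothness and moment hypotheses (i)--(iii) and the smallness of $\delta$, and (b) the effective refinement: one must check that the thresholding estimator attaining the optimal rate simultaneously keeps the retained coefficients uniformly bounded and confines their indices to a polynomially long prefix of \eqref{eq:canonicalordering}. These are precisely the points at which the quantitative hypotheses are consumed, and they are carried out in detail in \cite{Voi17}.
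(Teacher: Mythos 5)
The paper does not prove this statement at all: Theorem~\ref{thm:FelixThm} is imported verbatim from \cite{Voi17} (Theorem 6.3 and Remark 6.4), so there is no internal proof to compare your argument against. Your outline is a faithful reconstruction of how such results are obtained in the $\alpha$-molecule/shearlet literature, and the exponent bookkeeping is right: the upper bound $\gamma^{\ast,\text{eff}}\le\gamma^\ast(\mathcal{E}^{1/\alpha})=1/(2\alpha)$ does follow from Theorem~\ref{thm:optDictApproxLwrBd} together with Theorem~\ref{thm:ExponentOfCartoons}, and your weak-$\ell^p$ computation with $p=2\alpha/(1+\alpha)$, giving $1/p-1/2=1/(2\alpha)$, matches the target rate.

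However, be aware that what you have written is a strategy summary rather than a proof: the three points on which the theorem actually turns --- (a) the frame property of the cone-adapted, compactly supported system for sufficiently small $\delta$ under exactly hypotheses (i)--(ii), (b) certifying that smoothness $C^{12}/C^{32}$ plus seven directional vanishing moments yields $\alpha$-molecules of high enough order for the almost-diagonalization/transfer argument (and, more delicately, passing from analysis sparsity to synthesis sparsity, which is the central technical concern of \cite{Voi17}), and (c) the effective refinement, i.e.\ uniform coefficient bounds and polynomial search depth in the canonical ordering --- are precisely the steps you defer to \cite{Voi17}. That is the same move the paper makes by citing the reference, so your attempt is consistent with the paper's treatment, but it should not be read as an independent derivation of the result.
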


\begin{remark}
The assumptions on the smoothness and the number of vanishing moments of $f$ and $g$ in Theorem~\ref{thm:FelixThm} follow from \cite[Eq. 4.9]{Voi17} with $s_1= 3/2, s_0 = 0, p_0=q_0 = 2/3,$ and $|\beta| \leq 4$. While these particular choices allow the statement of the theorem to be independent of $\alpha$, it is possible to weaken the assumptions, if a fixed $\alpha$ is considered. For example, for $\alpha = 1/2$ the smoothness assumptions on $f$ and $g$ reduce to $f \in C^{11}, g\in C^{28}$.
\end{remark}

As our approximation results for neural networks pertain to bounded domains, we require a definition of cartoon-like functions on bounded domains.
\begin{definition}
Let $(0,1)^2 \subset \Omega \subset \R^2$, $\alpha \in [1/2, 1]$, and $\nu>0$. We define the set of \emph{$\alpha^{-1}$-cartoon-like functions on $\Omega$} by
$$
\mathcal{E}^{\frac{1}{\alpha}}(\Omega; \nu): =  \left\{ f_{|\Omega}: f\in \mathcal{E}^{\frac{1}{\alpha}}(\R^2; \nu)\right\}.
$$
Additionally, for $\delta>0$, $f,g \in L^2(\R^2)$, we define an \emph{$\alpha$-shearlet system on $\Omega$} according to 
$$
\mathcal{SH}_{\alpha}(f,g,\delta;\Omega):= \left\{ \phi_{|\Omega}: \phi\in \mathcal{SH}_{\alpha}(f,g,\delta)\right\}.
$$
\end{definition}
\begin{remark}\label{rem:BdDomain}
It is straightforward to check, that if $\mathcal{E}^{1/\alpha}(\R^2; \nu)$ is optimally represented by $\mathcal{SH}_{\alpha}(f,g,\delta)$, then $\mathcal{E}^{1/\alpha}(\Omega; \nu)$ is optimally represented by $\mathcal{SH}_{\alpha}(f,g,\delta;\Omega)$.
\end{remark}

We proceed to the main statement of this section.
\begin{theorem}\label{thm:mainopti}Suppose that $(0,1)^2 \subset \Omega \subset \R^2$ is bounded and $\rho:\R\to \R$ is either strongly sigmoidal of order $k\geq 2$ (see Definition~\ref{def:sigmoidal}) or an admissible smooth activation function (see Definition~\ref{def:admissible}). Then, for every $\alpha\in [1/2,1]$, the function class  $\mathcal{E}^{1/\alpha}(\Omega; \nu)$ is optimally representable by a neural network with activation function $\rho$.
\end{theorem}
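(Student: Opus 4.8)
The plan is to deduce the statement from Theorem~\ref{thm:optitransgeneral}: for each $\alpha\in[1/2,1]$ we would exhibit an affine system $\mathcal{D}$ (Definition~\ref{def:affsys}) that (a) optimally represents $\mathcal{E}^{1/\alpha}(\Omega;\nu)$ in the sense of Definition~\ref{def:repopti}, (b) has a generator function approximable to within arbitrary accuracy on every cube $[-D,D]^2$ by networks of fixed depth and connectivity with activation $\rho$ and weights polynomially bounded in $D,\varepsilon^{-1}$, and (c) satisfies the determinant-growth condition~\eqref{eq:detgrowth}. Theorem~\ref{thm:optitransgeneral} then produces, for every $\gamma<\gamma^\ast(\mathcal{E}^{1/\alpha}(\Omega;\nu))$, a quantized learning map achieving uniform error $\varepsilon$ over the class with connectivity in $\mathcal{O}(\varepsilon^{-1/\gamma})$ and $\lceil c\log_2(\varepsilon^{-1})\rceil$-bit weights, which by Definition~\ref{def:repoptiNN} gives $\gamma_{\cNN}^{\ast,\text{eff}}(\mathcal{E}^{1/\alpha}(\Omega;\nu),\rho)\ge\gamma^\ast(\mathcal{E}^{1/\alpha}(\Omega;\nu))$; the matching upper bound is Theorem~\ref{thm:EffRepNN}, which applies since admissible smooth activations are Lipschitz and strongly sigmoidal activations are differentiable with $\rho'$ dominated by a polynomial. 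Combining the two inequalities yields optimal representability by networks with activation $\rho$ in the sense of Definition~\ref{def:NNopti}.

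For the choice of $\mathcal{D}$ we would take (a system containing) the $\alpha$-shearlet system $\mathcal{SH}_\alpha(f,g,\delta;\Omega)$ of Theorem~\ref{thm:FelixThm}, with $f$ and $g$ built from a single compactly supported ``mother'' $h\in L^2(\R^2)$ satisfying the flip-symmetry $h\circ J=h$. For strongly sigmoidal $\rho$ of order $k\ge2$, let $h$ be a suitably rescaled tensor-product cardinal $B$-spline $N_m^2(c\,\cdot)$ of sufficiently high order, so that $h$ is smooth enough for Theorem~\ref{thm:FelixThm}, $\widehat h$ does not vanish on a large enough cube (the rescaling moves the zeros of $\widehat{N_m}$ out of the relevant range), and, by Theorem~\ref{thm:EffApproxWithSplines} combined with the affine-invariance Proposition~\ref{prop:affscalinv}, $h$ admits the approximation and polynomial weight bounds in (b). For an admissible smooth activation, let $h:=\Phi_\rho$ be the $C^\infty$ compactly supported bump of Theorem~\ref{thm:Approxsmoothrec}, which is \emph{exactly} realized by a three-layer network with fixed weights and has $\widehat{\Phi_\rho}\neq0$ on $[-3,3]^2$. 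We then set $f:=h$ (giving condition (i) of Theorem~\ref{thm:FelixThm}) and define $g$ as a finite linear combination of $x_1$-translates of $h$ via Lemma~\ref{lem:generatevanishingmoments} with $R=7$; the Fourier-side identity~\eqref{eq:fouriermoments} simultaneously yields the seven vanishing moments in $x_1$-direction and the non-vanishing of $\widehat g$ on the set in condition (ii). Theorem~\ref{thm:FelixThm} together with Remark~\ref{rem:BdDomain} then gives a $\delta>0$ for which $\mathcal{E}^{1/\alpha}(\Omega;\nu)$ is optimally represented by $\mathcal{SH}_\alpha(f,g,\delta;\Omega)$, with optimal exponent $\gamma^\ast(\mathcal{E}^{1/\alpha}(\Omega;\nu))=1/(2\alpha)$ by Theorem~\ref{thm:ExponentOfCartoons}.

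It remains to cast this into the affine-system template. The non-dilated scaling part $\mathcal{SH}^0$ corresponds to $g_1:=h$ with dilation matrix $A=I$; the cone $\tau=0$ of $\mathcal{SH}^1$ corresponds to $g_2:=g$ and the upper-triangular matrices $S_kD_{\alpha,2^\ell}$, whose eigenvalues $2^\ell,2^{\ell\alpha}$ are $\ge1$; and the twisted cone $\tau=1$ is handled by the observation that $g(S_kD_{\alpha,2^\ell}J\cdot-\delta t)=(g\circ J)(JS_kD_{\alpha,2^\ell}J\cdot-\delta Jt)$, so it corresponds to $g_3:=g\circ J$ — which, thanks to $h\circ J=h$, is again a finite combination of translates of $h$ — and the \emph{lower}-triangular matrices $JS_kD_{\alpha,2^\ell}J$, again with eigenvalues $\ge1$. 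Taking $\mathcal{D}$ to be the affine system with generator $h$, functions $g_1,g_2,g_3$, all the matrices above, lattice spacing $\delta$, and domain $\Omega$, one obtains a valid affine system that contains $\mathcal{SH}_\alpha(f,g,\delta;\Omega)$ and whose canonical ordering places the elements of $\mathcal{SH}_\alpha(f,g,\delta;\Omega)$ at positions larger by at most a constant factor; hence $\gamma^{\ast,\text{eff}}(\mathcal{E}^{1/\alpha}(\Omega;\nu),\mathcal{D})=\gamma^\ast(\mathcal{E}^{1/\alpha}(\Omega;\nu))$ by the lower bound of Theorem~\ref{thm:optDictApproxLwrBd}, which is (a); (b) holds by the choice of $h$; and for (c), at scale $\ell$ there are $\sim2^{\ell(1-\alpha)}$ matrices, each with $\|\cdot\|_\infty\sim2^\ell$ and determinant modulus $2^{\ell(1+\alpha)}$, so $\sum_{k<j}|\det A_k|\sim2^{2\ell}\sim\|A_j\|_\infty^2$ whenever $A_j$ sits at scale $\ell$, i.e.~\eqref{eq:detgrowth} holds with $a=2$.

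The main obstacle is precisely the last step: faithfully realizing the cone-adapted $\alpha$-shearlet system within the single-generator, single-family-of-dilations framework of Definition~\ref{def:affsys}. The undilated scaling part and the $J$-twisted shear cone do not match the template verbatim, and the matrices $S_kD_{\alpha,2^\ell}J$ themselves have an eigenvalue of modulus below $1$ at coarse scales; the remedy — introducing the auxiliary generator $g\circ J$ and the conjugated lower-triangular dilations, legitimate because the chosen mother $h$ is flip-symmetric — is what simultaneously restores the eigenvalue condition and keeps~\eqref{eq:detgrowth} intact, at the mild cost of enlarging the system by a bounded factor (which does not affect the effective best $M$-term rate). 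A secondary, essentially routine, point is checking that the $B$-spline-based construction meets the smoothness and Fourier non-vanishing hypotheses of Theorem~\ref{thm:FelixThm} while keeping the network weights polynomially controlled; here the key is that the vanishing-moment construction of Lemma~\ref{lem:generatevanishingmoments} and all rescalings involved are affine operations, so Propositions~\ref{prop:affscalinv} and~\ref{prop:transapp} transport effective representability of $h$ to $f$ and $g$.
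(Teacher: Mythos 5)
Your proposal is correct and follows essentially the same route as the paper: the generator $f$ is built from a rescaled high-order tensor-product $B$-spline (via Theorem~\ref{thm:EffApproxWithSplines}) in the strongly sigmoidal case, respectively from the exactly realizable bump of Theorem~\ref{thm:Approxsmoothrec} in the admissible smooth case, $g$ is obtained from Lemma~\ref{lem:generatevanishingmoments} with $R=7$, optimality of $\mathcal{SH}_{\alpha}(f,g,\delta;\Omega)$ comes from Theorem~\ref{thm:FelixThm} and Remark~\ref{rem:BdDomain}, and the transfer machinery (Theorems~\ref{thm:affdicopt} and \ref{theo:EncodeOfNeuralNetworks}, with Theorem~\ref{thm:EffRepNN} for the matching upper bound) concludes, exactly as in the paper's proof. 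Your additional care in casting the cone-adapted shearlet system into the affine-system template of Definition~\ref{def:affsys} --- the flip-symmetric mother function, rewriting the $\tau=1$ cone via $g\circ J$ and the conjugated lower-triangular dilations so that the eigenvalue condition holds, tolerating a bounded enlargement of the system, and checking \eqref{eq:detgrowth} with $a=2$ --- correctly fills in a detail the paper dispatches with a one-line assertion, but it does not change the overall approach.
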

\begin{proof}
Let $\alpha \in [1/2,1]$ and $\nu >0$. We first consider the case of $\rho$ strongly sigmoidal of order $k\geq 2$. 
Since the two-dimensional cardinal $B$-spline of order $34$, denoted by $N^{2}_{34}$, is $32$ times continuously differentiable and $\widehat{N^{2}_{34}}(0) \neq 0$ by construction, 
we conclude that there exists $c>0$ such that $f := N^{2}_{34}(c \cdot)$ satisfies $f\in C^{32}(\R^2)$ and $\hat{f} \neq 0$ for all $\xi\in [-3,3]^2$.
Application of Lemma \ref{lem:generatevanishingmoments} then yields the existence of $(c_i)_{i = 1}^7\subset \R$, $(d_i)_{i = 1}^7 \subset \R^2$ such that $g := \sum_{i=1}^7 c_i f(\cdot - d_i)$ is compactly supported, has $7$ vanishing moments in $x_1$-direction, and $\hat{g}(\xi) \neq 0$ for all $\xi \in [-3,3]^2$ such that $\xi_1 \neq 0$. 
Then, by Theorem~\ref{thm:FelixThm} and Remark \ref{rem:BdDomain} there exists $\delta>0$ such that $\mathcal{SH}_{\alpha}(f,g,\delta;\Omega)$ is optimal 
for $\mathcal{E}^{1/\alpha}(\Omega; \nu)$. 
We define 
$$
\{A_{j}: j\in \N\} := \left\{S_{k} D_{\alpha, 2^{\ell}}J^{\tau}: \ell \in \mathbb{N}_0, |k| \leq \lceil 2^{\ell(1-\alpha)} \rceil,  \tau \in \{0,1\} \right\},
$$
where we order $(A_{j})_{j\in \N}$ such that $|\! \det(A_j)| \leq | \! \det(A_{j+1})|$, for all $j\in \N$. This construction implies that the $\alpha$-shearlet system $\mathcal{SH}_{\alpha}(f,g,\delta;\Omega)$ is an affine system with generator function $f$. Thanks to Theorem~\ref{thm:EffApproxWithSplines}, there exist $L, R\in \N$ such that for all $D,\varepsilon>0$, there is a network $\Phi_{D,\varepsilon} \in \cNN_{L,R,d,\rho}$ with
\begin{align*}
\|f - \Phi_{D,\varepsilon}\|_{L^2([-D,D]^d)} \leq \varepsilon.
\end{align*}
Moreover, the weights of $\Phi_{D,\varepsilon}$ are polynomially bounded in $D, \varepsilon^{-1}$.
It is not difficult to verify that \eqref{eq:detgrowth} holds and hence Theorem~\ref{thm:affdicopt} yields that $\mathcal{SH}_{\alpha}(f,g,\delta;\Omega)$ is effectively representable by neural networks with activation function $\rho$. 
Finally, since $\mathcal{E}^{1/\alpha}(\Omega; \nu)$ is optimally representable by $\mathcal{SH}_{\alpha}(f,g,\delta;\Omega)$, we conclude with Theorem~\ref{theo:EncodeOfNeuralNetworks} that $\mathcal{E}^{1/\alpha}(\Omega; \nu)$ is optimally representable by neural networks with activation function $\rho$.

It remains to establish the statement for admissible smooth $\rho$.  In this case, by Theorem~\ref{thm:Approxsmoothrec} there exist $M\in \N$ and a 
neural network in $\cNN_{3, M, d, \rho}$ which realizes a compactly supported $f \in C^\infty(\R)$ satisfying $\hat{f}(\xi) \neq 0$, for all $\xi \in [-3,3]^2$.
Lemma \ref{lem:generatevanishingmoments} applied to this $f$ then yields a function $g$ that can be realized by a neural network in $\cNN_{3, M', d, \rho}$, for some $M'\in \N$, has $7$ vanishing moments in $x_1$-direction, is compactly supported, and satisfies $g \in C^\infty(\R)$, and $\hat{g}(\xi) \neq 0$, for all $\xi \in [-3,3]^2$ such that $\xi_1\neq 0 $.
By Theorem~\ref{thm:FelixThm} and Remark \ref{rem:BdDomain}, there exists $\delta>0$ such that $\mathcal{E}^{1/\alpha}(\Omega; \nu)$ is optimally representable by $\mathcal{SH}_{\alpha}(f,g,\delta;\Omega)$. 
Note that $\mathcal{SH}_{\alpha}(f,g,\delta;\Omega)$ is an affine system with generator function $f$. 
Since $f$ can be implemented with zero error by a neural network, Theorem~\ref{thm:affdicopt} yields that $\mathcal{SH}_{\alpha}(f,g,\delta;\Omega)$ is effectively representable by neural networks with admissible smooth activation function $\rho$. Optimality of $\mathcal{SH}_{\alpha}(f,g,\delta;\Omega)$ for $\mathcal{E}^{1/\alpha}(\Omega; \nu)$ implies, with Theorem~\ref{theo:EncodeOfNeuralNetworks}, that $\mathcal{E}^{1/\alpha}(\Omega; \nu)$ is optimally representable by neural networks with admissible smooth activation function  
$\rho$.
\end{proof}

\begin{remark}
Theorem~\ref{thm:FelixThm} requires the generators of the shearlet system guaranteeing optimal representability of 
$\mathcal{E}^{1/\alpha}(\Omega; \nu)$, for $1/2\leq\alpha\leq 1, \nu>0$, $\Omega \subset \R^2$ to be very smooth. 
On the other hand, Theorem~\ref{thm:mainopti} demonstrates that 
optimally-approximating neural networks are not required to be particularly smooth. Indeed, Theorem~\ref{thm:mainopti} holds for networks with differentiable but not necessarily twice differentiable activation functions. As the proof of Theorem~\ref{thm:mainopti} reveals, such weak assumptions suffice thanks to Theorem~\ref{thm:EffApproxWithSplines}, which 
demonstrates that it is possible to approximate arbitrarily smooth $B$-splines (in the $L^2$-norm) to within error $\varepsilon$ by neural networks with a number of weights that
does not depend on $\varepsilon$ as long as the activation function is strongly sigmoidal.
\end{remark}

\begin{remark}
We observe from the proof of Theorem \ref{thm:mainopti} that the depth of the networks required to achieve optimal approximation depends on the activation function only. 
Indeed, for an admissible smooth activation function, inspection of Theorem \ref{thm:Approxsmoothrec} reveals that networks with three layers can produce optimal approximations in Theorem \ref{thm:mainopti}. On the other hand, if a sigmoidal activation function is employed, Theorem \ref{thm:EffApproxWithSplines} shows that the construction in Theorem \ref{thm:mainopti} requires a certain minimum depth depending on the order of sigmoidality.
\end{remark}

\section{Generalization to Manifolds}\label{sec:manifold}

Frequently, a function $f$ to be approximated by a neural network models phenomena on (possibly low-dimensional) immersed submanifolds $\Gamma \subset \R^d$ of dimension $m<d$. We next briefly outline how our main results can be extended to this situation. Since analogous results, for the case of wavelets as representation systems,
appear already in \cite{ShaCC2015provableAppDNN}, we will allow ourselves to be somewhat informal.

Suppose that $f:\Gamma \to \R$ is compactly supported. Let $(U_i)_{i\in \N}\subset \Gamma$ be an open cover of $\Gamma$ such that for each $i\in \N$ the manifold patch $U_i$ can be parametrized as the graph of a function over a subset of the Euclidean coordinates, i.e.,
there exist coordinates $x_{d_1},\dots , x_{d_m}$, open sets $V_i\subset \R^m$, and smooth mappings
$$
\gamma_\ell:\R^m\to \R ,\quad \ell\in \{1,\dots , d\}\setminus \{d_1,\dots, d_m\}
$$
such that
$$
    U_i = \left\{\Xi_i(x_{d_1},\dots , x_{d_m}) := (\gamma_1(x_{d_1},\dots , x_{d_m}),\dots , x_{d_1},\dots , \gamma_d(x_{d_1},\dots , x_{d_m}))  :\ (x_{d_1},\dots , x_{d_m})\in V_i \right\}.
$$
Take a smooth partition of unity $(h_i)_{i\in \N}$, where $h_i:\Gamma\to \R$ is smooth with $\supp(h_i)\subset \overline{U_i}$ and $\sum_{i\in \N}h_i = 1$.
Define the localization of $f$ to $U_i$ by $f_i:=f h_i$ such that
\begin{equation}
\label{eq:patchsum}
     f = \sum_{i\in \N} f_i.
\end{equation}
Every $f_i:U_i\to \R$ can be reparametrized to
$$
    \tilde{f_i}:\left\{\begin{array}{ccc}\R^m&\to & \R\\
    (x_{d_1},\dots , x_{d_m})  & \mapsto & f_i\circ\Xi_i(x_{d_1},\dots , x_{d_m}).
    \end{array}
    \right.
$$
Suppose that there exist $L, M\in \N$ and neural networks $\tilde \Phi_i\in \cNN_{L,M,m,\rho}$ such that
\begin{equation}  \label{eq:patchapprox}
    \|\tilde{f}_i-\tilde\Phi_i\|_{L^2(V_i)}\le \varepsilon.
\end{equation}
Then, we can construct a neural network $\Phi_i \in \cNN_{L,M + m d,d,\rho}$ according to
$$
    \Phi_i(x):=\widetilde{\Phi}_i (P_i x),
$$
where $P_i$ denotes the orthogonal projection of $x$ onto the coordinates $(x_{d_1},\dots , x_{d_m})$. Since $P_i$ is linear, $\Phi_i$ is a neural network. Moreover, since $P_i$ is the inverse of the diffeomorphism $\Xi_i$, we get
$$
    \|\Phi_i - f_i\|_{L^2(U_i)}\le C \varepsilon,
$$
with $C>0$ depending on the curvature of $\Gamma|_{U_i}$ only.
Now we may build a neural network $\Phi$ by setting $\Phi:=\sum_{i\in \N}\Phi_i$. Combining (\ref{eq:patchapprox})
with the observation that, owing to the compact support of $f$, only a finite number of summands appears in the definition of $f$, we have constructed a neural network $\Phi$ which approximates $f$ on $\Gamma$. In summary, we observe the following.

\medskip
    \emph{
 Whenever a function class $\cC$ is invariant with respect to diffeomorphisms (in our construction the functions $\Xi_i$) and multiplication by smooth functions (in our construction the functions $h_i$), then approximation results on $\R^m$ can be lifted to approximation results on $m$-dimensional submanifolds $\Gamma\subset \R^d$.}
\medskip

Such invariances are, in particular, satisfied for all function classes characterized by a particular smoothness behavior, for example, the class of cartoon-functions as studied in Section \ref{sec:alphacart}. 

%------------------------------------------------------------------------------------------------------------------------------
\section{Numerical Results}\label{sec:numerics}
%------------------------------------------------------------------------------------------------------------------------------

Our theoretical results show that neural networks realizing uniform approximation error $\varepsilon$ over a function class $\cC \subset L^2(\R^d)$, $d\in \N$, must obey a fundamental lower bound on the growth rate (as $\varepsilon \rightarrow 0$) of the number of edges of nonzero weight. One of the most widely used learning algorithms is stochastic gradient descent with the gradient computed via backpropagation \cite{Rumelhart1988Backpropagation}. The purpose of this section is to investigate how this algorithm fares relative to our lower bound.

Interestingly, our numerical experiments below indicate that for a fixed, sparsely connected, network topology inspired by the construction of bump functions according to
\eqref{eq:FirstLayer} and \eqref{eq:SecondLayer}, and with the ReLU as activation function, the stochastic gradient descent algorithm generates neural networks that achieve 
$M$-edge approximation rates quite close to the fundamental limit.

The network topology we prescribe is depicted in Figure \ref{fig:network}. The rationale for choosing this topology is as follows. As mentioned before, admissible smooth activation functions consist of smooth functions which equal a ReLU outside a compact interval. For this class of activation functions, the 
associated $\alpha$-shearlet generators were constructed from a function $g$ as specified in \eqref{eq:SecondLayer}. Choosing $p_1 = p_2 = 1$ and $p_3 = 2$ in \eqref{eq:FirstLayer} yields hat
functions $t$. This construction implies that six nodes are required in the first layer in each subnetwork. 
\begin{figure}[htb]
    \centering
    \includegraphics[width = 0.7\textwidth]{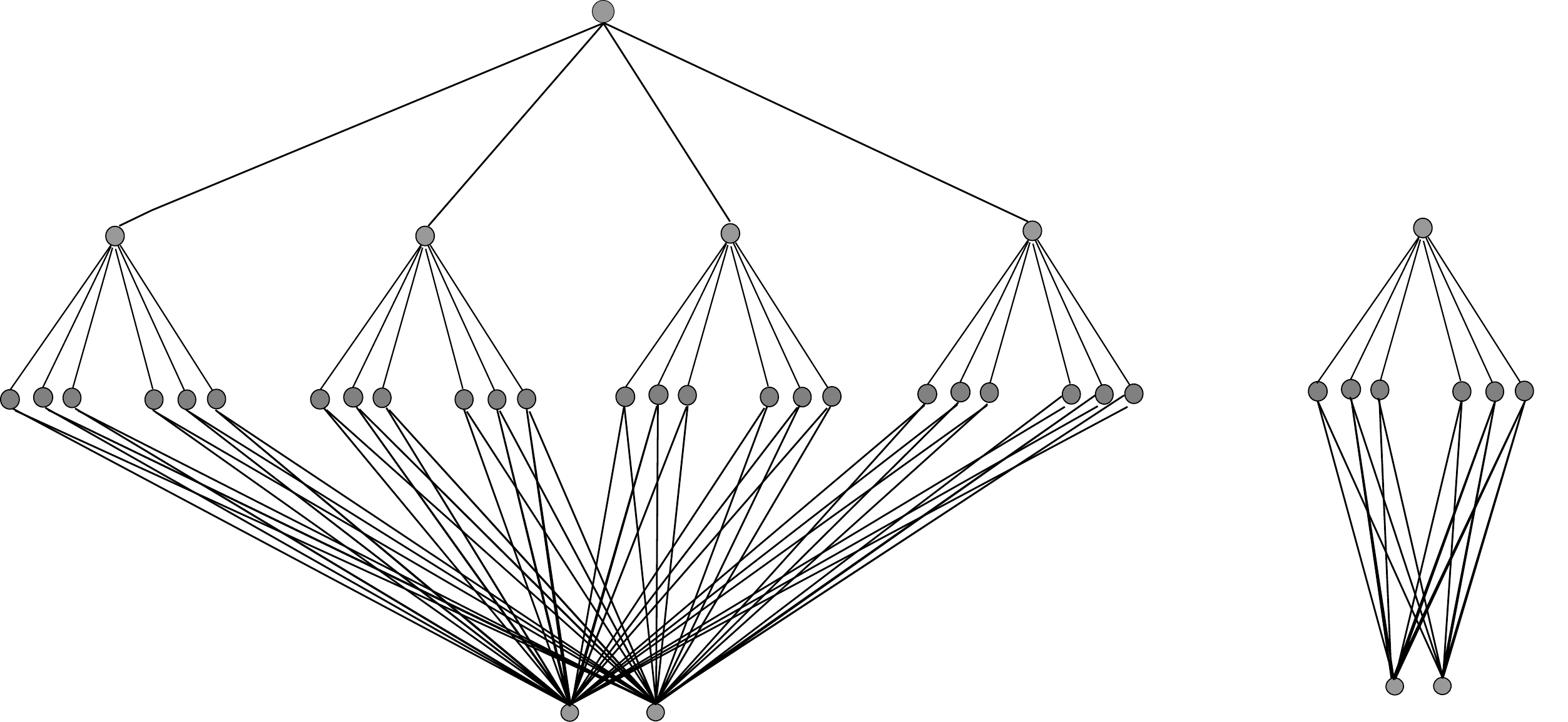}
    \caption{\textit{Left:} Topology of the neural network trained using stochastic gradient descent. The network consists of a weighted sum of four subnetworks. \textit{Right:} A single subnetwork.}
    \label{fig:network}
\end{figure}
In Figure \ref{fig:network}, we see four network realizations of $g$ in parallel. The output layer realizes a linear combination of the subnetworks.

We now train the network using the stochastic gradient descent algorithm. Following \eqref{eq:SecondLayer} the weights of the second layer remain fixed, and the weights in the first and the third layer only are trained. Training is performed for two different
functions, where one is a function with a line singularity (Figure \ref{fig:Ridgelets}(a)), and the other one is a cartoon-like function (Figure \ref{fig:Shearlets}(a)). Specifically, we train the network by drawing samples $(x_1,x_2)$ from an equispaced grid in $[-1,1]^2$. 
The resulting error is then backpropagated through the network. We repeat this procedure for different network sizes, i.e., for different numbers of subnetworks.
\begin{figure}[htb]
    \centering
    \includegraphics[width = 0.35\textwidth]{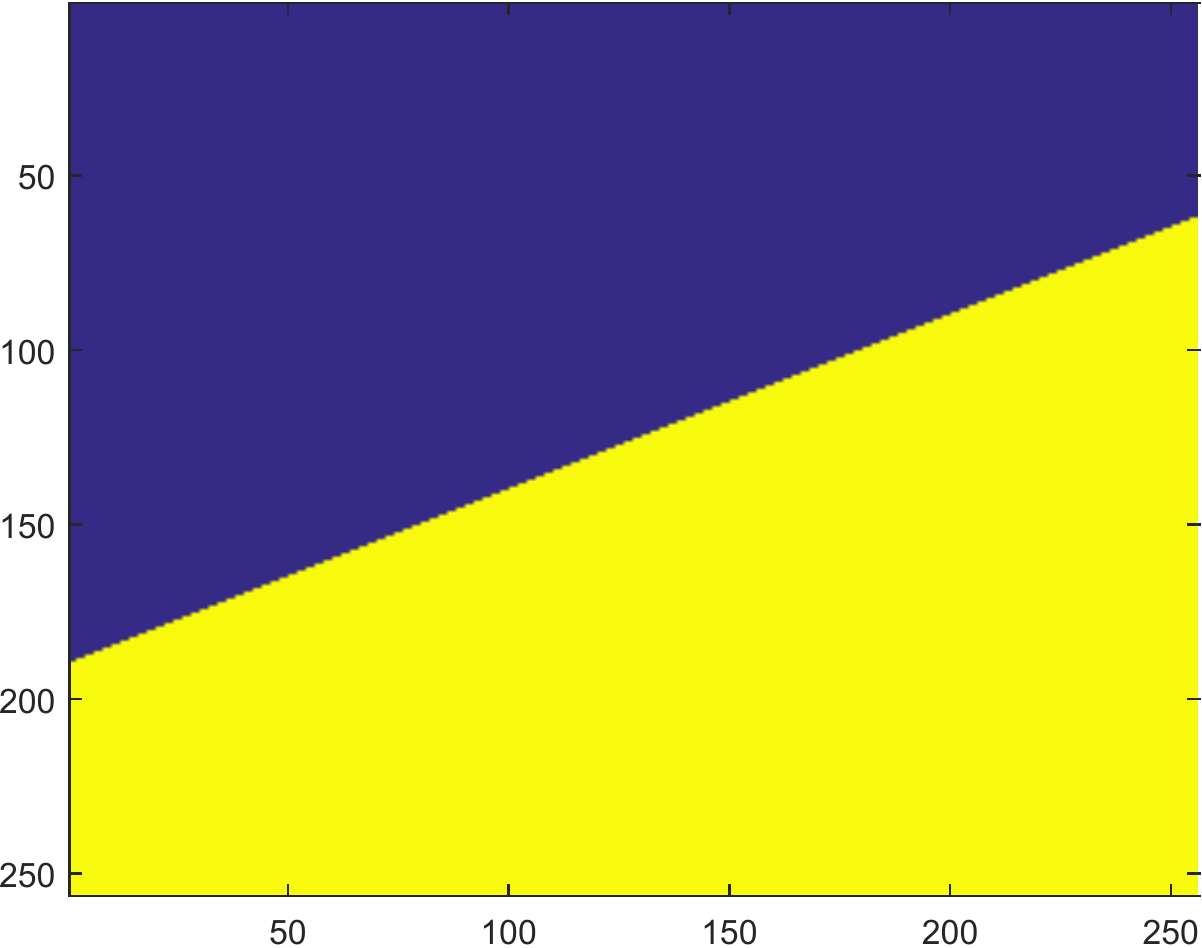}  \put(-80,-13){(a)}  \qquad  \includegraphics[width = 0.35\textwidth]{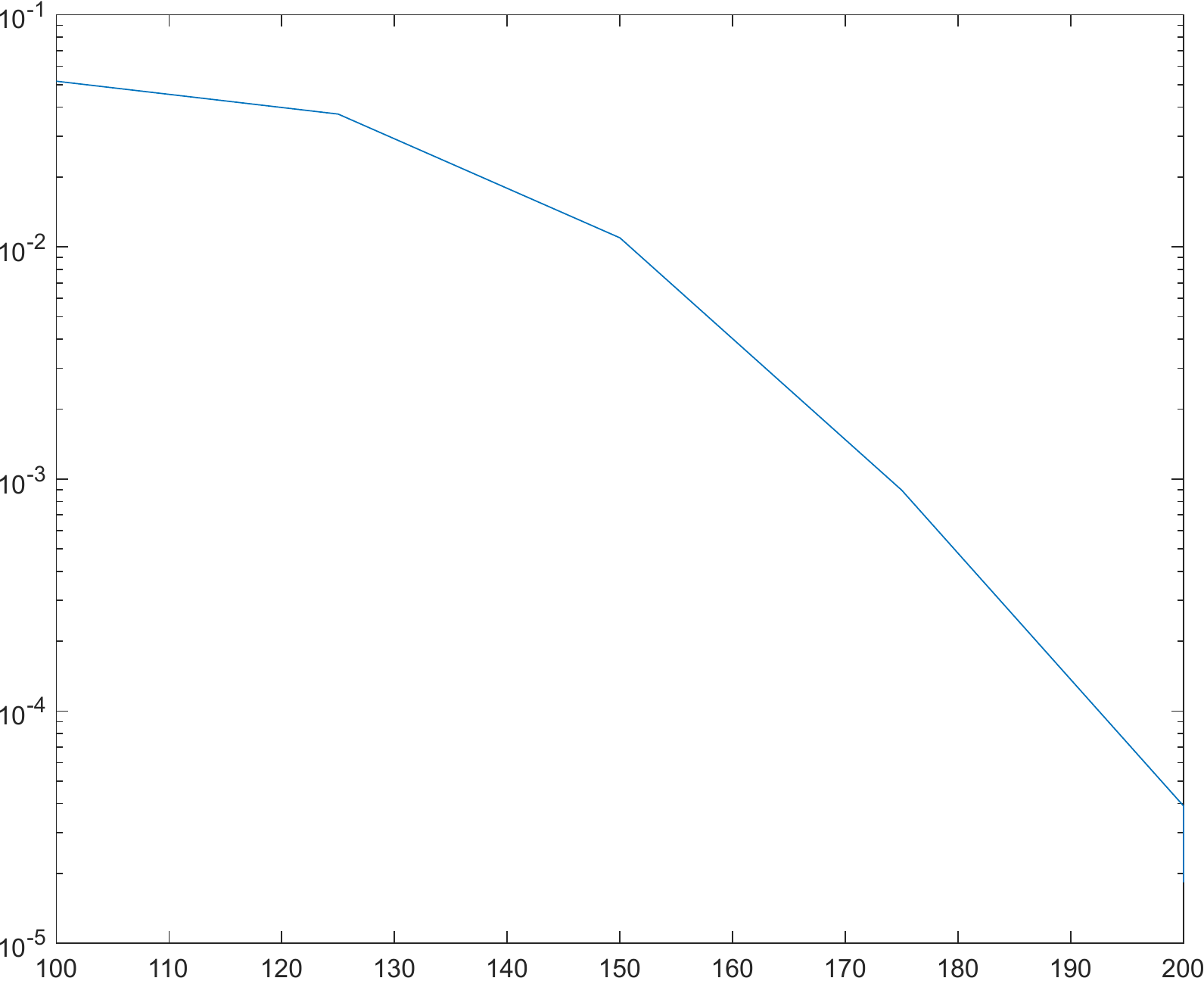} \put(-80,-13){(b)}\\[3ex]
        \includegraphics[width = 0.3\textwidth]{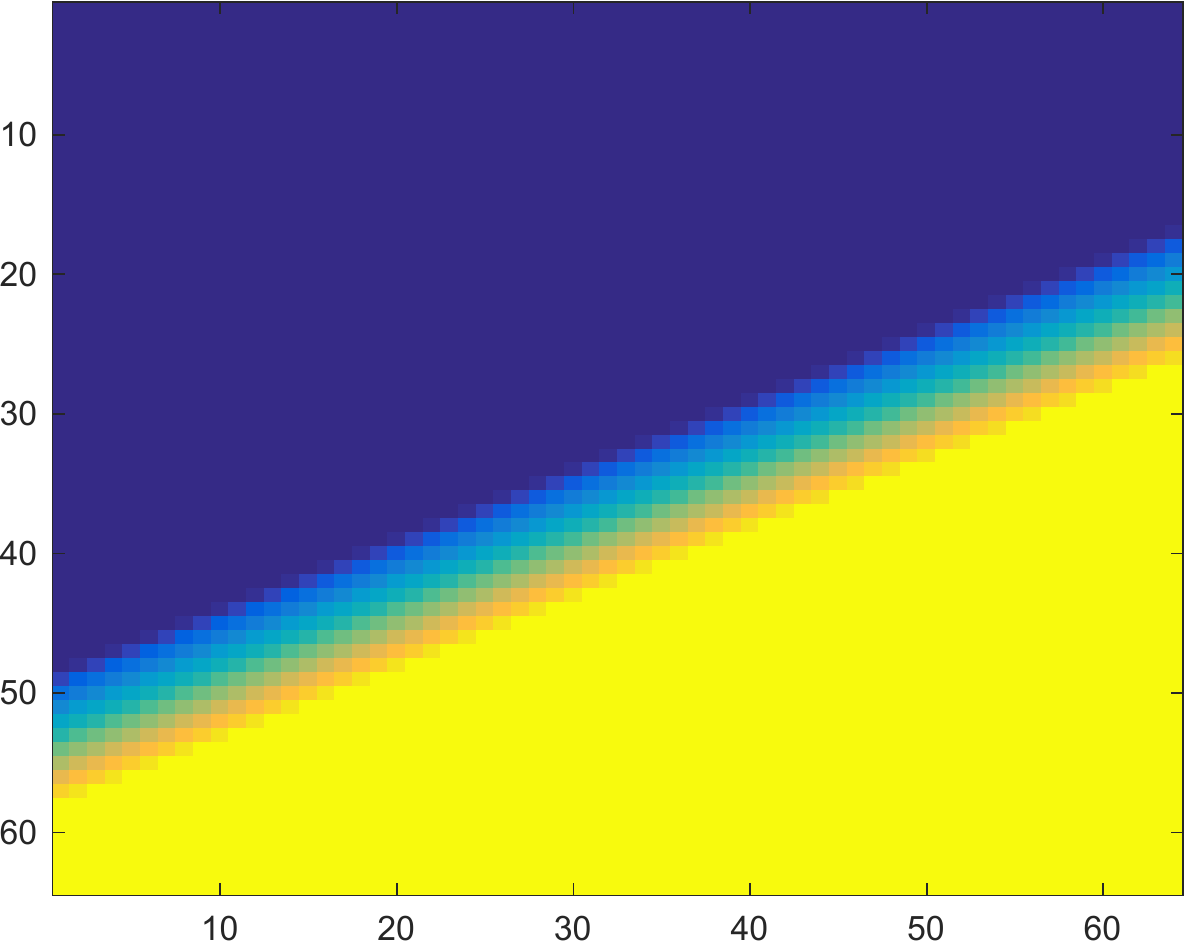} \put(-70,-13){(c)}  \quad  \includegraphics[width = 0.3\textwidth]{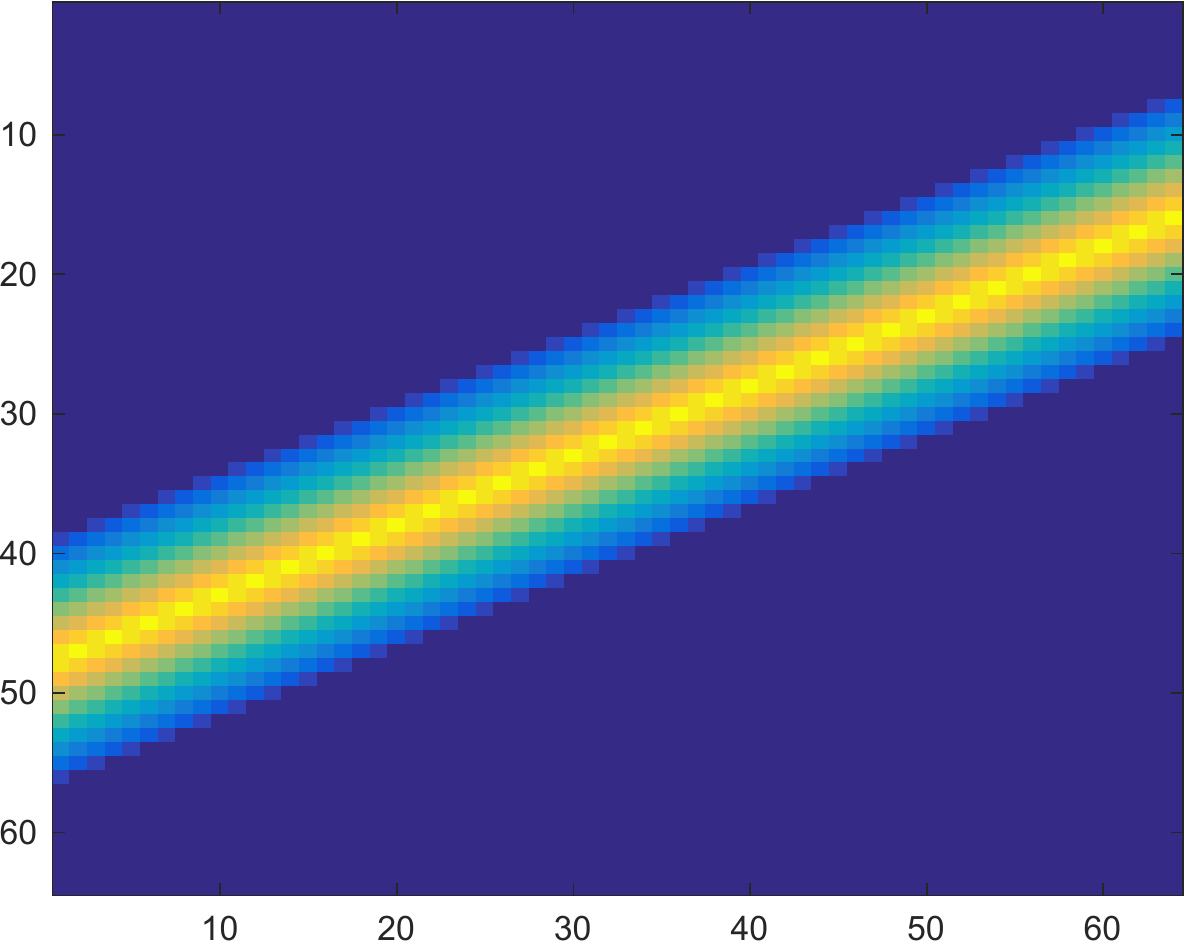} \put(-70,-13){(d)}  \quad    \includegraphics[width = 0.3\textwidth]{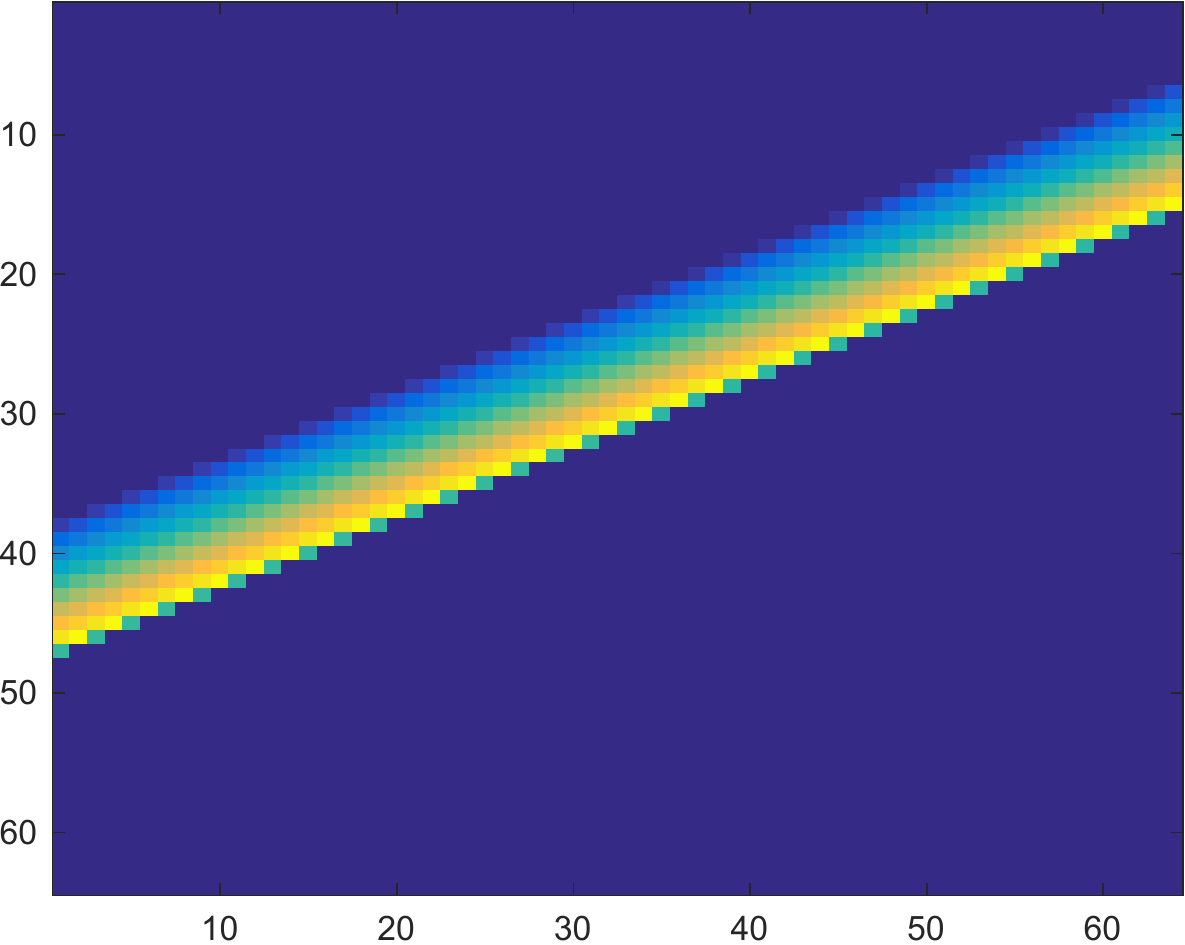} \put(-70,-13){(e)}
    \caption{(a): Function with a line singularity. (b): Approximation error (vertical axis) as a function of the number of edges (horizontal axis). (c)-(e): The functions obtained by restricting to the subnetworks with the largest weights in modulus in the final layer.}
    \label{fig:Ridgelets}
\end{figure}
We start by discussing the results for the function with a line singularity depicted in Figure \ref{fig:Ridgelets}(a). The approximation error corresponding to the trained neural network
is shown in Figure \ref{fig:Ridgelets}(b). The faster than linear decay of the approximation error in the semi-logarithmic scale indicates faster than exponential decay with respect to the number of edges. 
This is consistent with the best $M$-term approximation rate that ridgelets yield for piecewise constant functions with line singularities, see \cite{candes2001ridgelets}.

\begin{figure}[b!]
    \centering
    \includegraphics[width = 0.3\textwidth]{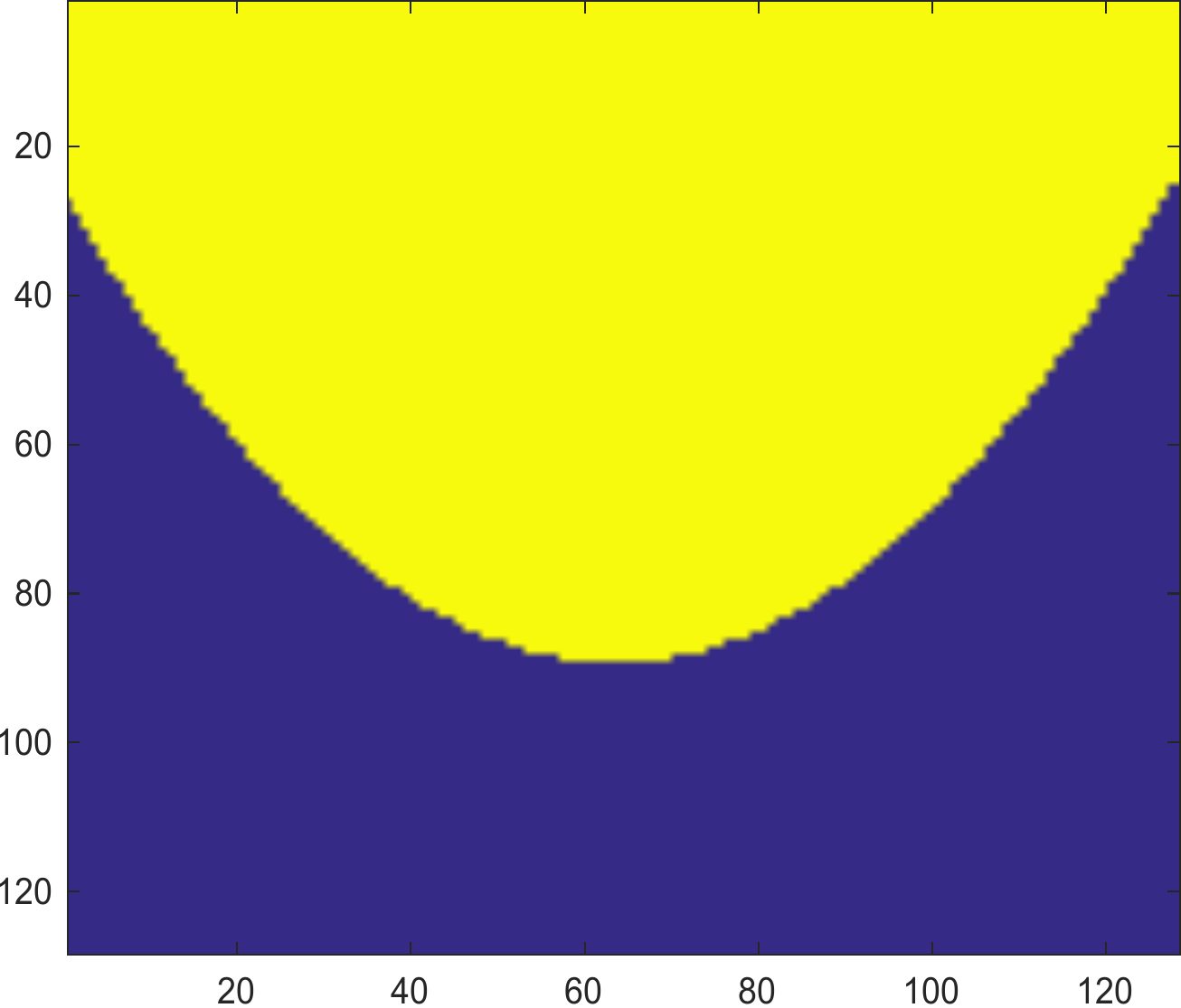}  \put(-70,-13){(a)}  \qquad  \includegraphics[width = 0.3\textwidth]{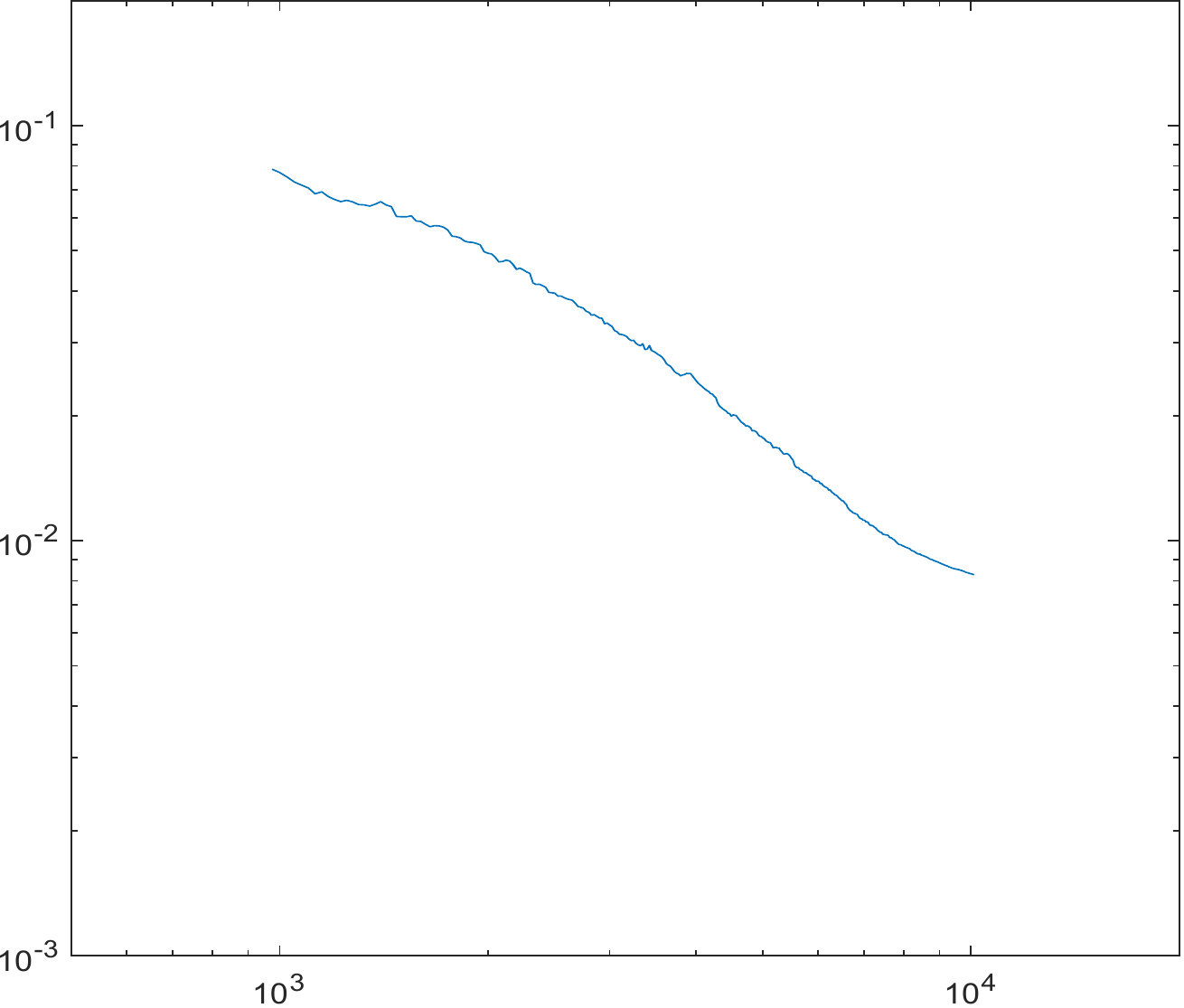} \put(-70,-13){(b)}\\[3ex]
        \includegraphics[width = 0.22\textwidth]{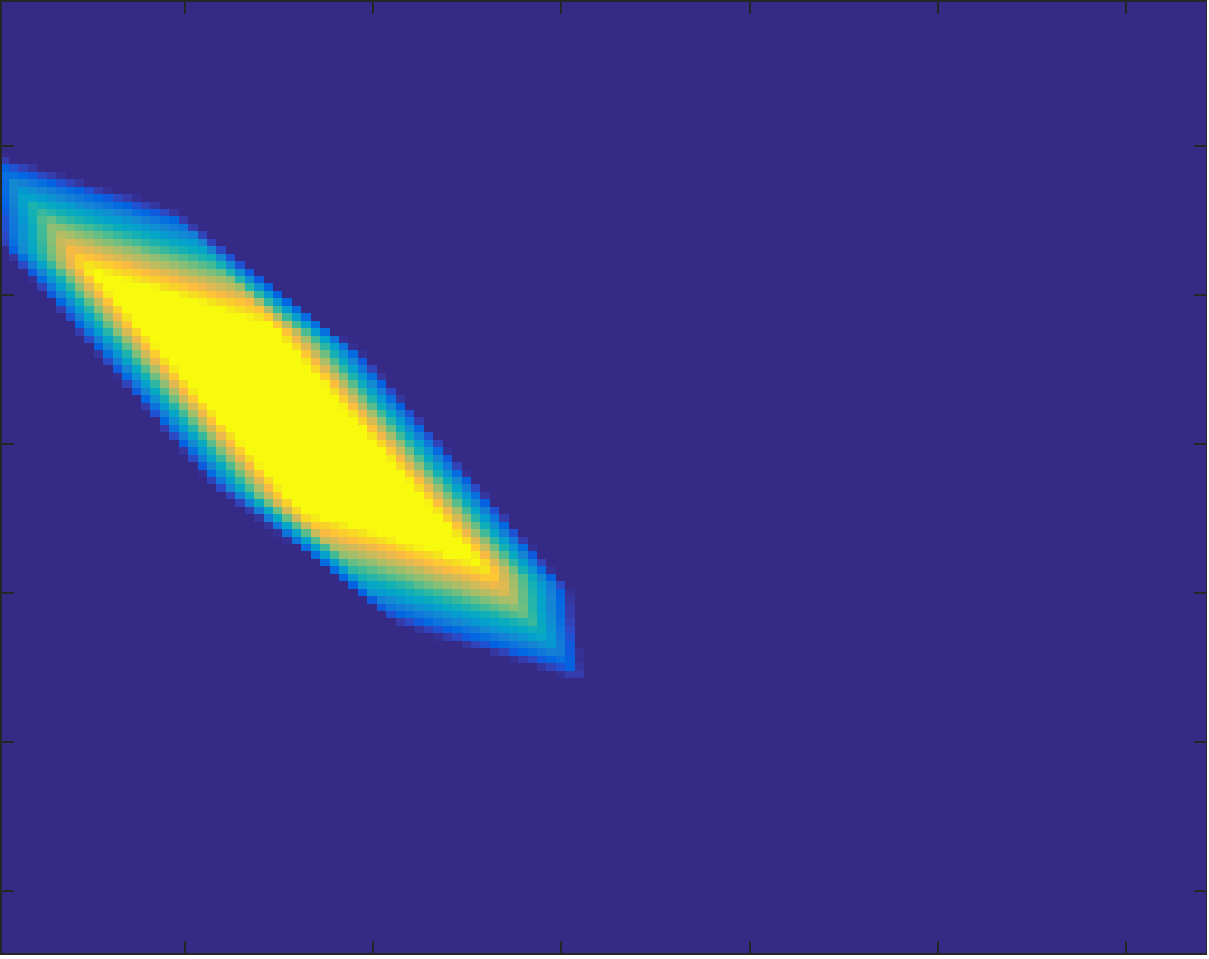} \put(-55,-13){(c)}  \  \includegraphics[width = 0.22\textwidth]{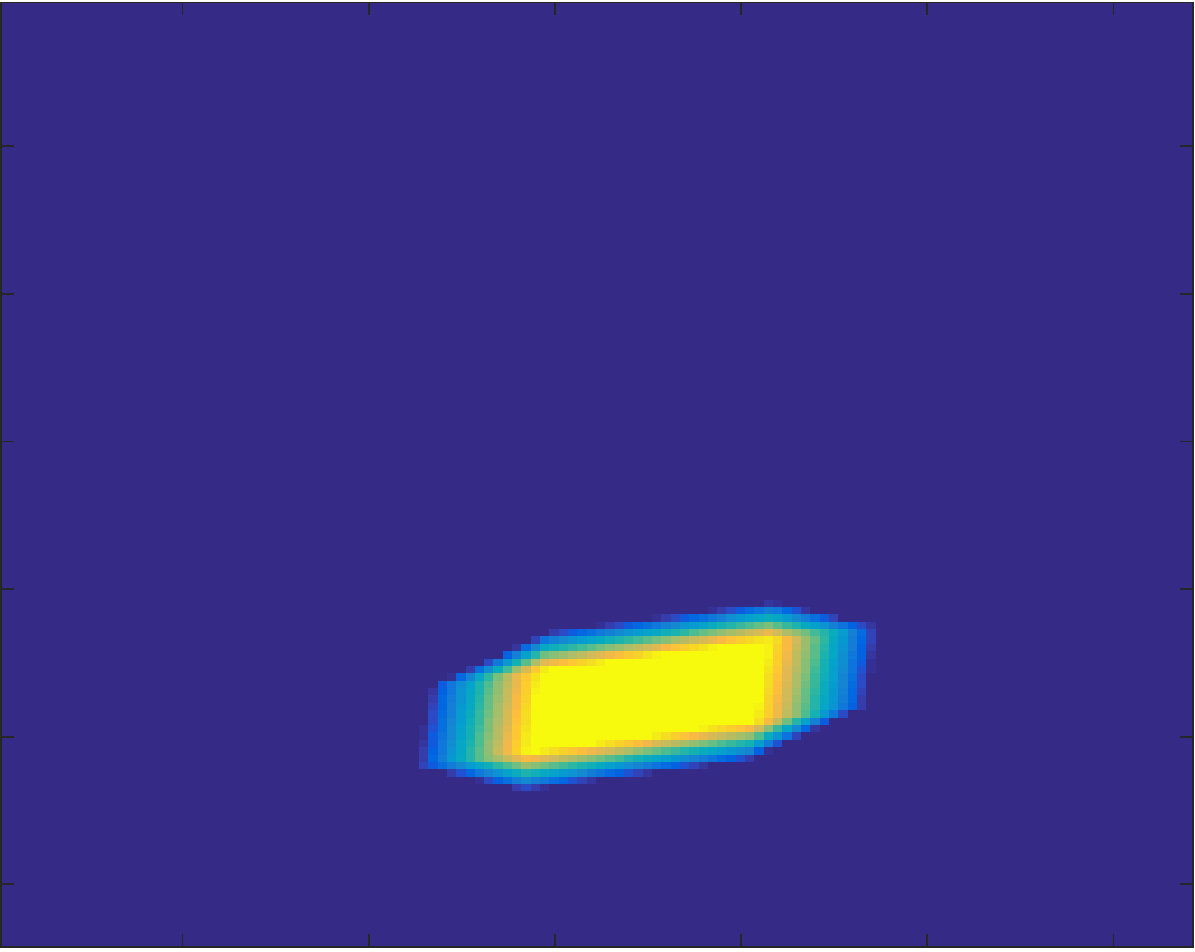} \put(-55,-13){(d)}  \    \includegraphics[width = 0.22\textwidth]{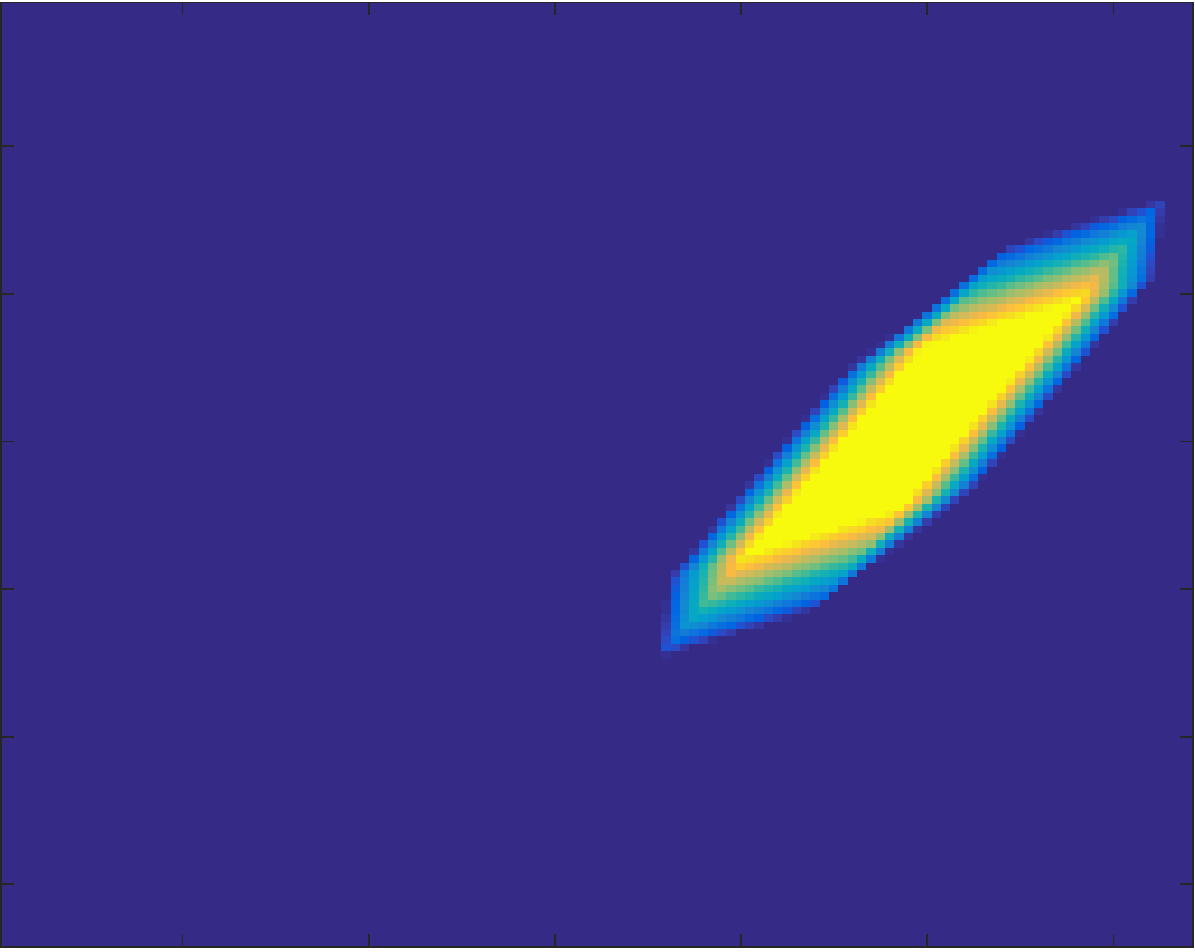} \put(-55,-13){(e)} \ \includegraphics[width = 0.22\textwidth]{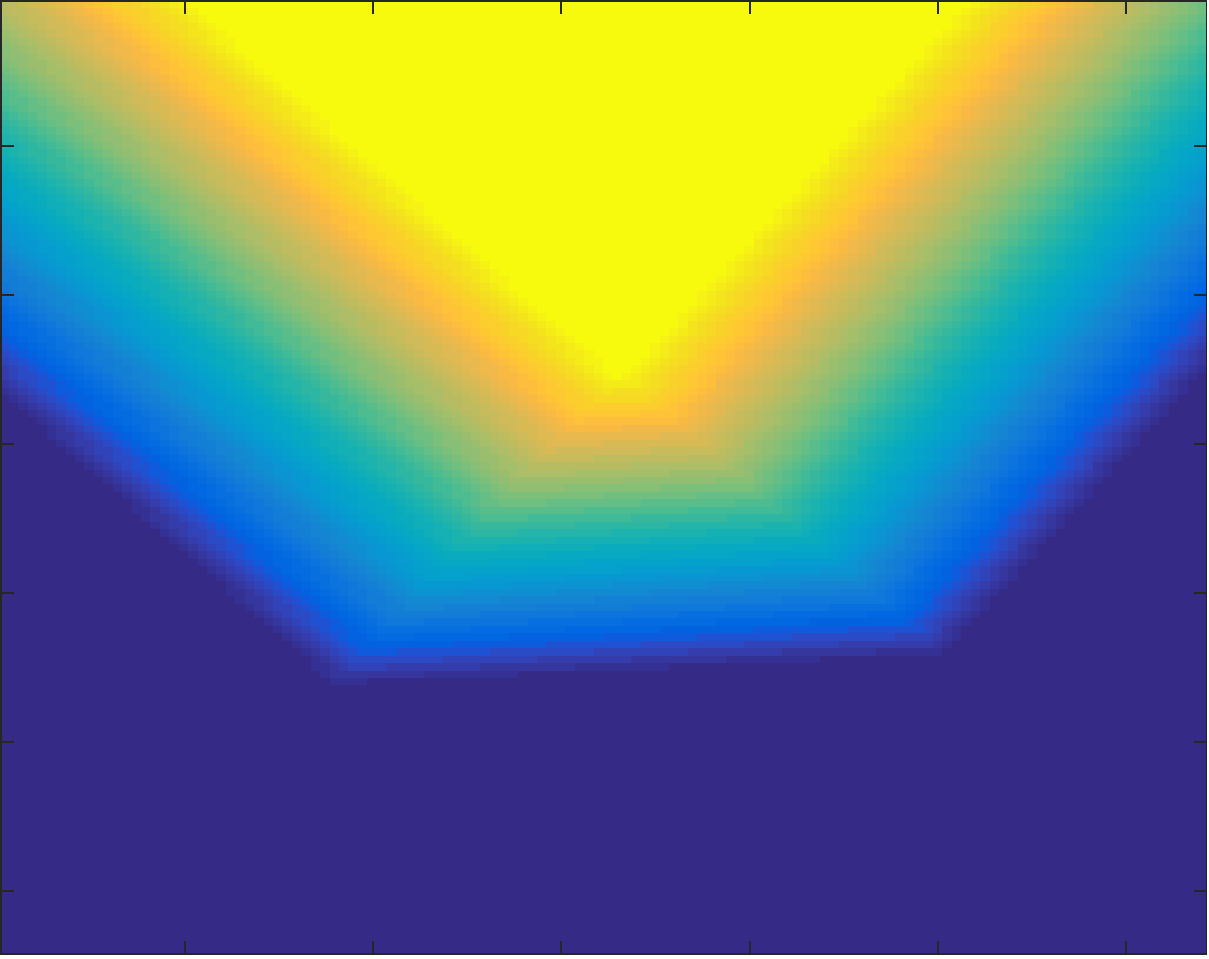} \put(-55,-13){(f)}\\[3ex]
        \includegraphics[width = 0.3\textwidth]{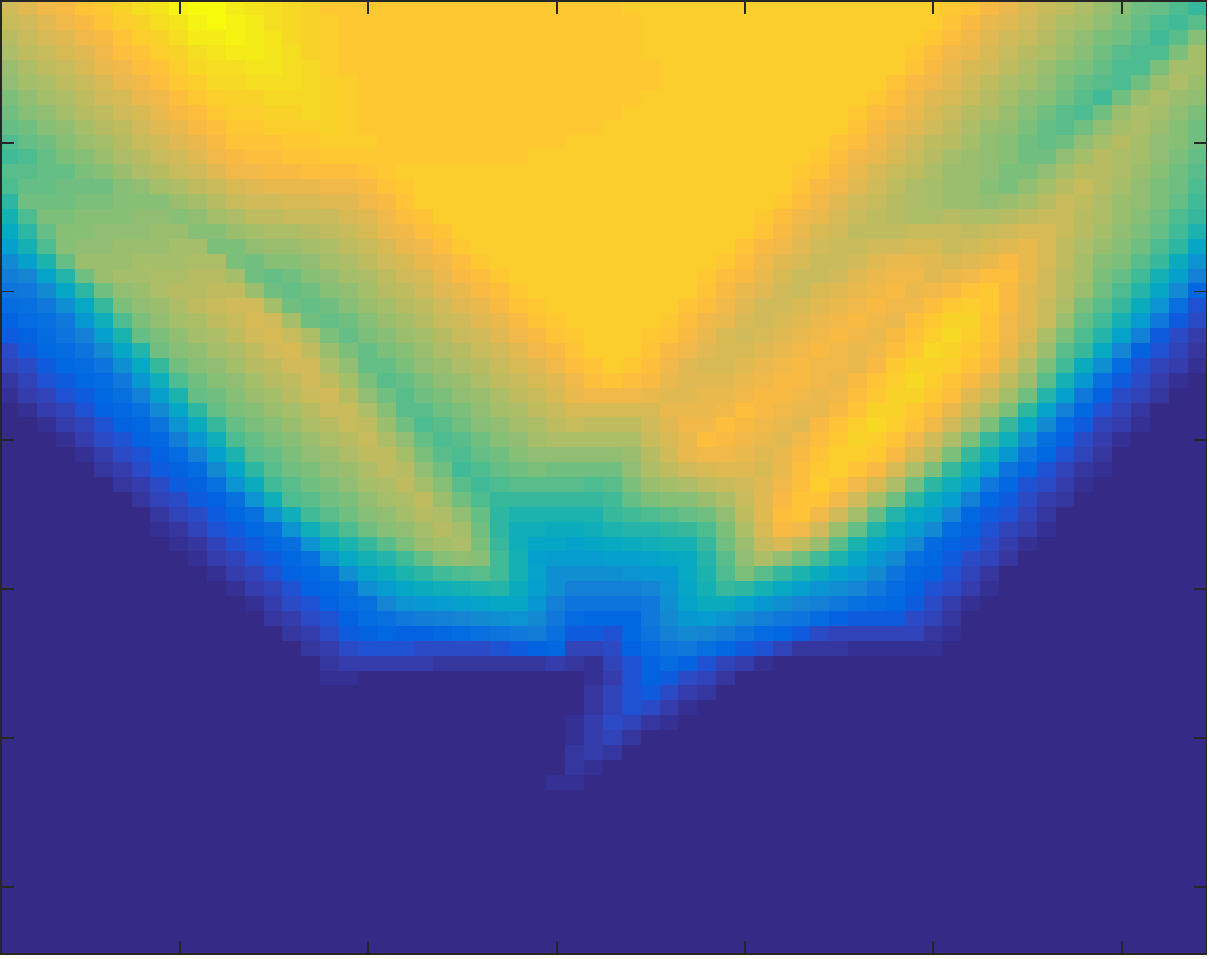} \put(-70,-13){(g)}  \  \includegraphics[width = 0.3\textwidth]{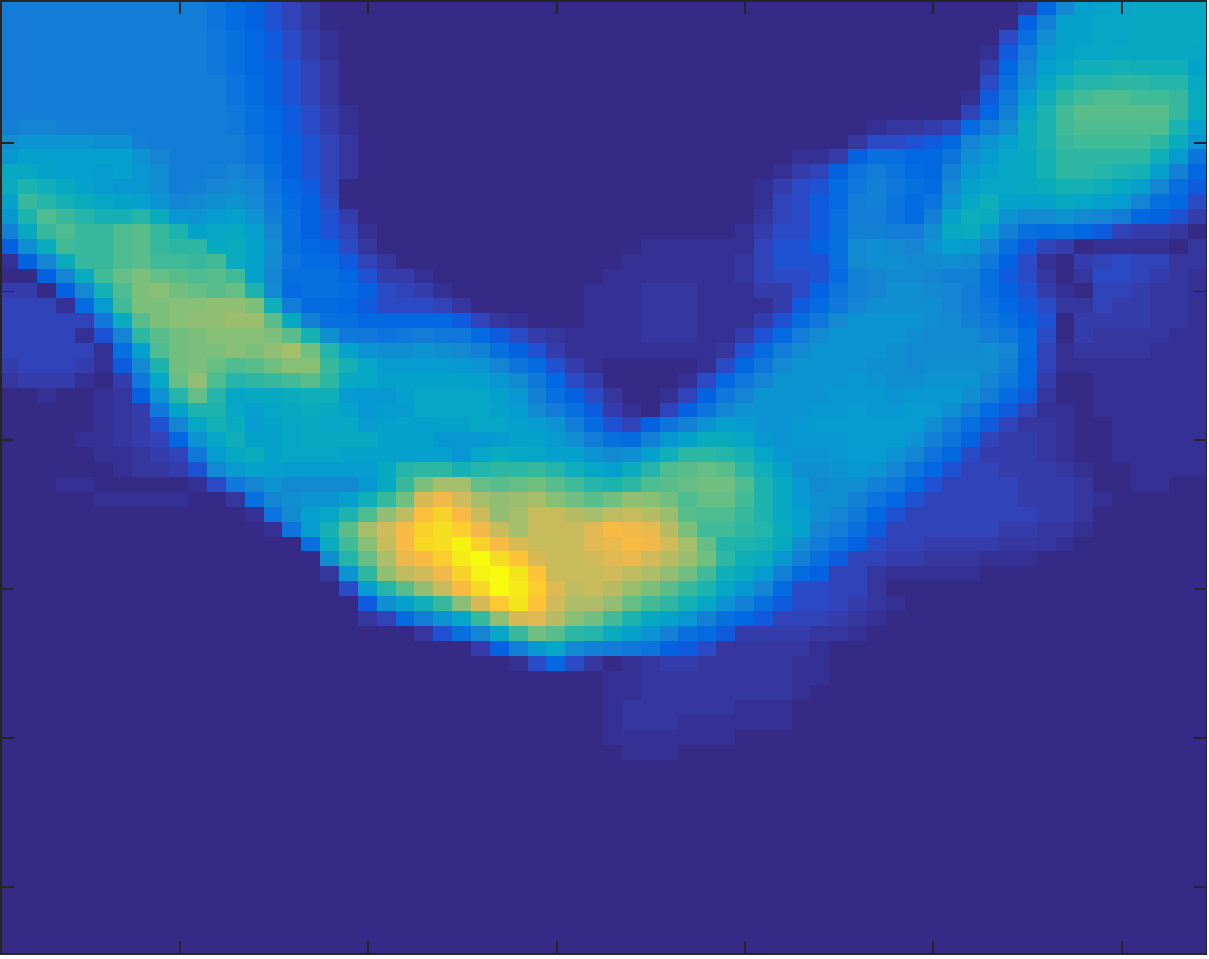} \put(-70,-13){(h)}   \   \includegraphics[width = 0.3\textwidth]{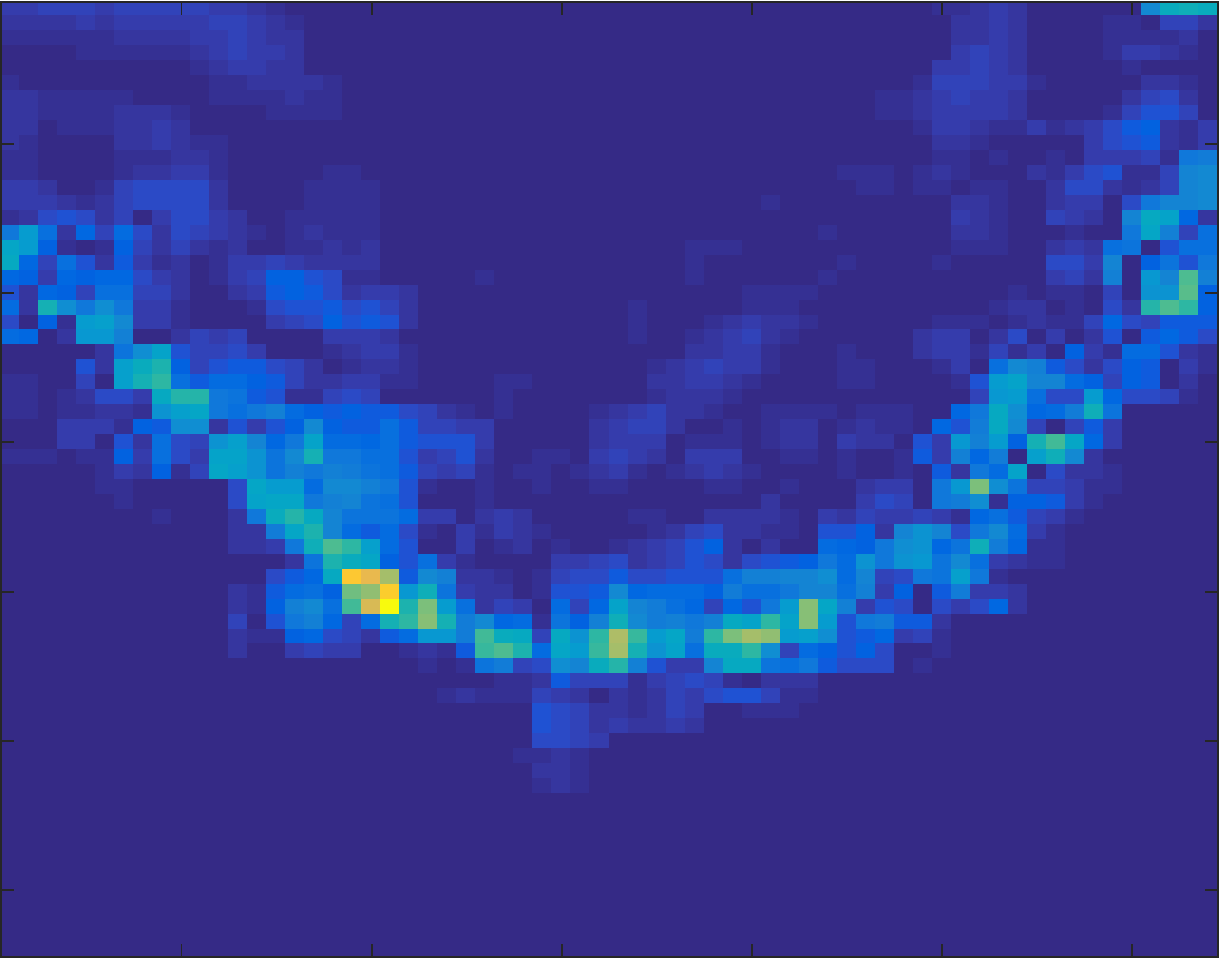} \put(-70,-13){(i)}
    \caption{(a): Function with curvilinear singularity to be approximated by the neural network. (b): Approximation error (vertical axis) as a function of the number of edges (horizontal axis). (c)-(f): Shearlet-like subnetworks. (g): Reconstruction using only the 10 subnetworks whose associated functions have the largest supports. (h): Reconstruction using only subnetworks whose associated functions have medium-sized support. (i): Reconstruction using only subnetworks with associated functions of very small support.}
    \label{fig:Shearlets}
\end{figure}
It is interesting to observe that the trained subnetworks yield $\alpha$-molecules
for $\alpha = 0$ (see Figures \ref{fig:Ridgelets}(c)-(e)). These functions are constant along one direction and vary along another,
hence can be considered part of a ridgelet system, which is, in fact, an optimally sparsifying representation system for line singularities.
Moreover, the orientation of the three learned ridge functions matches that of the original function.

In the second experiment, we draw samples from the function depicted in Figure \ref{fig:Shearlets}(a) below, which exhibits a curvilinear singularity. Figures \ref{fig:Shearlets}(c)-(e) show that the corresponding trained subnetworks resemble 
anisotropic molecules with different scales and of different orientations. We report, without showing the results, that the decay rate of the corresponding approximation error obtained when simply training with different network sizes did not come close to the rate of $M^{-1}$ predicted by our theory.
However, with a slight adaptation one obtains the result of Figure \ref{fig:Shearlets}(b), which demonstrates a decay of 
roughly $M^{-1}$. The specifics of this adaptation are as follows: We first train a large network with $\sim 10000$ edges, again by stochastic gradient descent. Then, the weights in the last layer are optimized using the Lasso \cite{Tib1996Lasso} to obtain a sparse weight vector $c^*$. 
We then pick the $M$ largest coefficients of $c^*$ and compute
the corresponding weighted sum of the associated subnetworks. The resulting approximation error is shown in Figure \ref{fig:Shearlets}(b).
Finally, we investigate whether the approximation characteristics delivered by this procedure are similar to what would be obtained by best $M$-term approximation with standard shearlet systems. 
Recall that shearlet elements at high scales tend to cluster around singularities \cite{GKL06, Kutyniok20111564}. Figures \ref{fig:Shearlets}(g)-(i) depict the corresponding results. Specifically, Figure \ref{fig:Shearlets}(g) shows the weighted sum of those subnetworks that have the largest support. In Figure \ref{fig:Shearlets}(h), we show weighted sums of subnetworks with medium-sized support, and in Figure \ref{fig:Shearlets}(i) we sum up only the subnetworks with the smallest supports. We observe that, indeed, subnetworks of large support approximate the smooth part of the underlying function, whereas the subnetworks associated to small supports resolve the jump singularity.

%------------------------------------------------------------------------------------------------------------------------------
\section*{Acknowledgments}
%------------------------------------------------------------------------------------------------------------------------------

The authors would like to thank J. Bruna, E. Cand\`{e}s, M. Genzel, S. G\"unt\"urk, Y. LeCun, K.-R. M\"uller, H. Rauhut, and F. Voigtl\"ander for interesting discussions, and D. Perekrestenko for very detailed and insightful comments on the manuscript.
G. K. and P. P. are grateful to the Faculty of Mathematics at the University of Vienna for the hospitality and support during their visits. Moreover, G. K. thanks the Department of Mathematics at Stanford University whose support allowed for completion of a portion of this work. G. K. acknowledges  partial  support  by  the  Einstein  Foundation  Berlin,  the  Einstein Center for Mathematics
Berlin (ECMath), the European Commission-Project DEDALE (contract no. 665044) within the H2020 Framework Program, DFG Grant KU 1446/18,
DFG-SPP 1798 Grants KU 1446/21 and KU 1446/23,
and by the DFG Research Center {\sc Matheon} ``Mathematics for Key Technologies''. G. K. and P. P acknowledge support by the DFG Collaborative Research Center TRR 109 ``Discretization in Geometry and Dynamics".

\bibliographystyle{abbrv}
\bibliography{references}

\begin{thebibliography}{10}

\bibitem{Barron1993}
A.~R. Barron.
\newblock Universal approximation bounds for superpositions of a sigmoidal
  function.
\newblock {\em IEEE Trans. Inf. Theory}, 39(3):930--945, 1993.

\bibitem{Barron1994}
A.~R. Barron.
\newblock {Approximation and estimation bounds for artificial neural networks}.
\newblock {\em Mach. Learn.}, 14(1):115--133, 1994.

\bibitem{CandesDiss}
E.~J. Cand\`{e}s.
\newblock {R}idgelets: {T}heory and {A}pplications, 1998.
\newblock {Ph.D.} thesis, {S}tanford {U}niversity.

\bibitem{candes2001ridgelets}
E.~J. Cand\`{e}s.
\newblock Ridgelets and the representation of mutilated {S}obolev functions.
\newblock {\em SIAM J. Math. Anal.}, 33(2):347--368, 2001.

\bibitem{CD02}
E.~J. Cand\`{e}s and D.~L. Donoho.
\newblock New tight frames of curvelets and optimal representations of objects
  with piecewise {C2} singularities.
\newblock {\em Comm. Pure Appl. Math.}, 57:219--266, 2002.

\bibitem{ChuXM1994networksforlocApprox}
C.~K. Chui, X.~Li, and H.~N. Mhaskar.
\newblock Neural networks for localized approximation.
\newblock {\em Math. Comp.}, 63(208):607--623, 1994.

\bibitem{cohen2001tree}
A.~Cohen, W.~Dahmen, I.~Daubechies, and R.~A. DeVore.
\newblock Tree approximation and optimal encoding.
\newblock {\em Appl. Comput. Harmon. Anal.}, 11(2):192--226, 2001.

\bibitem{cohen2016expressive}
N.~Cohen, O.~Sharir, and A.~Shashua.
\newblock On the expressive power of deep learning: A tensor analysis.
\newblock In {\em Conference on Learning Theory}, pages 698--728, 2016.

\bibitem{cohen2016convolutional}
N.~Cohen and A.~Shashua.
\newblock Convolutional rectifier networks as generalized tensor
  decompositions.
\newblock In {\em International Conference on Machine Learning}, pages
  955--963, 2016.

\bibitem{Cybenko1989}
G.~Cybenko.
\newblock {Approximation by superpositions of a sigmoidal function}.
\newblock {\em Mathematics of Control, Signals and Systems}, 2(4):303--314,
  1989.

\bibitem{David2016Go}
{D. Silver}, {A. Huang}, {C. J. Maddison}, {A. Guez}, {L. Sifre}, {G. van den
  Driessche}, {J. Schrittwieser}, {I. Antonoglou}, {V. Panneershelvam}, {M.
  Lanctot}, {S. Dieleman}, {D. Grewe}, {J. Nham}, {N. Kalchbrenner}, {I.
  Sutskever}, {T. Lillicrap}, {M. Leach}, {K. Kavukcuoglu}, {T. Graepel}, and
  {D. Hassabis}.
\newblock {Mastering the game of Go with deep neural networks and tree search}.
\newblock {\em Nature}, 529(7587):484--489, 2016.

\bibitem{Dau92}
I.~Daubechies.
\newblock {\em Ten Lectures on Wavelets}.
\newblock SIAM, 1992.

\bibitem{demanet2007wave}
L.~Demanet and L.~Ying.
\newblock Wave atoms and sparsity of oscillatory patterns.
\newblock {\em Appl. Comput. Harmon. Anal.}, 23(3):368--387, 2007.

\bibitem{DeVore1998nonlinear}
R.~A. DeVore.
\newblock Nonlinear approximation.
\newblock {\em Acta Numerica}, 7:51--150, 1998.

\bibitem{DL93}
R.~A. DeVore and G.~G. Lorentz.
\newblock {\em Constructive Approximation}.
\newblock Springer, 1993.

\bibitem{DeVore1997approxfeedforward}
R.~A. DeVore, K.~Oskolkov, and P.~Petrushev.
\newblock Approximation by feed-forward neural networks.
\newblock {\em Ann. Numer. Math.}, 4:261--287, 1996.

\bibitem{DONOHO1993100}
D.~L. Donoho.
\newblock Unconditional bases are optimal bases for data compression and for
  statistical estimation.
\newblock {\em Appl. Comput. Harmon. Anal.}, 1(1):100 -- 115, 1993.

\bibitem{Don2001Sparse}
D.~L. Donoho.
\newblock Sparse components of images and optimal atomic decompositions.
\newblock {\em Constr. Approx.}, 17(3):353--382, 2001.

\bibitem{Eldan2016PowerofDepth}
R.~Eldan and O.~Shamir.
\newblock The power of depth for feedforward neural networks.
\newblock In {\em Proceedings of the 29th Conference on Learning Theory, {COLT}
  2016, New York, USA, June 23-26, 2016}, pages 907--940, 2016.

\bibitem{ellacott1994aspects}
S.~Ellacott.
\newblock Aspects of the numerical analysis of neural networks.
\newblock {\em Acta Numer.}, 3:145--202, 1994.

\bibitem{Funahashi1989183}
K.-I. Funahashi.
\newblock {On the approximate realization of continuous mappings by neural
  networks}.
\newblock {\em Neural Networks}, 2(3):183--192, 1989.

\bibitem{Goodfellow-et-al-2016}
I.~Goodfellow, Y.~Bengio, and A.~Courville.
\newblock {\em Deep Learning}.
\newblock MIT Press, 2016.
\newblock {http://www.deeplearningbook.org}.

\bibitem{grochenig2013foundations}
K.~Gr{\"o}chenig.
\newblock {\em Foundations of time-frequency analysis}.
\newblock Springer Science \& Business Media, 2013.

\bibitem{grohs2015optimally}
P.~Grohs.
\newblock Optimally sparse data representations.
\newblock In {\em Harmonic and Applied Analysis}, pages 199--248. Springer,
  2015.

\bibitem{GroKKS2016alphaMolecules}
P.~Grohs, S.~Keiper, G.~Kutyniok, and M.~Sch{\"a}fer.
\newblock {$\alpha$}-molecules.
\newblock {\em Appl. Comput. Harmon. Anal.}, 41(1):297--336, 2016.

\bibitem{Grohs2016}
P.~Grohs, S.~Keiper, G.~Kutyniok, and M.~Sch\"{a}fer.
\newblock Cartoon approximation with $\alpha$-curvelets.
\newblock {\em J. Fourier Anal. Appl.}, 22(6):1235--1293, 2016.

\bibitem{GK14}
P.~Grohs and G.~Kutyniok.
\newblock Parabolic molecules.
\newblock {\em Found. Comput. Math.}, 14:299--337, 2014.

\bibitem{GKL06}
K.~Guo, G.~Kutyniok, and D.~Labate.
\newblock Sparse multidimensional representations using anisotropic dilation
  and shear operators.
\newblock In {\em Wavelets and Splines (Athens, GA, 2005)}, pages 189--201.
  Nashboro Press, Nashville, TN, 2006.

\bibitem{he2016deep}
K.~He, X.~Zhang, S.~Ren, and J.~Sun.
\newblock Deep residual learning for image recognition.
\newblock In {\em Proceedings of the IEEE Conference on Computer Vision and
  Pattern Recognition}, pages 770--778, 2016.

\bibitem{Hint2012acoustic}
G.~Hinton, L.~Deng, D.~Yu, G.~E. Dahl, A.~R. Mohamed, N.~Jaitly, A.~Senior,
  V.~Vanhoucke, P.~Nguyen, T.~N. Sainath, and B.~Kingsbury.
\newblock Deep neural networks for acoustic modeling in speech recognition: The
  shared views of four research groups.
\newblock {\em IEEE Signal Process. Mag.}, 29(6):82--97, 2012.

\bibitem{Hornik1991251}
K.~Hornik.
\newblock Approximation capabilities of multilayer feedforward networks.
\newblock {\em Neural Networks}, 4(2):251 -- 257, 1991.

\bibitem{Hornik1989universalApprox}
K.~Hornik, M.~Stinchcombe, and H.~White.
\newblock Multilayer feedforward networks are universal approximators.
\newblock {\em Neural Networks}, 2(5):359--366, 1989.

\bibitem{Krizhevsky2012Imagenet}
A.~Krizhevsky, I.~Sutskever, and G.~E. Hinton.
\newblock Imagenet classification with deep convolutional neural networks.
\newblock In {\em Advances in Neural Information Processing Systems 25}, pages
  1097--1105. Curran Associates, Inc., 2012.

\bibitem{Kutyniok20111564}
G.~Kutyniok and W.-Q. Lim.
\newblock Compactly supported shearlets are optimally sparse.
\newblock {\em Journal of Approximation Theory}, 163(11):1564 -- 1589, 2011.

\bibitem{yann1987modeles}
Y.~LeCun.
\newblock {\em Mod{\`e}les connexionnistes de l'apprentissage}.
\newblock PhD thesis, These de Doctorat, Universit\'e Paris 6, 1987.

\bibitem{LeCun2015DeepLearning}
Y.~LeCun, Y.~Bengio, and G.~Hinton.
\newblock {Deep learning}.
\newblock {\em Nature}, 521(7553):436--444, 2015.

\bibitem{maiorov1999lower}
V.~Maiorov and A.~Pinkus.
\newblock Lower bounds for approximation by {MLP} neural networks.
\newblock {\em Neurocomputing}, 25(1):81--91, 1999.

\bibitem{MP43}
W.~McCulloch and W.~Pitts.
\newblock A logical calculus of ideas immanent in nervous activity.
\newblock {\em Bull. Math. Biophys.}, 5:115--133, 1943.

\bibitem{Mhaskar1995151}
H.~Mhaskar and C.~Micchelli.
\newblock Degree of approximation by neural and translation networks with a
  single hidden layer.
\newblock {\em Adv. Appl. Math.}, 16(2):151--183, 1995.

\bibitem{Mhaskar1993}
H.~N. Mhaskar.
\newblock Approximation properties of a multilayered feedforward artificial
  neural network.
\newblock {\em Advances in Computational Mathematics}, 1(1):61--80, Feb 1993.

\bibitem{Mhaskar1996NNapprox}
H.~N. Mhaskar.
\newblock Neural networks for optimal approximation of smooth and analytic
  functions.
\newblock {\em Neural Comput.}, 8(1):164--177, 1996.

\bibitem{Mhaskar-Micchelli}
H.~N. Mhaskar and C.~Micchelli.
\newblock {Approximation by superposition of sigmoidal and radial basis
  functions}.
\newblock {\em Adv. Appl. Math.}, 13:350--373, 1992.

\bibitem{Mhaskar2016DeepVSShallow}
H.~N. Mhaskar and T.~Poggio.
\newblock {Deep vs. shallow networks: An approximation theory perspective}.
\newblock {\em Analysis and Applications}, 14(06):829--848, 2016.

\bibitem{NguyenThien1999687}
T.~Nguyen-Thien and T.~Tran-Cong.
\newblock Approximation of functions and their derivatives: A neural network
  implementation with applications.
\newblock {\em Appl. Math. Model.}, 23(9):687--704, 1999.

\bibitem{Off2002MetricEntropy}
E.~Ott.
\newblock {\em Chaos in dynamical systems}.
\newblock Cambridge University Press, Cambridge, second edition, 2002.

\bibitem{Voi17}
A.~Pein and F.~Voigtlaender.
\newblock Analysis sparsity versus synthesis sparsity for $\alpha$-shearlets.
\newblock 2017.
\newblock arXiv:1702.03559.

\bibitem{pinkus1999approximation}
A.~Pinkus.
\newblock Approximation theory of the {MLP} model in neural networks.
\newblock {\em Acta Numer.}, 8:143--195, 1999.

\bibitem{Rumelhart1988Backpropagation}
D.~E. Rumelhart, G.~E. Hinton, and R.~J. Williams.
\newblock Learning representations by back-propagating errors.
\newblock {\em Nature}, 323(6088):533--536, Oct. 1986.

\bibitem{ShaCC2015provableAppDNN}
U.~Shaham, A.~Cloninger, and R.~R. Coifman.
\newblock Provable approximation properties for deep neural networks.
\newblock {\em Appl. Comput. Harmon. Anal.}, 44(3):537--557, May 2018.

\bibitem{steinwart2008support}
I.~Steinwart and A.~Christmann.
\newblock {\em Support Vector Machines}.
\newblock Information Science and Statistics. Springer New York, 2008.

\bibitem{Tib1996Lasso}
R.~Tibshirani.
\newblock Regression shrinkage and selection via the {L}asso.
\newblock {\em J. R. Stat. Soc. Series B Stat. Methodol.}, 58(1):267--288,
  1996.

\end{thebibliography}

\end{document}